\newcommand{\W}{\mathbf{W}}
\newcommand{\gH}{\mathbf{H}}
\newcommand{\M}{\mathbf{M}}
\newcommand{\PHI}{\mathbf{\Phi}}
\definecolor{Gray}{gray}{0.9}
\definecolor{mediumpersianblue}{rgb}{0.0, 0.4, 0.65}
\definecolor{citecolor}{RGB}{0, 7, 95}
\definecolor{linkcolor}{RGB}{176, 108, 49}
\newtheorem{theorem}{Theorem}
\newtheorem{assumption}{Assumption}
\newtheorem{lemma}{Lemma}
\title{Federated Learning with Bilateral Curation for Partially Class-Disjoint Data}
\author{%
  Ziqing Fan\textsuperscript{1,2}, Ruipeng Zhang\textsuperscript{1,2}, Jiangchao Yao\textsuperscript{1,2}, Bo Han\textsuperscript{3}, Ya Zhang\textsuperscript{1,2}, Yanfeng Wang\textsuperscript{1,2,\Letter} \\
\textsuperscript{1}Cooperative Medianet Innovation Center, Shanghai Jiao Tong University, \\
\textsuperscript{2}Shanghai AI Laboratory, \textsuperscript{3}Hong Kong Baptist University  \\
  \texttt{\{zqfan\_knight, zhangrp, sunarker\}@sjtu.edu.cn}  \\ \texttt{bhanml@comp.hkbu.edu.hk, \{ya\_zhang, wangyanfeng\}@sjtu.edu.cn} 
}
\begin{document}
\maketitle
\vspace{-10pt}
\begin{abstract}
Partially class-disjoint data~(PCDD), a common yet under-explored data formation where each client contributes \textit{a part of classes} (instead of all classes) of samples, severely challenges the performance of federated algorithms. Without full classes, the local objective will contradict the global objective, yielding the angle collapse problem for locally missing classes and the space waste problem for locally existing classes. As far as we know, none of the existing methods can intrinsically mitigate PCDD challenges to achieve holistic improvement in the bilateral views (both global view and local view) of federated learning. To address this dilemma, we are inspired by the strong generalization of simplex Equiangular Tight Frame~(ETF) on the imbalanced data, and propose a novel approach called FedGELA where the classifier is globally fixed as a simplex ETF while locally adapted to the personal distributions. Globally, FedGELA provides fair and equal discrimination for all classes and avoids inaccurate updates of the classifier, while locally it utilizes the space of locally missing classes for locally existing classes. We conduct extensive experiments on a range of datasets to demonstrate that our FedGELA achieves promising performance~(averaged improvement of 3.9\% to FedAvg and 1.5\% to best baselines) and provide both local and global convergence guarantees. Source code
is available at:~\href{https://github.com/MediaBrain-SJTU/FedGELA.git}{https://github.com/MediaBrain-SJTU/FedGELA}.
\end{abstract}

\section{Introduction}
Partially class-disjoint data (PCDD)~\citep{fedrs,fullclass,noniid2} refers to an emerging situation in federated learning~\citep{fedavg, fl_intro1, fl_intro2,ye1,zhang1} where each client only possesses information on a subset of categories, but all clients in the federation provide the information on the whole categories. For instance, in landmark detection~\cite{landmark} for thousands of categories with data locally preserved, most contributors only have a \textit{subset} of categories of landmark photos where they live or traveled before; and in the diagnosis of Thyroid diseases, due to regional diversity different hospitals may have shared and distinct Thyroid diseases ~\citep{rational1}. 
It is usually difficult for each party to acquire the full classes of samples, as the participants may be lack of domain expertise or limited by demographic discrepancy. Therefore, how to efficiently handle the \emph{partially class-disjoint data} is a critical (yet under-explored) problem in real-world federated learning applications for the pursuit of personal and generic interests.

Prevalent studies mainly focus on the general heterogeneity without specially considering the PCDD challenges:
generic federated leaning~(G-FL) algorithms adopt a uniform treatment of all classes and mitigate personal differences by imposing constraints on local training~\citep{moon,fedprox}, modifying logits~\citep{fedlc,fedrs} adjusting the weights of submitted gradients~\citep{fednova} or generating synthetic data~\citep{fedgen}; in contrast, personalized federated learning~(P-FL) algorithms place relatively less emphasis on locally missing classes and selectively share either partial network parameters~\citep{fedrep, fedper} or class prototypes~\citep{fedproto} to minimize the impact of personal characteristics, thereby separating the two topics. Those methods might directly or indirectly help mitigate the data shifts caused by PCDD, however, as far as we know, none of the existing works can mitigate the PCDD challenges to achieve holistic improvement in the bilateral
views~(global and local views) of federated learning. Please refer to Table~\ref{tab:related} for a comprehensive comparison among a range of FL methods from different aspects.

Without full classes, the local objective will contradict the global objective, yielding the angle collapse for locally missing classes and the waste of space for locally existing classes. Ideally, as shown in Figure~\ref{fig:intro}(a), global features and their corresponding classifier vectors shall maintain a proper structure to pursue the best separation of all classes. However, the angles of locally missing classes' classifier vectors will collapse, when trained on each client with partially class-disjoint data, as depicted in Figure~\ref{fig:intro}(b),~\ref{fig:intro}(c). FedRS~\citep{fedrs} notices the degenerated updates of the classifier and pursues the same symmetrical structure in the local by restricting logits of missing classes. Other traditional G-FL algorithms indirectly restrict the classifier by constraining logits, features, or model weights, which may also make effects on PCDD. However, they cause another problem: space waste for personal tasks. As shown in Figure~\ref{fig:intro}(d), restricting local structure will waste feature space and limit the training of the local model on existing classes. P-FL algorithms utilize the wasted space by selectively sharing part of models but exacerbate the angle collapse of classifier vectors. Recent FedRod~\citep{fedrod} attempts to bridge the gap between P-FL and G-FL by introducing a two-head framework with logit adjustment in the G-head, but still cannot address the angle collapse caused by PCDD.
\begin{figure}[t]

\centering
\includegraphics[width=0.8\textwidth]{ 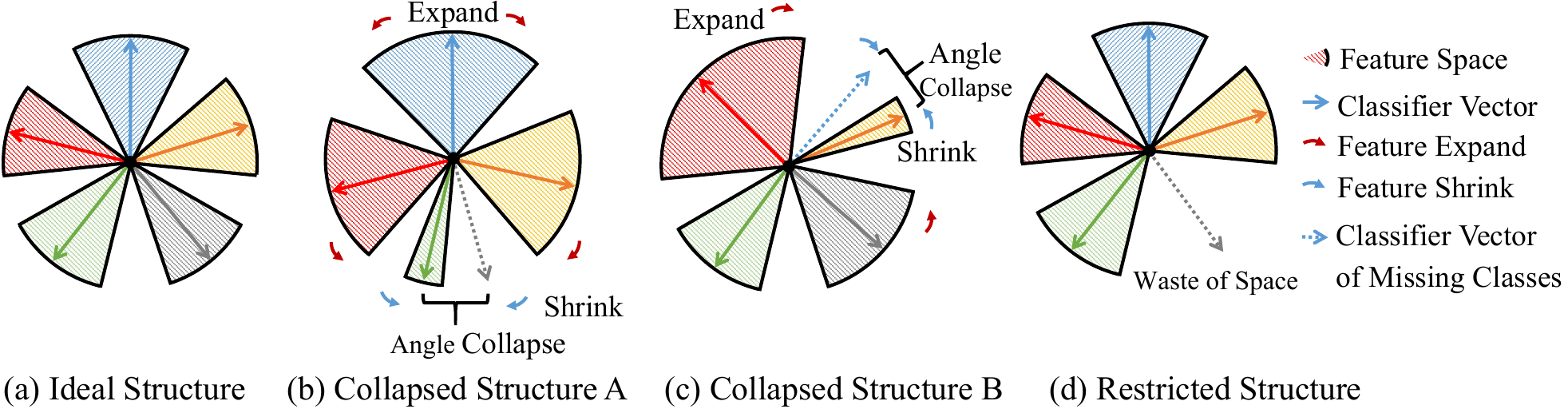}
\vspace{-5pt}

\caption{Illustration of feature spaces and classifier vectors trained on the global dataset, two partially class-disjoint datasets (A and B), and restricted by federated algorithms.  (a) is trained on the globally balanced dataset with full classes. (b) and (c) are trained on datasets A and B, respectively, which suffer from different patterns of classifier angle collapse problems. (d) is averaged in the server or constrained by some federated algorithms.}
\vspace{-5pt}
\label{fig:intro}
\end{figure}

To tackle the PCDD dilemma from both P-FL and G-FL perspectives, we are inspired by a promising classifier structure, namely \emph{simplex equiangular tight frame} (ETF)~\citep{etf1,etf2,minority}, which provides each class the same classification angle and generalizes well on imbalanced data. Motivated by its merits, we propose a novel approach, called \textbf{FedGELA}, in which the classifier is \textbf{Globally} fixed as a simplex \textbf{ETF} while \textbf{Locally} \textbf{Adapted} to personal tasks. In the global view, FedGELA merges class features and their corresponding classifier vectors, which converge to ETF. In the local view, it provides existing major classes with larger feature spaces and encourages to utilize the spaces wasted by locally missing classes. With such a bilateral curation, we can explicitly alleviate the impact caused by PCDD. In a nutshell, our contributions can be summarized as the following three points:
\begin{itemize}[leftmargin=15pt]
\vspace{-7pt}
	\item We study a practical yet under-explored data formation in real-world applications of federated learning, termed as partially class-disjoint data~(PCDD), and identify the angle collapse and space waste challenges that cannot be efficiently solved by existing prevalent methods~(Sec.~\ref{sec:motivation}).
 \vspace{-3pt}
	\item We propose a novel method called FedGELA that classifier is globally fixed as a symmetrical structure ETF while locally adapted by personal distribution~(Sec.~\ref{sec:fedgela}), and theoretically show the local and global convergence analysis for PCDD with the experimental verification~(Sec.~\ref{sec:convergence}). 
     \vspace{-3pt}
	\item We conduct a range of experiments on multiple benchmark datasets under the PCDD case and a real-world dataset to demonstrate the bilateral advantages of FedGELA over the state-of-the-art methods from multiple views like the larger scale of clients and straggler situations~(Sec.~\ref{sec:expmain}). We also provide further analysis like classification angles during training and ablation study.~(Sec.~\ref{sec:further}).
 \vspace{-5pt}
\end{itemize}

\vspace{-5pt}
\section{Related Work}
\vspace{-3pt}
\label{relatedwork}
\subsection{Partially Class-Disjoint Data and Federated Learning algorithms}
\vspace{-3pt}
Partially class-disjoint data is one common formation among clients that can significantly impede the convergence, performance, and efficiency of algorithms in FL~\citep{noniid2}. It belongs to the data heterogeneity case, but does have a very unique characteristic different from the ordinary heterogeneity problem. That is, if only each client only has a subset of classes, it does not share the optimal Bayes classifier with the global model that considers all classes on the server side. Recently, FedRS~\citep{fedrs} has recognized the PCDD dilemma and directly mitigate the angle collapse issue by constraining the logits of missing classes. FedProx~\citep{fedprox} also can lessen the collapse by constraining local model weights to stay close to the global model. Other G-FL algorithms try to address data heterogeneity from a distinct perspective. MOON~\citep{moon} and FedGen~\citep{fedgen} utilizes contrastive learning and generative learning to restrict local representations. And FedLC~\citep{fedlc} introduces logit calibration to adjust the logits of the local model to match those of the global model, which might indirectly alleviate the angle collapse in the local. However, they all try to restrict local structure as global, resulting in the waste space for personal tasks shown in Figure~\ref{fig:intro}(d). P-FL algorithms try to utilize the wasted space by encouraging the angle collapse of the local classifier. FedRep~\citep{fedrep} only shares feature extractors among clients and FedProto~\citep{fedproto} only submits class prototypes to save communication costs and align the feature spaces. In FedBABU~\citep{fedbabu}, the classifier is randomly initialized and fixed during federated training while fine-tuned for personalization during the evaluation. However, they all sacrifice the generic performance on all classes. FedRod~\citep{fedrod} attempts to bridge this gap by introducing a framework with two heads and employing logit adjustment in the global head to estimate generic distribution but cannot address angle collapse. In Table~\ref{tab:related}, we categorize these methods by targets (P-FL or G-FL), skews (feature, logit, or model weight), and whether they directly mitigate the angle collapse of local classifier or saving personal spaces for personal spaces. It is evident that none of these methods, except ours, can handle the PCDD problem in both P-FL and G-FL. Furthermore, FedGELA is the only method that can directly achieve improvements from all views.

\begin{table}[t]
\setlength{\tabcolsep}{7pt}
\caption{Key differences between SOTA methods and our FedGELA categorized by targets~(P-FL or G-FL), techniques~(improve from the views of features, logits or model), and whether directly mitigate angle collapse of classifier vectors or save locally wasted feature spaces caused by PCDD.}\label{tab:related}
\vspace{-4pt}
\small
\centering
\renewcommand\arraystretch{0.05}
\scalebox{0.9}{
\begin{tabular}{c | c | c c c c c}
\toprule[2pt]  \textbf {Target} & \textbf {Research work} & \textbf {Feature View} & \textbf {Logit View} & \textbf {Model View}& \textbf {Mitigate Collapse}& \textbf {Save Space} \\ \midrule
\multirow{4}*[-3.0ex]{\centering G-FL} &  \text { FedProx } & -&- & \checkmark& \checkmark& - \\
\cmidrule{2-7} & \text { MOON } & \checkmark&  - &- & - & -\\ 
\cmidrule{2-7} & \text { FedRS }&  - &  \checkmark&  - & \checkmark & -\\
\cmidrule{2-7} & \text { FedGen }&  \checkmark &  -&  - & \checkmark & -\\
\cmidrule{2-7} & \text { FedLC } & - &\checkmark& - & - & -\\
\midrule 
\multirow{3}{*}[-2.0ex]{P-FL} & \text { FedRep } &\checkmark & -& \checkmark & -& \checkmark\\
\cmidrule{2-7} & \text { FedProto } & \checkmark& -&  \checkmark & -& \checkmark \\
\cmidrule{2-7} & \text { FedBABU } &  \checkmark& -& \checkmark& -& \checkmark\\
\midrule 
\multirow{2}{*}{G\&P-FL} & \text { FedRod } & -& \checkmark&\checkmark & -& \checkmark \\
\cmidrule{2-7} & \text {FedGELA(ours)} & \checkmark& \checkmark&\checkmark & \checkmark& \checkmark \\
\bottomrule[2pt]
\end{tabular}
}
\vspace{-11pt}
\end{table}

\vspace{-10pt}
\subsection{Simplex Equiangular Tight Frame}

The simplex equiangular tight frame~(ETF) is a phenomenon observed in neural collapse~\citep{etf1}, which occurs in the terminal phase of a well-trained model on a balanced dataset. It is shown that the last-layer features of the model converge to within-class means, and all within-class means and their corresponding classifier vectors converge to a symmetrical structure.
To analyze this phenomenon, some studies simplify deep neural networks as last-layer features and classifiers with proper constraints~(layer-peeled model)~\citep{lpm1,lpm2,lpm3,minority} and prove that ETF emerges under the cross-entropy loss. 
However, when the dataset is imbalanced, the symmetrical structure of ETF will collapse~\citep{minority}. 
Some studies try to obtain the symmetrical feature and the classifier structure on the imbalanced datasets by fixing the classifier as ETF~\citep{etf2,lpm2}. Inspired by this, we propose a novel method called FedGELA that bilaterally curates the classifier to leverage ETF or its variants. See Appendix~\ref{ap:A} for more details about ETF. 

\section{Method} \label{sec:method}
\subsection{Preliminaries}
\paragraph{ETF under LPM.}
A typical L-layer DNN parameterized by $\W$ can be divided into the feature backbone parameterized by $\W^{-L}$ and the classifier parameterized by $\W^{L}$. From the view of layer-peeled model~(LPM)~\citep{lpm1,lpm2,lpm3,minority}, training $\W$ with constraints on the weights can be considered as training the C-class classifier $\W^{L}=\{\W_1^L,...,\W_C^L\}$ and features $\gH=\{h^1,...,h^n\}$ of all $n$ samples output by last layer of the backbone with constraints $E_W$ and $E_H$ on them respectively. On the balanced data, any solutions to this model form a simplex equiangular tight frame~(ETF) that
all last layer features $h_c^{i,*}$ and corresponding classifier $\W_c^{L,*}$ of all classes converge as:
{\small
\begin{equation}\label{eq:etf}
\frac{h_c^{i, *}}{\sqrt{E_H}}=\frac{\W_c^{L,*}}{\sqrt{E_W}}=m_c^*,
\end{equation}}
where $m_c^*$ forms the ETF defined as $\M=\sqrt{\frac{C}{C-1}} \mathbf{U}\left(\mathbf{I}_C-\frac{1}{C} \mathbf{1}_C \mathbf{1}_C^T\right).$

Here $\M=\left[m^*_1, \cdots, m^*_C\right] \in \mathbb{R}^{d \times C}, \mathbf{U} \in \mathbb{R}^{d \times C}$ allows a rotation and satisfies $\mathbf{U}^T \mathbf{U}=\mathbf{I}_C$ and $\mathbf{1}_C$ is an all-ones vector. ETF is an optimal classifier and feature structure in the balanced case of LPM. 

\paragraph{FedAvg.} On the view of LPM, given N clients and each with $n_k$ samples, the vanilla federated learning via FedAvg consists of four steps~\citep{fedavg}: 1) In round $t$, the server broadcasts the global model $\W^t=\{\gH^t,\W^{L,t}\}$ to clients that participate in the training~(Note that here $\gH$ is actually the global backbone $\W^{-L,t}$ instead of real features); 2) Each local client receives the model and trains it on the personal dataset. After $E$ epochs, we acquire a new local model $\W^{t}_k$;
3) The updated models are collected to the server as $\{\W^{t}_{1},\W^{t}_{2},\dots,\W^{t}_{N}\}$; 4) The server averages local models to acquire a new global model as $\W^{t+1}=\sum_{k=1}^{N}{p_k \W^t_k}$, where $p_k=n_k\slash\sum_{k'=1}^N n_{k'}$. When the pre-defined maximal round $T$ reaches, we will have the final optimized global model $\W^T$.

\subsection{Contradiction and Motivation}\label{sec:motivation}
\textbf{Contradiction.}~~~In G-FL, the ideal global objective under LPM of federated learning is described as:
{\small$$
\min _{\gH,\W^L} \sum_{k=1}^{N}p_k\frac{1}{n_k} \sum_{c\in C_k} \sum_{i=1}^{n_{k,c}}\mathcal{L}_{CE}\left(h^i_{k,c}, \W^L\right).
$$}
Assuming global distribution is balanced among classes, no matter whether local datasets have full or partial classes, the global objective with constraints on weights can be simplified as:
{\small
\begin{equation}\label{eq:g_objective}
\begin{aligned}
\min _{\gH,\W^L}\frac{1}{n} \sum_{c=1}^C \sum_{i=1}^{n_c}\mathcal{L}_{CE}\left(h^i_{c}, \W^L\right), \text { s.t. } \left\|\W_c^L\right\|^2 \leqslant E_W, \ \left\|h^i_c \right\|^2 \leqslant E_H. 
\end{aligned}
\end{equation}
}
Similarly, the local objective of k-th client with a set of classes $C_k$ can be described as:
{\small
\begin{equation}\label{eq:l_objective}
\begin{aligned}
\min _{\gH_k,\W^L_k} \frac{1}{n_k} \sum_{c\in C_k} \sum_{i=1}^{n_{k,c}}\mathcal{L}_{CE}\left(h^i_{k,c}, \W_k^L\right), \text { s.t. } \left\|\W_{k,c}^L\right\|^2 \leqslant E_W, \ \left\|h_{k,c}^i \right\|^2 \leqslant E_H. 
\end{aligned}
\end{equation}
}

\begin{wrapfigure}{r}[0cm]{0pt}
\includegraphics[width=0.36\textwidth]
{ 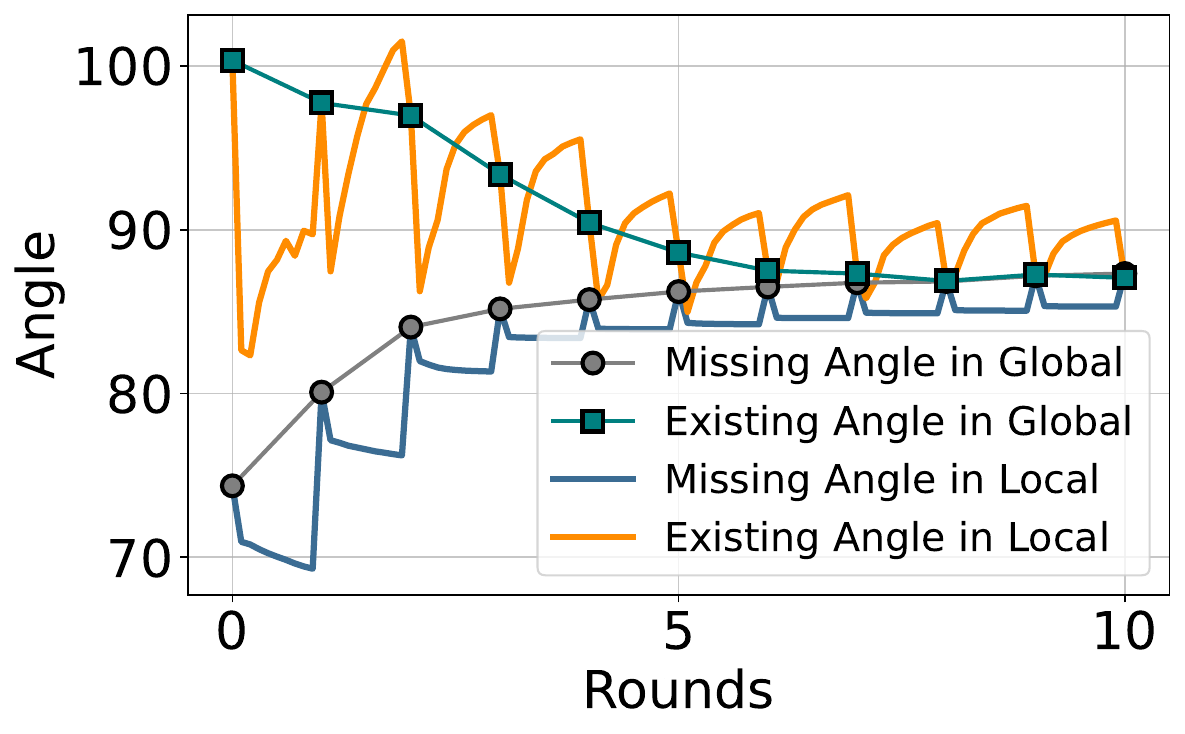} 
\caption{Averaged angles of classifier vectors between locally existing classes~(existing angle) and between locally missing classes~(missing angle) on CIFAR10~(Dir~($\beta=0.1$)) in local client and aggregated in global server (local epoch is 10). 
In global, ``existing'' angle and ``missing'' angle converge to similar values while in the local, ``existing'' angle expands but ``missing'' angle shrinks.}\label{fig:motivation}
\vspace{-20pt}
\end{wrapfigure}

When PCDD exists~($C_k \neq C$), we can see the contradiction between local and global objectives, which respectively forms two structures, shown in Figure~\ref{fig:method}(a) and Figure~\ref{fig:method}(b). After aggregated in server or constrained by some FL methods, the structure in the local is restricted to meet the global structure, causing space waste for personal tasks shown in Figure~\ref{fig:intro}(d).

\textbf{Motivation.}~~~To verify the contradiction and related feature and classifier structures, we split CIFAR10 into 10 clients and perform FedAvg on it with Dirichlet Distribution~(Dir~($\beta=0.1$)). 
As illustrated in Figure~\ref{fig:motivation}, the angle difference between existing classes and between missing classes becomes smaller and converges to a similar value in the global model. However, in the local training, angles between existing classes become larger while angles between missing classes become smaller, which indicates the contradiction. With this observation, to bridge the gap between Eq~\eqref{eq:l_objective} and Eq~\eqref{eq:g_objective} under PCDD, we need to construct the symmetrical and uniform classifier angles for all classes while encouraging local clients to expand existing classes' feature space. Therefore, we propose our method \textbf{FedGELA} that classifier can be \textbf{Globally} fixed as \textbf{ETF} but \textbf{Locally} \textbf{Adapted} based on the local distribution matrix to utilize the wasted space for the existing classes. 

\subsection{FedGELA}\label{sec:fedgela}
\textbf{Global ETF. }~~~Given the global aim of achieving an unbiased classifier that treats all classes equally and provides them with the same discrimination and classifier angles, we curate the global model's classifier as a randomly initialized simplex ETF with scaling $\sqrt{E_w}$ at the start of federated training:
$$
\W^{L}=\sqrt{E_w}\M. 
$$
Then the ETF is distributed to all clients to replace their local classifiers. In Theorem~\ref{thm:global}, we prove in federated training under some basic assumptions, by fixing the classifier as a randomly simplex ETF with scaling $\sqrt{E_W}$ and constraints $E_H$ on the last layer features, features output by last layer of backbone and their within class means will converge to the ETF similar to Eq~\eqref{eq:etf}, which meets the requirement of global tasks. 

\textbf{Local Adaptation. }~~~However, when PCDD exists in the local clients, naively combining ETF with FL does not meet the requirement of P-FL as analyzed in Eq~\eqref{eq:g_objective} and Eq~\eqref{eq:l_objective}. To utilize the wasted space for locally missing classes, in the training stage, we curate the length of ETF received from the server based on the local distribution as below:
{\small
\begin{equation}\label{eq:adjust}
\W_k^L={\PHI_k} \W^L=\PHI_k\sqrt{E_w}\M,
\end{equation}}
where $\PHI_{k}$ is the distribution matrix of k-th client. Regarding the selection of $\boldsymbol{\Phi}_k$, it should satisfy a basic rule for federated learning, wherein the aggregation of local classifiers aligns with the global classifier, thereby ensuring the validity of theoretical analyses from both global and local perspectives. Moreover, it is highly preferable for the selection process to avoid introducing any additional privacy leakage risks. To meet the requirement that averaged classifier should be standard ETF: $\W^L=\sum_{k=1}^{N}p_k \W^L_k$ in the globally balanced case, its row vectors are all one's vector multiple statistical values of personal distribution:$(\PHI_{k}^T)_c= \frac{n_{k,c}}{n_k\gamma} \mathbf{1}$ ($\gamma$ is a constant, and $n_{k,c}$ and $n_k$ are the c-th class sample number and total sample number of the k-th client) respectively. We set $\gamma$ to $\frac{1}{|C|}$. Finally, the local objective from Eq.~\eqref{eq:l_objective} is adapted as:
{\small
\begin{equation}\label{eq:loss}
\begin{aligned}
\min_{\gH_k} \quad & \frac{1}{n_k} \sum_{c=1}^C \sum_{i=1}^{n_{k,c}}-\log \frac{ \exp (  \PHI_{k,c}{\W^L_{c}}^T h_{k,c}^i)}{\sum_{c' \in C_k}  \exp ( \PHI_{k,c'}{\W^L_{c'}}^Th_{k,c}^i)},\\
\text { s.t. } \quad &\|h^i\|^2 \leqslant E_H, \forall 1 \leqslant i \leqslant n_k.
\end{aligned}
\end{equation}}

\begin{wrapfigure}{r}[0cm]{0pt}
\begin{minipage}{.5\linewidth}
\vspace{-25pt}
\begin{algorithm}[H]
    {\small
    \caption{FedGELA}
    \label{alg:fedgela}
    \textbf{Input}:$(N, K, n_k, c_k, \gH^0, \M, E_W, E_H, T, \eta, E)$
    \begin{algorithmic}
    \vspace{-10pt}
    \STATE \textbf{Parallelly for all clients: } \colorbox{green!20}
    {$\W^L_k\leftarrow \PHI_k\sqrt{E_W}\M$.}
    \FOR{$t = 0,1,\dots,T-1$}
        \STATE \colorbox{gray!20}{$\rhd$ on the server side}
        \STATE $\gH^t \leftarrow \sum_{k=1}^{K}{p^t_kH^{t-1}_{k}}$.
        \STATE sample K clients from all N clients.
        \STATE \colorbox{gray!20}{$\rhd$ on the client side}
        \ONCLIENT{$\forall{k \in K}$}
            \STATE receive $\gH^t$ from server, $\gH_k^t \leftarrow \gH^t$.
            \FOR{$\tau=0,1,...,E-1$} 
                \STATE sample a mini-batch $b_k^{tE+\tau}$ in local data.
            \STATE\colorbox{green!20}{$H_k^{t} \leftarrow H_{k}^t - \eta \nabla F_k(b_{k}^t,\PHI_k W_g^L;\gH_{k}^t)$} 
            \ENDFOR
            \STATE submit $\gH^{t}_{k}$ to the server.
        \ENDON
    \ENDFOR
    
    \end{algorithmic}
\colorbox{green!20}{\textbf{Output}:$(\gH^T,W_g^L)$ and $(\gH^T_k,\PHI_{k}W_g^L) $.}
}
\end{algorithm}
\end{minipage}
\end{wrapfigure}
\textbf{Total Framework. }~~~After introducing two key parts of FedGELA (Global ETF and Local Adaptation), we describe the total framework of FedGELA. As illustrated and highlighted in Algorithm~\ref{alg:fedgela} (refer to Appendix~\ref{ap:D} for the workflow figure), at the initializing stage, the server randomly generates an ETF as the global classifier and sends it to all clients while local clients adjust it based on the personal distribution matrix as Eq~\eqref{eq:adjust}. At the training stage, local clients receive global backbones and train with adapted ETF in parallel. After $E$ epochs, all clients submit personal backbones to the server. In the server, personal backbones are received and aggregated to a generic backbone, which is broadcast to all clients participating in the next round. At the inference stage, on the client side, we obtain a generic backbone with standard ETF to handle the world data while on the client side, a personal backbone with adapted ETF to handle the personal data.

\vspace{-5pt}
\section{Theoretical Analysis}\label{sec:analysis}
\vspace{-10pt}

In this part, we first primarily introduce some notations and basic assumptions in Sec.~\ref{sec:assumption} and then present the convergence guarantees of both local models and the global model under the PCDD with the proper empirical justification and discussion in Sec.~\ref{sec:convergence}. (Please refer to Appendix~\ref{ap:B} for entire proofs and Appendix~\ref{ap:D} for details on justification experiments.)

\begin{figure}[ht]
\centering  
\subfigure[Global convergence verification.]{
\includegraphics[width=0.315\textwidth]{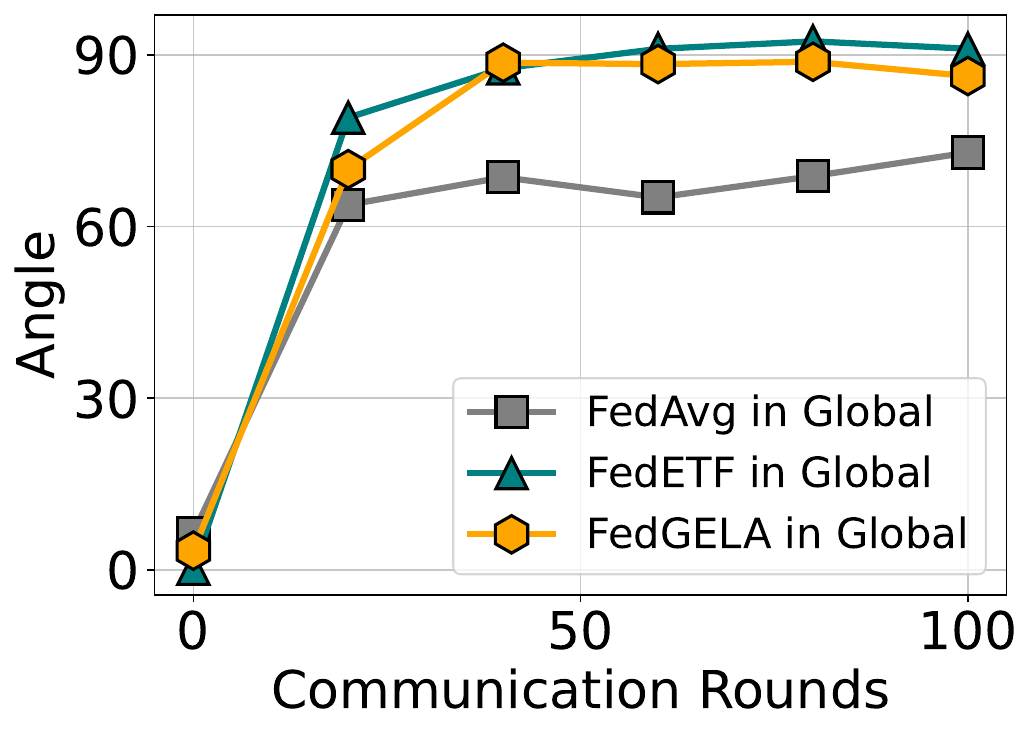}} \hspace{10pt}
\subfigure[Local convergence verification.]{
\includegraphics[width=0.315\textwidth]{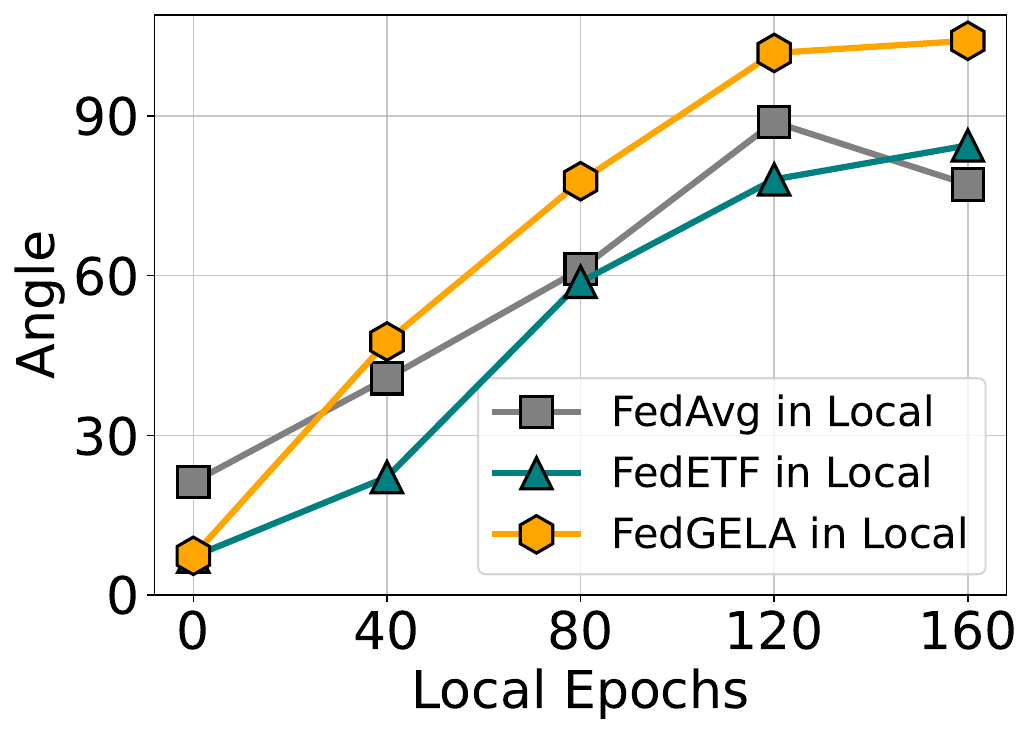}}\hspace{15pt}
\subfigure[Adapted structure.]{
\includegraphics[width=0.19\textwidth]{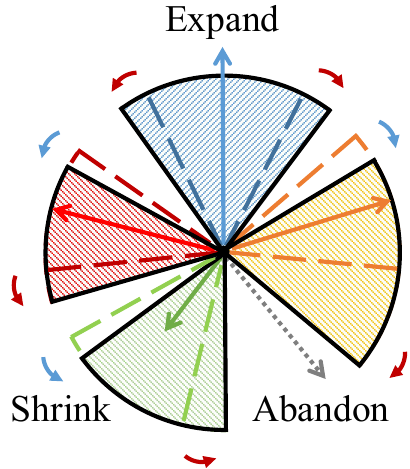}}
\vspace{-7pt}
\caption{Illustration of local and global convergence verification together with the effect of $\PHI$. (a) and (b) are the results of averaged angle between all class means and between locally existing class means in FedAvg, FedGE, and FedGELA on CIFAR10 under 50 clients and Dir~($\beta=0.2$). (c) is the illustration of how local adaptation utilizes the wasted space of missing classes for existing classes.}
\label{fig:method}
\vspace{-12pt}
\end{figure}

\subsection{Notations}\label{sec:assumption}
We use $t$ and $T$ to denote a curtain round and pre-defined maximum round after aggregation in federated training, $tE$ to denote the state that just finishing local training before aggregation in round 
$t$, and $tE+\tau$ to denote $\tau$-th local iteration in round $t$ and $0\leq \tau \leq E-1$. The convergence follows some common assumptions in previous FL studies and helpful math results~\citep{fedproto,scaffold,them_noniid,theo_jianyuwang,stich,assumption1,assumption2,assumption3,assumption4,lemma4} including smoothness, convexity on loss function~$F_1, F_2,\cdots, F_N$ of all clients, bounded norm and variance of stochastic gradients on their gradient functions~$\nabla F_1, \nabla F_2,\cdots, \nabla F_N$ and heterogeneity~$\Gamma_1$ reflected as the distance between local optimum $\W_k^*$ and global optimum $\W^*$. Please refer to the concrete descriptions of those assumptions in Appendix~\ref{ap:B}. Besides, in Appendix~\ref{ap:B}, we additionally provide a convergence guarantee without a bounded norm of stochastic gradients, as some existing works~\citep{contradiction1,contradiction2} point out the contradiction to the strongly convex. 

\subsection{Convergence analysis}\label{sec:convergence}
Here we provide the global and local convergence guarantee of our FedGELA compared with FedAvg and FedGE~(FedAvg with only the Globally Fixed ETF) in Theorem~\ref{thm:global} and Theorem~\ref{thm:local}. To better explain the effectiveness of our FedGELA in local and global tasks, we 
record the averaged angle between all class means in global and existing class means in local as shown in Figure~\ref{fig:method}(a) and Figure~\ref{fig:method}(b). Please refer to Appendix~\ref{ap:B} for details on the proof and justification of theorems.

\begin{theorem}[Global Convergence] \label{thm:global}
If $F_1,...,F_N$ are all L-smooth, $\mu$-strongly convex, and the variance and norm of $\nabla F_1,...,\nabla F_N$ are bounded by $\sigma$ and $G$. Choose $\kappa=L / \mu$ and $\gamma=\max\{8\kappa, E\}$, for all classes $c$ and sample $i$, expected global representation by cross-entropy loss will converge to:
{\small
$$
\mathbb{E} \left[\log \frac{(\W^{L,*})^T h^{i,*}_{c}}{(\W_g^L)^T h^{i}_{c}} \right ] \leq \frac{\kappa}{\gamma+T-1} \left (  \frac{2B}{\mu} + \frac{\mu \gamma} {2} \mathbb{E} || \W^1 - \W^* ||^2 \right),
$$
}
where in FedGELA, $B = \sum_{k=1}^N (p_k^2 \sigma^2 + p_k ||\PHI_k\W^L - \W^L|| ) + 6 L \Gamma_1 + 8(E-1)^2G^2$. Since $\W^{L}=\W^{L,*}$ and $(\W^{L,*})^T h^{i,*}_{c_i} \geq \mathbb{E}[(\W^L)^T h^{i}_{c_i}]$, $h^{i}_{c_i}$ will converge to $h^{i,*}_{c_i}$.
\end{theorem}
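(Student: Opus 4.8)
The plan is to cast FedGELA's global updates as a perturbed instance of FedAvg on the backbone $\gH$, with the classifier frozen at $\W^L = \sqrt{E_W}\M$, and then invoke the standard FedAvg convergence machinery (\`a la Li et al.) under $L$-smoothness, $\mu$-strong convexity, and the bounded-variance/bounded-norm assumptions. First I would fix the classifier and write each client's per-round drift between its adapted local objective (which uses $\PHI_k\W^L$) and the ``ideal'' global objective (which uses $\W^L$): this introduces an extra bias term in the one-step descent inequality, and because $(\PHI_k^T)_c = \frac{n_{k,c}}{n_k\gamma}\mathbf{1}$, that bias is controlled by $\|\PHI_k\W^L - \W^L\|$, which is exactly the term appearing inside $B$. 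Aggregating over clients with weights $p_k$ and accounting for the usual $E$-step local drift ($8(E-1)^2 G^2$), the variance reduction ($\sum_k p_k^2\sigma^2$), and the heterogeneity gap ($6L\Gamma_1$) reproduces the stated constant $B$.

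The second step is to run the standard recursion: one derives $\mathbb{E}\|\W^{t+1}-\W^*\|^2 \le (1-\mu\eta_t)\mathbb{E}\|\W^t-\W^*\|^2 + \eta_t^2 B$ with a diminishing stepsize $\eta_t = \frac{2}{\mu(\gamma+t)}$ and $\gamma = \max\{8\kappa,E\}$, and then an induction on $t$ gives $\mathbb{E}\|\W^t - \W^*\|^2 \le \frac{1}{\gamma+t-1}\left(\frac{4B}{\mu^2} + \gamma\,\mathbb{E}\|\W^1-\W^*\|^2\right)$. Feeding this into $L$-smoothness of $F$ (so that the optimality gap in function value is at most $\frac{L}{2}$ times the iterate gap) and bounding $\kappa = L/\mu$ yields the $\frac{\kappa}{\gamma+T-1}\left(\frac{2B}{\mu} + \frac{\mu\gamma}{2}\mathbb{E}\|\W^1-\W^*\|^2\right)$ rate. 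The final step translates this bound on the parameter gap into the claimed bound on $\mathbb{E}\left[\log\frac{(\W^{L,*})^T h_c^{i,*}}{(\W_g^L)^T h_c^i}\right]$: since the classifier is fixed at its optimal ETF value $\W^L = \W^{L,*}$, the logit ratio is a smooth function of the backbone parameters only, so its deviation from the optimum is again controlled (via smoothness/Lipschitzness of the log-softmax composed with the feature map) by the iterate gap, and the monotonicity remark $(\W^{L,*})^T h_{c_i}^{i,*} \ge \mathbb{E}[(\W^L)^T h_{c_i}^i]$ gives the nonnegativity needed to conclude $h_c^i \to h_c^{i,*}$.

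I expect the main obstacle to be the bias term: unlike vanilla FedAvg, the local clients here optimize $\PHI_k\W^L$ rather than $\W^L$, so the local stationary point is not the restriction of the global one, and one must show this mismatch contributes only an additive $\sum_k p_k\|\PHI_k\W^L - \W^L\|$ to $B$ rather than degrading the contraction factor. Concretely, in the descent lemma one has to split $\nabla F_k(\cdot\,;\PHI_k\W^L) = \nabla F_k(\cdot\,;\W^L) + (\text{bias}_k)$, bound $\|\text{bias}_k\|$ by the Lipschitz dependence of the log-softmax gradient on the classifier perturbation $\PHI_k\W^L - \W^L$, and verify that this perturbation is bounded independently of $t$ (which it is, since $\PHI_k$ is fixed at initialization). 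A secondary subtlety is justifying that the log-ratio objective is itself smooth enough near $h_c^{i,*}$ to pass from parameter convergence to the stated logit-convergence bound; this leans on the feature-norm constraint $\|h^i\|^2\le E_H$ keeping everything in a compact region where the relevant constants are finite.
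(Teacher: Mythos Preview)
Your proposal is essentially correct and follows the same high-level skeleton as the paper: invoke the Li--Huang--Yang--Wang--Zhang FedAvg analysis (one-step descent recursion, variance bound, client-drift bound, induction with diminishing stepsize $\eta_t=\tfrac{2}{\mu(\gamma+t)}$, then $L$-smoothness to pass from iterate gap to function-value gap). The differences are in two places.

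First, you introduce the $\|\PHI_k\W^L-\W^L\|$ term as a \emph{gradient bias} on the backbone, splitting $\nabla F_k(\cdot;\PHI_k\W^L)=\nabla F_k(\cdot;\W^L)+\text{bias}_k$ and controlling the bias via Lipschitz dependence of the softmax gradient on the classifier. The paper takes a more direct route: it keeps the full parameter $\W=(\W^{-L},\W^L)$ in the recursion and observes that the local model literally has classifier $\PHI_k\W^L$ while the aggregated model has $\W^L$, so the client--server divergence $\|\W^t-\W_k^{tE}\|^2$ decomposes orthogonally into the backbone drift plus the static term $\|\PHI_k\W^L-\W^L\|^2$. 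This lands the extra term inside the existing drift bound (their Lemma~\ref{lemma:global_phi}) with no new Lipschitz estimate required. Your route works but is more laborious; the paper's decomposition is what makes the additive form of $B$ immediate.

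Second, your final step is over-engineered. You propose to bound the log-ratio by composing smoothness of log-softmax with the feature map and invoking compactness from $\|h^i\|^2\le E_H$. The paper instead simply \emph{identifies} $\mathbb{E}[F(\W_g)]-F(\W^*)$ with $\mathbb{E}\!\left[\log\frac{(\W^{L,*})^T h_c^{i,*}}{(\W_g^L)^T h_c^i}\right]$ by writing out cross-entropy; since the classifier is fixed at $\W^L=\W^{L,*}$, the function-value gap \emph{is} the displayed log-ratio, and no separate Lipschitz argument is needed. Your anticipated ``secondary subtlety'' therefore dissolves once you recognize that the bound on $\mathbb{E}[F(\W_g)]-F(\W^*)$ already is the stated conclusion.
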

\vspace{-5pt}
In Theorem~\ref{thm:global}, the variable $B$ represents the impact of algorithmic convergence ($p_k^2 \sigma^2$), non-iid data distribution ($6 L \Gamma_1$), and stochastic optimization ($8(E-1)^2G^2$). The only difference between FedAvg, FedGE, and our FedGELA lies in the value of $B$ while others are kept the same. FedGE and FedGELA have a smaller $G$ compared to FedAvg because they employ a fixed ETF classifier that is predefined as optimal. FedGELA introduces a minor additional overhead ($p_k || \PHI_k\W^L - \W^L||$) on the global convergence of FedGE due to the incorporation of local adaptation to ETFs.
The cost might be negligible, as $\sigma$, $G$, and $\Gamma_1$ are defined on the whole model weights while $p_k \|\PHI_k\W^L-\W^L\|$ is defined on the classifier. To verify this, we conduct experiments in Figure~\ref{fig:method}(a), and as can be seen, FedGE and FedGELA have similar quicker speeds and larger classification angles than FedAvg.

\vspace{5pt}

\begin{theorem}[Local Convergence]\label{thm:local}
If $F_1,...,F_N$ are L-smooth, variance and norm of their gradients are bounded by $\sigma$ and $G$, and the heterogeneity is bounded by $\Gamma_1$, clients' expected local loss satisfies:
{\small$$
\mathbb{E}[F_k^{(t+1) E}] \leqslant F_k^{t E}+\frac{L E \eta_t^2}{2} \sigma^2 +\Gamma_1-A,
$$}
where in FedGELA, $A=(\eta_t-\frac{L}{2} \eta_t^2) E G^2-L\left\|\PHI_k\W^L-\W^L\right\|$, which means if $A-\frac{G^4}{LE(G^2+\sigma^2)}\leq 0$, there exist learning rate $\eta_t$ making the expected local loss decreasing and converging.
\end{theorem}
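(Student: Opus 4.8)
~~~The plan is to establish a single per-round descent inequality for client $k$ by the standard local-SGD telescoping argument, with two modifications specific to FedGELA: the server aggregation only replaces the backbone $\gH_k$ (so the ``reset'' to the averaged backbone at the start of the next round is absorbed into the heterogeneity constant $\Gamma_1$), and the classifier used in local training is the adapted frame $\PHI_k\W^L$ rather than the global ETF $\W^L$ (so a classifier-perturbation term controlled by $\|\PHI_k\W^L-\W^L\|$ appears). Throughout I would use only the assumptions of Sec.~\ref{sec:assumption}: $L$-smoothness of $F_k$, unbiased stochastic gradients with variance bounded by $\sigma^2$ and norm bounded by $G$, and the $\Gamma_1$-bound on the gap between local and global optima; and I would keep in mind that only $\gH_k$ is updated during local steps since the classifier is frozen in Eq.~\eqref{eq:loss}.

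First I would write the one-step inequality. Fixing round $t$ and local iterate $\tau$, $L$-smoothness along the update $\gH_k^{tE+\tau+1}=\gH_k^{tE+\tau}-\eta_t g_k^{tE+\tau}$ and taking the conditional expectation over the mini-batch (using unbiasedness and $\mathbb{E}\|g_k-\nabla F_k\|^2\le\sigma^2$) gives
\[
\mathbb{E}\big[F_k^{tE+\tau+1}\big]\le F_k^{tE+\tau}-\Big(\eta_t-\tfrac{L}{2}\eta_t^2\Big)\big\|\nabla F_k^{tE+\tau}\big\|^2+\tfrac{L}{2}\eta_t^2\sigma^2 .
\]
Summing over $\tau=0,\dots,E-1$ telescopes the left side, produces the $\tfrac{LE}{2}\eta_t^2\sigma^2$ term and a decrease of $(\eta_t-\tfrac{L}{2}\eta_t^2)\sum_{\tau}\|\nabla F_k^{tE+\tau}\|^2$; replacing each $\|\nabla F_k^{tE+\tau}\|^2$ by $G^2$ (using $G$ as the typical gradient magnitude along the non-stationary trajectory, in the spirit of the bounded-norm assumption) yields the $(\eta_t-\tfrac{L}{2}\eta_t^2)EG^2$ piece of $A$, giving a bound on $\widetilde F_k^{(t+1)E}$ evaluated at client $k$'s own pre-aggregation weights.

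Next I would insert the two corrections. (i) Aggregation: at the start of round $t+1$ the client swaps its backbone for $\sum_j p_j\gH_j$; bounding $F_k$ at the averaged backbone by $F_k$ at $\gH_k$ plus a discrepancy, and controlling that discrepancy through the $\Gamma_1$-bound on $\|\W_k^*-\W^*\|$ together with $L$-smoothness, produces the additive $+\Gamma_1$. (ii) Adapted classifier: the objective actually minimized is $F_k(\gH_k,\PHI_k\W^L)$ whereas the ETF analysis is anchored at $F_k(\gH_k,\W^L)$, and $L$-smoothness in the classifier block gives $|F_k(\gH_k,\PHI_k\W^L)-F_k(\gH_k,\W^L)|\le L\|\PHI_k\W^L-\W^L\|$ (up to the bounded feature/gradient factors), i.e. the $-L\|\PHI_k\W^L-\W^L\|$ term inside $A$ (which enters the bound with a plus sign, as the adaptation is a penalty). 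Collecting terms yields $\mathbb{E}[F_k^{(t+1)E}]\le F_k^{tE}+\tfrac{LE}{2}\eta_t^2\sigma^2+\Gamma_1-A$ with $A$ as stated. For the convergence claim I would read the right-hand side as $F_k^{tE}-\big(g(\eta_t)-\Gamma_1-L\|\PHI_k\W^L-\W^L\|\big)$ where $g(\eta_t)=\eta_tEG^2-\tfrac{LE}{2}\eta_t^2(G^2+\sigma^2)$, maximize $g$ at $\eta_t^\star=\tfrac{G^2}{L(G^2+\sigma^2)}$ with $g(\eta_t^\star)=\tfrac{EG^4}{2L(G^2+\sigma^2)}$, so that for this step size the round-to-round change is non-positive exactly when $\Gamma_1+L\|\PHI_k\W^L-\W^L\|$ is dominated by $g(\eta_t^\star)$ — which rearranges (up to the stated normalization) to the threshold $A-\tfrac{G^4}{LE(G^2+\sigma^2)}\le 0$; an eventually non-increasing, bounded-below sequence then converges, and with a diminishing $\eta_t$ the residual $\tfrac{LE}{2}\eta_t^2\sigma^2$ vanishes as well.

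The main obstacle I expect is the passage from the \emph{upper} bound $\|\nabla F_k\|\le G$ to a usable \emph{lower} bound on the per-step decrease: literally one only knows $\|\nabla F_k^{tE+\tau}\|^2\le G^2$, so substituting $G^2$ into the decrease term needs an extra ingredient (e.g.\ that the iterates stay bounded away from stationarity, or a PL-type inequality from $\mu$-strong convexity). The second delicate point is keeping the classifier perturbation $\|\PHI_k\W^L-\W^L\|$ \emph{additive} and decoupled from the backbone dynamics; this is legitimate only because the classifier is frozen during local steps and the model has the backbone/classifier block structure, so the perturbation enters through a single application of smoothness rather than propagating through the SGD recursion.
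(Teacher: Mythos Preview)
Your plan is essentially identical to the paper's proof: start from the telescoped one-step SGD descent lemma over the $E$ local iterations (their Lemma~\ref{lem:sgdlocal}), then bound the post-aggregation/post-adaptation starting loss $F_k^{tE+\frac{1}{2}\PHI_k}$ by $F_k^{tE}$ plus the heterogeneity term $\Gamma_1$ (aggregation) plus $L\|\PHI_k\W^L-\W^L\|$ (classifier adaptation via $L$-smoothness), and finally replace $\sum_\tau\|\nabla F_k^{tE+\tau}\|^2$ by $EG^2$ via Assumption~\ref{asm:sgd_norm}. The only cosmetic difference is ordering: the paper telescopes from the already-adapted state and then relates that state back to $F_k^{tE}$, whereas you telescope first and insert the two corrections afterward; the resulting inequality and the optimization over $\eta_t$ for the threshold condition are the same.

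Your ``main obstacle'' is well spotted: the paper also invokes the \emph{upper} bound $\|\nabla F_k\|^2\le G^2$ to replace the negative term $-(\eta_t-\tfrac{L}{2}\eta_t^2)\sum_\tau\|\nabla F_k\|^2$ by $-(\eta_t-\tfrac{L}{2}\eta_t^2)EG^2$, which as you note goes in the wrong direction for an upper bound on $\mathbb{E}[F_k^{(t+1)E}]$. The paper does not supply the extra ingredient you mention (PL/strong convexity or a non-stationarity argument), so your proposal is at least as rigorous as the original on this point.
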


In Theorem~\ref{thm:local}, only ``A'' is different on the convergence among FedAvg, FedGE, and FedGELA. 
Fixing the classifier as ETF and adapting the local classifier will introduce smaller G and additional cost of $L\left\|\PHI_k\W^L-\W^L\right\|$ respectively, which might limit the speed of local convergence. However, FedGELA might reach better local optimal by adapting the feature structure. As illustrated in Figure~\ref{fig:method}~(c), the adapted structure expands the decision boundaries of existing major classes and better utilizes the feature space wasted by missing classes.

To verify this, in Figure~\ref{fig:method}(b), we record the averaged angles between the existing class means during the local training. It can be seen that FedGELA converges to a much larger angle than both FedAvg and FedGE, which suits our expectations. More angle results can be seen in Figure~\ref{fig:more}.

\section{Experiments}\label{sec:experiment}
\subsection{Experimental Setup}
\textbf{Datasets.}~~~We adopt three popular benchmark datasets SVHN~\cite{svhn}, CIFAR10/100~\cite{cifar10} in federated learning. As for data splitting, we utilize Dirichlet Distribution~(Dir~($\beta$), $\beta=\{10000, 0.5$, $0.2$, $0.1\}$) to simulate the situations of independently identical distribution and different levels of PCDD. Besides, one standard real-world PCDD dataset, Fed-ISIC2019~\cite{terrail2022flamby, codella2018skin, combalia2019bcn20000, tschandl2018ham10000} is used, and we follow the setting in the Flamby benchmark~\cite{terrail2022flamby}. Please refer to Appendix~\ref{ap:C} for more details. 

\textbf{Metrics.}~~~Denote PA as the personal accuracy, which is the mean of the accuracy computed on each client test dataset, and GA as the generic accuracy on global test dataset~(mixed clients' test datasets). Since there is no global model in P-FL methods, we calculate GA of them as the averaged accuracy of all best local models on global test dataset, which is the same as FedRod~\citep{fedrod}. 

Regarding PA, we record the best results of personal models for P-FL methods while for G-FL methods we fine-tune the best global model in 10 epochs and record the averaged accuracy on all client test datasets. For FedRod and FedGELA, we can directly record the GA and PA (without fine-tuning) during training. 

\textbf{Implementation.}~~~We compare FedGELA with FedAvg, FedRod~\citep{fedrod}, multiple state-of-the-art methods in G-FL (FedRS~\citep{fedrs}, MOON~\citep{moon}, FedProx~\citep{fedprox}, FedGen~\citep{fedgen} and FedLC~\citep{fedlc}) and in P-FL (FedRep~\citep{fedrep}, FedProto~\citep{fedproto} and FedBABU~\citep{fedbabu}). For SVHN, CIFAR10, and CIFAR100, we adopt a commonly used ResNet18~\citep{moon,huang,zhou,fedskip,fedlc} with one FC layer as the backbone, followed by a layer of classifier. FedGELA replaces the classifier as a simple ETF. We use SGD with learning rate 0.01, weight decay $10^{-4}$, and momentum 0.9. The batch size is set as 100 and the local updates are set as 10 epochs for all approaches. 

As for method-specific hyper-parameters like the proximal term in FedProx, we tune it carefully. In our method, there are $E_W$ and $E_H$ need to set, we normalize features with length 1~($E_H=1$) and only tune the length scaling of classifier~($E_W$). All methods are implemented by PyTorch~\citep{pytorch} with NVIDIA GeForce RTX 3090. See detailed information in Appendix~\ref{ap:C}.

\begin{table}[th]
\setlength{\tabcolsep}{3.75pt}
\centering
\caption{Personal and generic performance on SVHN, CIFAR10, and CIFAR100. We use Dir~($\beta=0.5$) for medium heterogeneity and Dir~($\beta=0.1$) or Dir~($\beta=0.2$) for high-level heterogeneity. To verify the straggler situation, we split all datasets into 10 or 50 clients for full participation or partial participation, and in each round, 10 clients are selected in the federated training.}
\vspace{-2pt}
\label{tb:cifar}
\small
\centering
\scalebox{0.90}{
\begin{tabular}{c|c|cccccc|cccccc}
\toprule[2pt]
\multirow{3}{*}[-4pt]{ Dataset } & Method 
 & \multicolumn{6}{c|}{ Full Participation $(10,10)$} & \multicolumn{6}{c}{ Partial Participation $(50,10)$} \\
\cmidrule{2-14} & \#Partition & \multicolumn{2}{c}{ IID } & \multicolumn{2}{c}{$\beta=0.5$} & \multicolumn{2}{c|}{$\beta=0.1$} & \multicolumn{2}{c}{ IID } & \multicolumn{2}{c}{$\beta=0.5$} & \multicolumn{2}{c}{$\beta=0.2$} \\
\cmidrule{2-14} & \#Metric& PA & GA & PA & GA & PA & GA & PA & GA & PA & GA & PA & GA \\
\midrule 
\multirow{10}{*}[0pt]{SVHN} & FedAvg & $93.01$ & $92.61$ & $93.95$ & $91.24$ & $98.10$ & $75.24$& $91.44$ & $91.29$ & $92.70$ & $89.29$ & $95.31$ & $84.70$  \\
 & FedProx & $93.12$ & $93.12$ & $93.71$ & $92.15$ & $97.98$ & $75.13$& $91.67$ & $91.66$ & $92.71$ & $89.98$ & $95.13$ & $85.68$  \\
 & MOON & $93.16$ & $93.16$ & $92.98$ & $92.46$ & $98.06$ & $76.21$ & $93.49$ & $91.41$ & $91.86$ & $90.20$ & $95.78$ & $86.22$ \\
  & FedRS & $93.29$ & $93.21$ & $93.92$ & $92.33$ & $98.04$ & $\underline{76.26}$ & $91.63$ & $91.59$ & $93.51$ & $\underline{91.70}$ & $\underline{96.20}$ & $\underline{87.78}$ \\
  & FedGen & $\underline{94.02}$ & $\underline{93.99}$ & $94.47$ & $\underline{92.66}$ & $98.22$ & $76.51$ & $91.47$ & $91.33$ & $93.67$ & $91.35$ & $95.77$ & $87.59$ \\
 & FedLC & $93.29$ & $\underline{93.28}$ & $94.76$ & $91.20$ & $98.24$ & $76.17$ & $91.69$ & $\underline{91.67}$ & $92.73$ & $91.02$ & $95.20$ & $86.92$ \\
 & FedRep & $93.01$ & $92.61$ & $94.77$ & $91.24$ & $97.87$ & $68.52$ & $91.77$ & $89.20$ & $93.14$ & $80.94$ & $95.38$ & $67.77$ \\
 & FedProto & $93.21$ & $91.68$ & $94.48$ & $85.85$ & $\underline{98.26}$ & $56.49$ & $90.23$ & $87.27$ & $93.28$ & $76.59$ & $95.62$ & $54.92$ \\
 & FedBABU & $93.26$ & $93.08$ & $95.20$ & $92.04$ & $98.16$ & $75.52$ & $\underline{93.69}$ & $91.05$ & $93.54$ & $90.49$ & $95.70$ & $84.42$ \\
 & FedRod & $93.50$ & $93.22$ & $\underline{95.47}$ & $92.09$ & $98.06$ & $76.24$ & $92.04$ & $91.65$ & $\underline{93.96}$ & $91.20$ & $95.68$ & $86.98$ \\
& \textbf{FedGELA} & $\textbf{94.84}$ & $\textbf{94.66}$ & $\textbf{96.27}$ & $\textbf{93.66}$ & $\textbf{98.52}$ & $\textbf{78.88}$ & $\textbf{94.68}$ & $\textbf{93.59}$ & $\textbf{95.54}$ & $\textbf{93.29}$ & $\textbf{96.85}$ & $\textbf{89.58}$ \\
\midrule 
\multirow{10}{*}{CIFAR10} & FedAvg & $73.17$ & $72.8$ & $81.67$ & $67.28$ & $92.66$ & $54.57$ & $66.88$ & $66.64$ & $70.64$ & $61.81$ & $80.04$ & $49.13$ \\
 & FedProx & $73.69$ & $73.69$ & $81.95$ & $67.53$ & $92.94$ & $56.13$ & $67.67$ & $67.27$ & $73.62$ & $60.80$ & $80.66$ & $50.82$ \\
 & MOON & $73.29$ & $73.29$ & $82.27$ & $68.34$ & $92.90$ & $55.61$ & $67.58$ & $67.58$ & $74.64$ & $61.81$ & $83.42$ & $52.19$ \\
  & FedRS & $73.56$ & $72.94$ & $81.59$ & $68.10$ & $92.57$ & $58.19$ & $66.76$ & $66.52$ & $72.21$ & $58.95$ & $81.11$ & $51.66$ \\
 & FedLC & $73.05$ & $73.00$ & $81.99$ & $67.97$ & $92.48$ & $57.02 $&$ 67.46$ & $67.13$ & $72.57$ & $61.31$ & $82.14$ & $55.15$ \\
   & FedGen & $73.72$ & $73.49$ & $82.22$ & $69.33$ & $92.79$ & $58.04$ & $68.74$ & $68.02$ & $75.52$ & $62.44$ & $81.07$ & $53.46$ \\
 & FedRep & $73.42$ & $73.23$ & $\underline{83.30}$ & $47.96$ & $92.92$ & $38.32$ & $67.85$ & $67.74$ & $77.28$ & $42.64$ & $84.52$ & $33.22$ \\
 & FedProto & $67.06$ & $66.74$ & $81.03$ & $46.99$ & $\underline{93.17}$ & $32.13$ & $61.85$ & $52.76$ & $72.89$ & $37.47$ & $81.73$ & $26.07$ \\
 & FedBABU & $73.86$ & $72.30$ & $81.40$ & $65.03$ & $92.94$ & $53.65$ & $66.99$ & $64.90$ & $77.59$ & $58.17$ & $82.92$ & $49.90$ \\
 & FedRod & $\underline{74.24}$ & $\underline{73.76}$ & $82.34$ & $\underline{70.74}$ & $92.27$ & $\underline{58.86}$ & $\underline{70.09}$ & $\underline{70.04}$ & $\underline{78.23}$ & $\underline{64.13}$ & $\underline{84.63}$ & $\underline{58.86}$ \\
&  \textbf{FedGELA} & $\textbf{75.02}$ & $\textbf{74.07}$ & $\textbf{84.52}$ & $\textbf{72.73}$ & $\textbf{94.28}$ & $\textbf{61.57}$ & $\textbf{72.33}$ & $\textbf{72.04}$ & $\textbf{80.96}$& $\textbf{65.08}$ & $\textbf{86.55}$ & $\textbf{60.52}$ \\
\midrule 
\multirow{10}{*}{CIFAR100} & FedAvg & $65.27$ & $65.27$ & $65.59$ & $63.96$ & $76.43$ & $59.17 $& $55.16$ & $55.29$ & $55.36$ & $54.15$ & $58.85$ & $53.39$ \\
 & FedProx & $65.71$ & $65.71$ & $65.31$ & $64.18$ & $75.95$ & $59.93$ & $56.86$ & $56.89$ & $56.89$ & $56.08$ & $59.27$ & $55.25$ \\
 & MOON & $65.33$ & $65.33$ & $65.23$ & $64.79$ & $75.45$& $60.12$ &$56.91$& $56.86$ & $56.72$ & $56.14$ & $59.51$ & $55.53$\\
 & FedRS & $65.18$ & $65.64$ & $66.48$ & $64.62$ & $76.86$ & $60.74$ & $56.51$ & $55.91$ & $56.45$ & $56.34$ & $61.92$ & $55.99$ \\
  & FedGen & $65.74$ & $65.75$ & $66.72$ & $64.33$ & $76.92$ & $60.43$ & $56.77$ & $56.74$ & $57.43$ & $56.27$ & $60.09$ & $55.27$ \\
 & FedLC & $65.83$ & $65.84$ & $65.91$ & $65.02$ & $75.67$ & $60.07$ & $56.87$ & $56.04$ & $56.56$ & $56.28$ & $60.89$ & $\underline{55.95}$ \\
 & FedRep & $61.21$ & $59.21$ & $67.87$ & $52.51$ & $77.81$ & $42.77$ & $53.41$ & $51.44$ & $55.60$ & $48.67$ & $67.70$ & $33.10$ \\
 & FedProto & $56.56$ & $56.26$ & $66.08$ & $46.88$ & $77.68$ & $37.63$ & $52.41$ & $50.04$ & $54.05$ & $42.88$ & $63.22$ & $28.74$ \\
 & FedBABU & $65.63$ & $65.28$ & $71.30$ & $64.54$ & $80.33$ & $60.99$ & $56.91$ & $54.57$ & $60.14$ & $54.40$ & $68.44$ & $54.24$ \\
 & FedRod & $\underline{66.17}$ & $\underline{66.17}$ & $\underline{72.05}$ & $\underline{65.19}$ & $\underline{80.46}$ & $\underline{61.01}$ & $\underline{57.76}$ & $\underline{57.01}$ & $\underline{63.90}$ & $\underline{56.53}$ & $\underline{72.37}$ & $54.67$ \\
 & \textbf{FedGELA} & $\textbf{67.28}$ & $\textbf{68.07}$ & $\textbf{72.61}$ & $\textbf{66.94}$ & $\textbf{82.79}$ & $\textbf{63.13}$ & $\textbf{61.70}$ & $\textbf{59.29}$ & $\textbf{64.37}$ & $\textbf{58.60}$ & $\textbf{72.93}$ & $\textbf{58.53}$ \\
\bottomrule[2pt]
\end{tabular}
}
\vspace{-10pt}
\end{table}

\begin{table}[th]
\setlength{\tabcolsep}{1pt}
\caption{Personal and generic performance on a real federated application Fed-ISIC2019. More results of other realworld dataset are shown in the Appendix.}
\small
\centering
\scalebox{0.79}{
\begin{tabular}{c | cc c c cc ccccc}
\toprule[2pt]  Method & FedAvg& FedProx&MOON&FedRS&FedGen&FedLC& FedRep&FedProto&FedBABU&FedRod&\textbf{FedGELA} \\ \midrule
\text { PA } & $77.27_{\pm 0.19}$&$77.91_{\pm 0.16}$ & $77.94_{\pm 0.17}$& $78.27_{\pm 0.12}$& $78.02_{\pm 0.23}$& $77.58_{\pm 0.19}$ & $76.94_{\pm 0.13}$&$77.80_{\pm 0.17}$ & $\underline{78.91}_{\pm 0.13}$& $78.65_{\pm 0.34}$& $\textbf{79.27}_{\pm 0.19}$\\
\cmidrule{2-12} 
\text { GA }& $73.59_{\pm 0.17}$&$73.69_{\pm 0.26}$ & $73.80_{\pm 0.21}$& $74.60_{\pm 0.15}$& $74.37_{\pm 0.27}$& $74.26_{\pm 0.25}$ & $68.05_{\pm 0.37}$& $66.26_{\pm 0.16}$& $74.06_{\pm 0.31}$& $\underline{74.98}_{\pm 0.21}$& $\textbf{75.85}_{\pm 0.16}$\\
\bottomrule[2pt]
\end{tabular}
}
\label{tab:real}
\vspace{-10pt}
\end{table}

\subsection{Performance of FedGELA}\label{sec:expmain}
In this part, we compare FedGELA with FedAvg, FedRod, three SOTA methods of P-FL~(FedRep, FedProto, and FedBABU), four SOTA methods of G-FL~(FedProx, MOON, FedRS, FedLC and FedGen) on different aspects including the scale of clients, the level of PCDD, straggler situations, and real-world applications. Similar to recent studies~\citep{fedskip,moon,ye2}, we split SVHN, CIFAR10, and CIFAR100 into 10 and 50 clients and each round select 10 clients to join the federated training, denoted as full participation and partial participation~(straggler situation), respectively. With the help of Dirichlet distribution~\citep{diri1}, we verify all methods on IID, Non-IID~($\beta=0.5$), and extreme Non-IID situations~($\beta=0.1$ or $\beta=0.2$). As the decreasing $\beta$, the level of PCDD increases and we show the heat map of data distribution in Appendix~\ref{ap:C}. We set $\beta=0.2$ in partial participation to make sure each client has at least one batch of samples. The training round for SVHN and CIFAR10 is 50 in full participation and 100 in partial participation while for CIFAR100, it is set to 100 and 200. Besides, we also utilize a real federated scenario Fed-ISIC2019 to verify the ability to real-world application.

\textbf{Full participation and partial participation.}~~~As shown in Table~\ref{tb:cifar}, with the decreasing $\beta$ or increasing number of clients, the generic performance of FedAvg and all other methods greatly drops while the personal performance of all methods greatly increases. This means under PCDD and the straggler problem, the performance of generic performance is limited but the personal distribution is easier to capture. As for P-FL methods, they fail in global tasks especially in severe PCDD situations since they do not consider the global convergence during training. As for G-FL methods, the performance is better in generic tasks but limited in personalized tasks, especially in CIFAR100. 
They constrain the model's ability to fit personalized distributions during local training, resulting in improved consistency during global optimization. As can be seen, our FedGELA consistently exceeds all baselines for all settings with averaged performance of 2.42\%, 5.2\% and 5.7\% to FedAvg and 1.35\%, 1.64\% and 1.81\% to the best baseline on the three datasets respectively.

\newcommand{\ablationwid}{0.22\textwidth}
\begin{figure}[t!]
\centering
\subfigure{
\hspace{-5.2pt}
\includegraphics[width=0.22\textwidth]{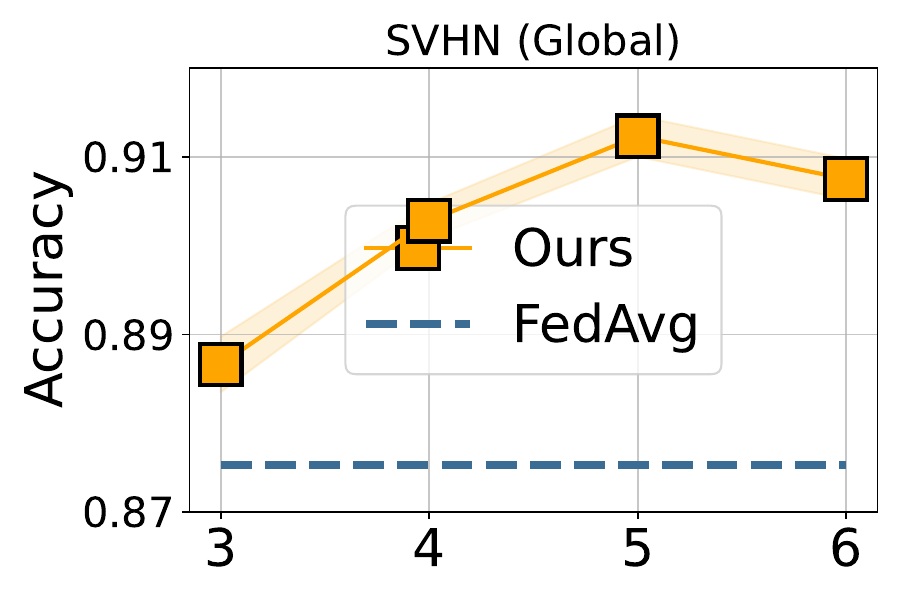}}\vspace{-8pt}
\subfigure{
\includegraphics[width=0.22\textwidth]{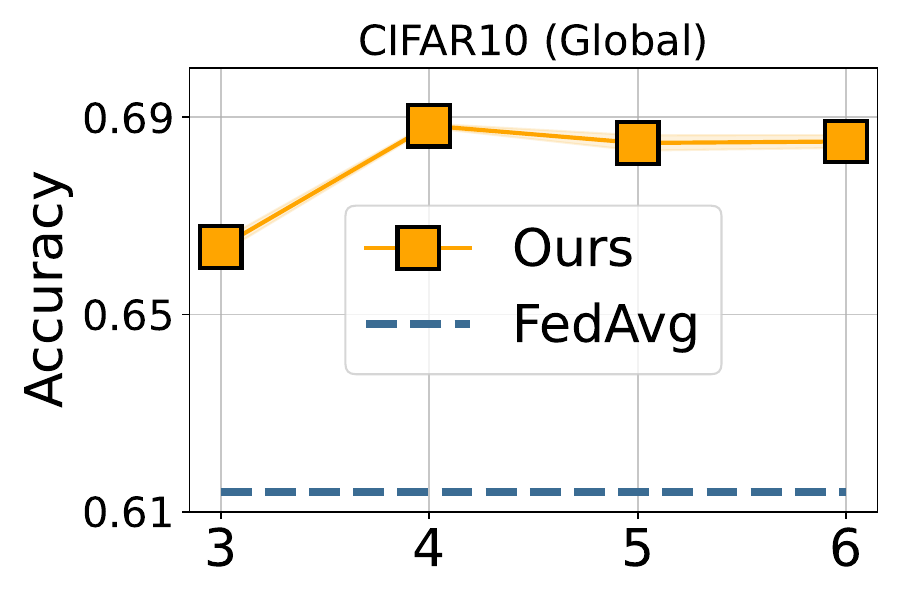}}
\subfigure{
\includegraphics[width=0.22\textwidth]{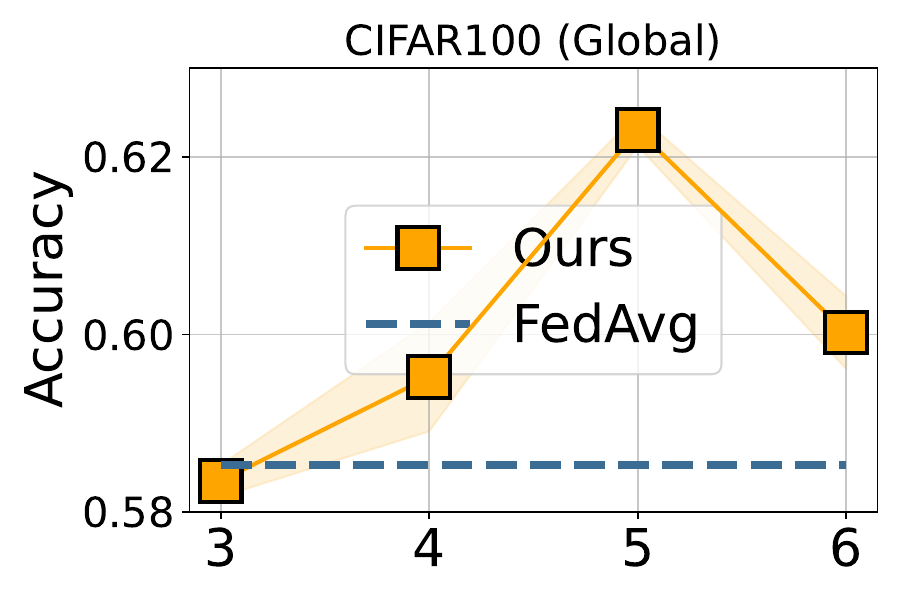}}
\subfigure{
\includegraphics[width=0.22\textwidth]{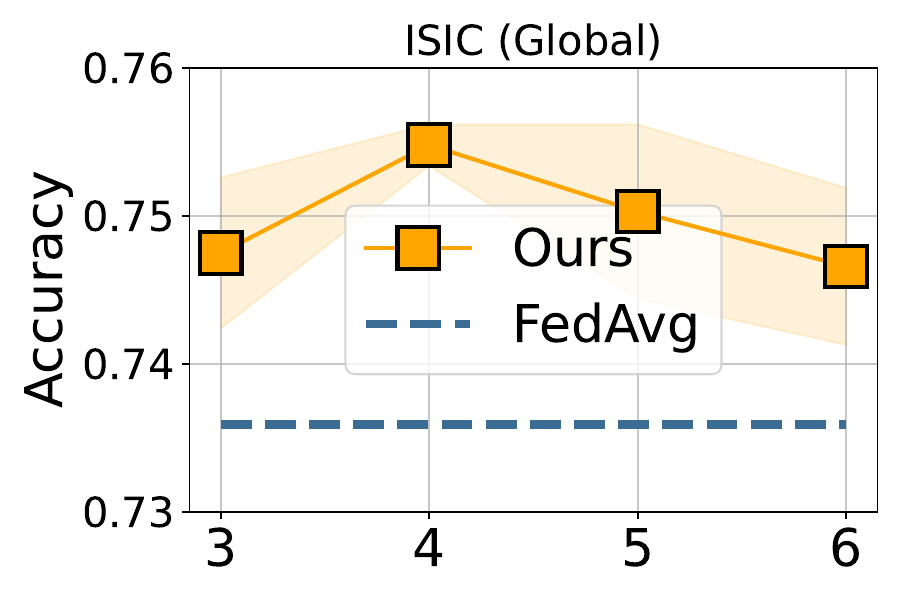}}
\subfigure{
\includegraphics[width=\ablationwid]{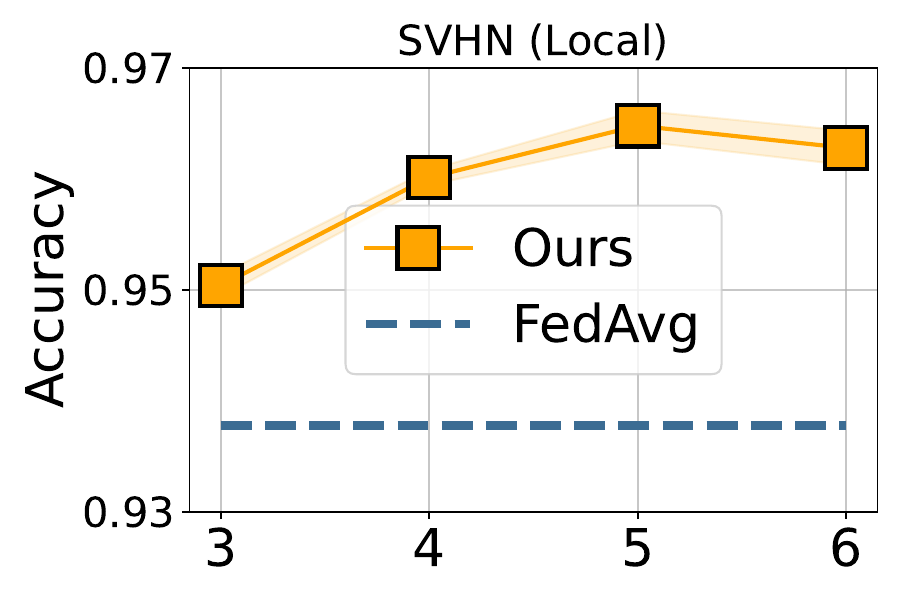}}
\subfigure{
\includegraphics[width=\ablationwid]{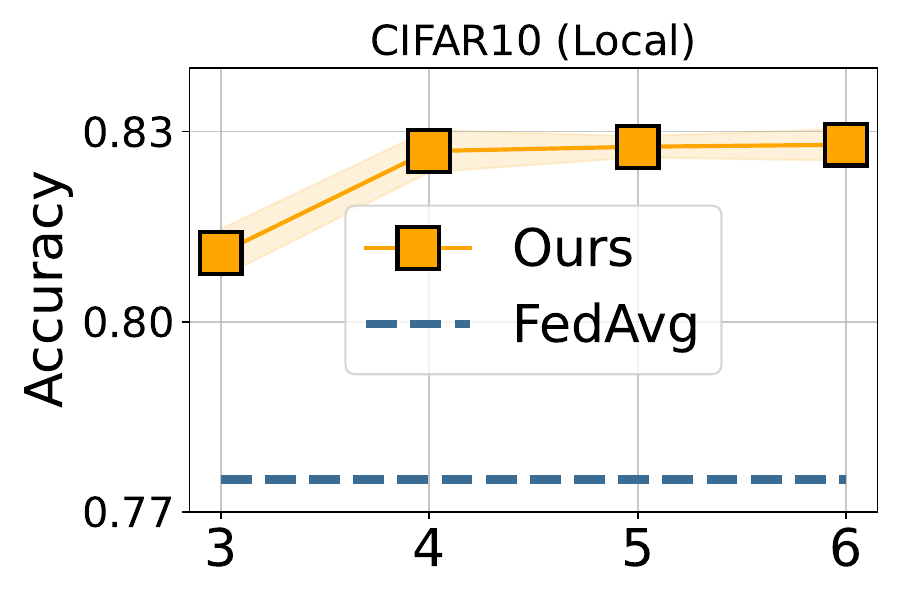}}
\subfigure{
\includegraphics[width=\ablationwid]{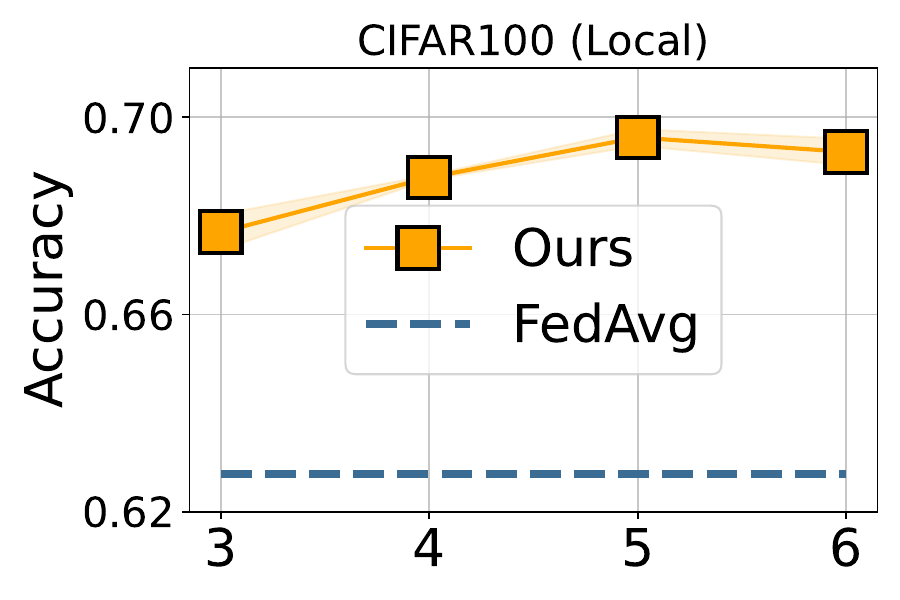}}
\subfigure{
\includegraphics[width=\ablationwid]{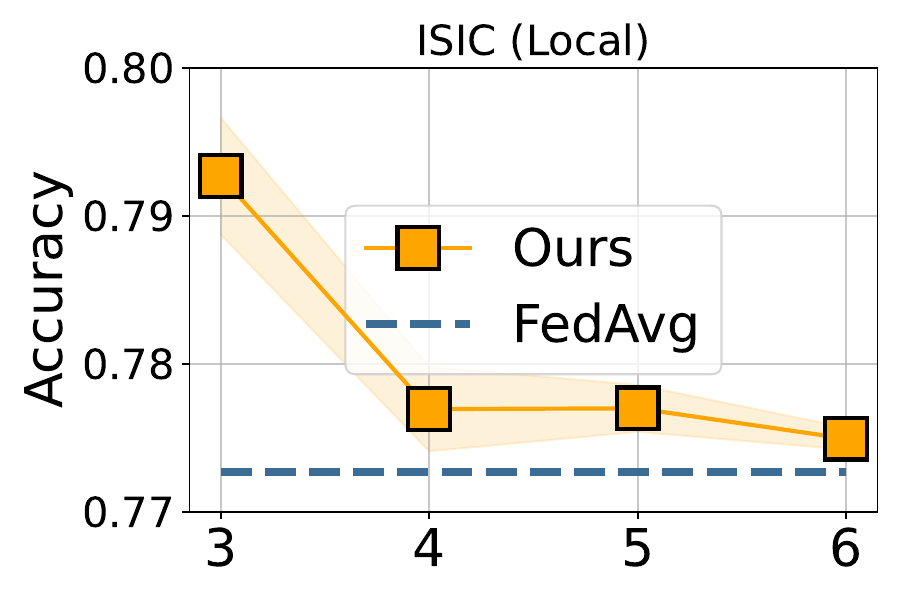}}
\vspace{-9pt}
\caption{Bilateral performance on four datasets by tuning $logE_W$~(x axis) of FedGELA.}
\vspace{-10pt}
\label{fig:ablation}
\end{figure}

\newcommand{\morewid}{0.29\textwidth}
\begin{figure}[ht]
\centering  
\subfigure[Global, Full Parti., $\beta=0.5$]{
\hspace{-2pt}
\includegraphics[width=0.285\textwidth]{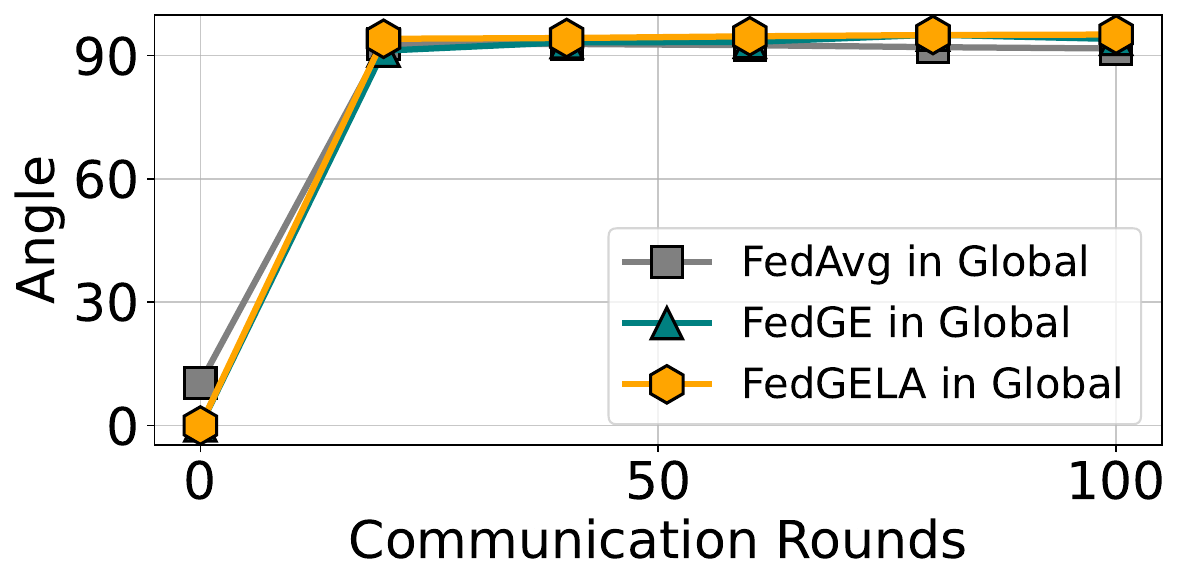}}
\subfigure[Global, Full Parti., $\beta=0.1$]{
\hspace{-1pt}
\includegraphics[width=0.285\textwidth]{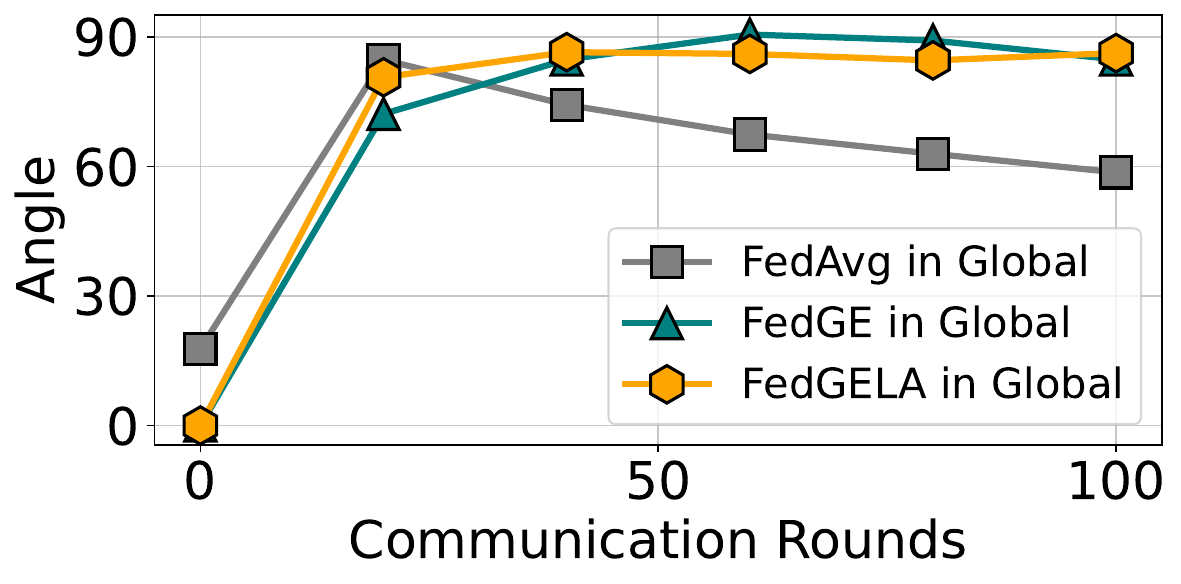}}
\subfigure[Global, Partial Parti., $\beta=0.5$]{
\hspace{1pt}
\includegraphics[width=0.285\textwidth]{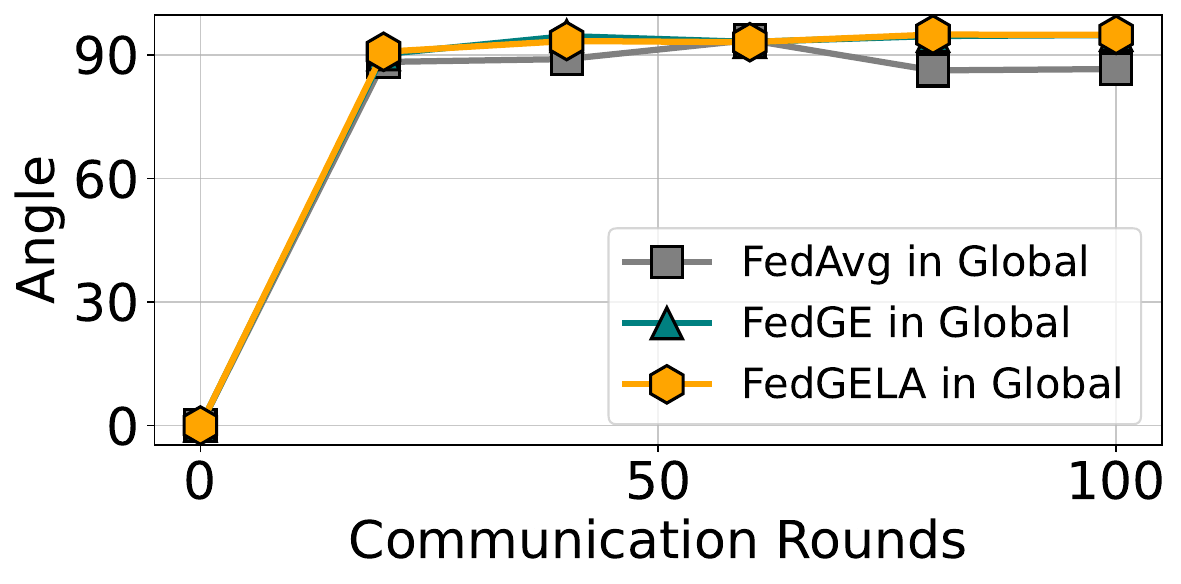}}
\subfigure[Local, Full Parti., $\beta=0.5$]{
\includegraphics[width=\morewid]{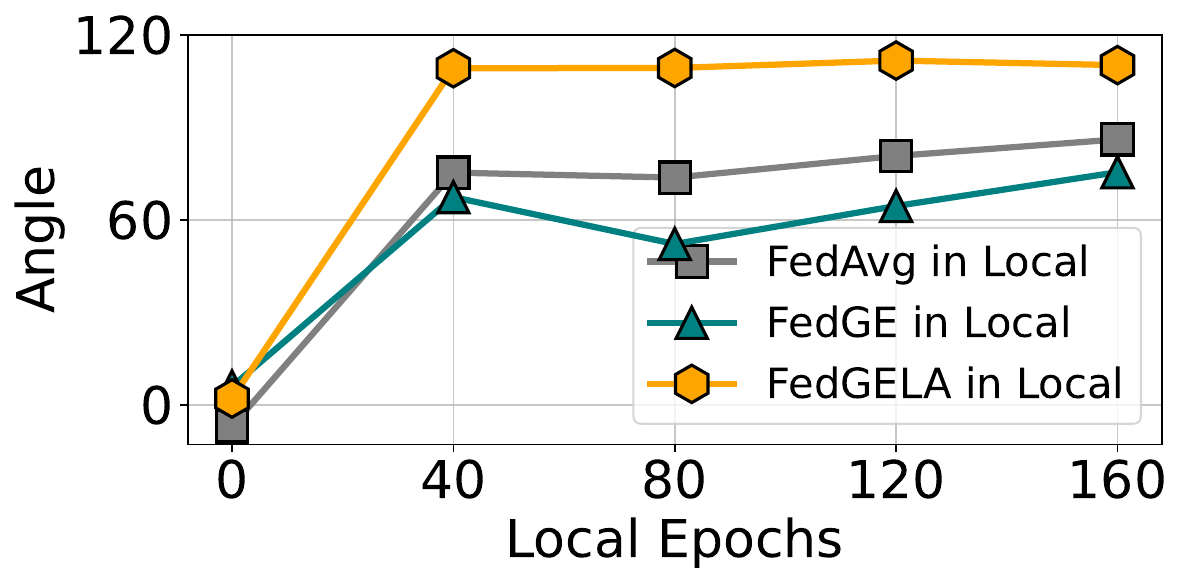}}
\subfigure[Local, Full Parti., $\beta=0.1$]{
\hspace{0pt}
\includegraphics[width=\morewid]{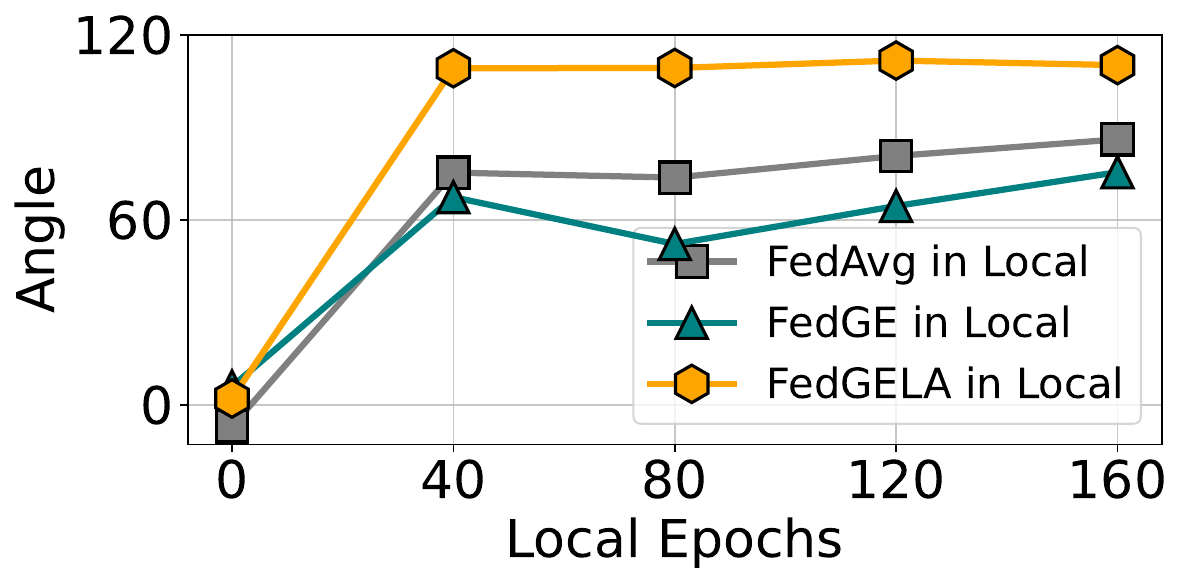}}
\subfigure[Local, Partial Parti., $\beta=0.5$]{
\hspace{-1pt}
\includegraphics[width=\morewid]{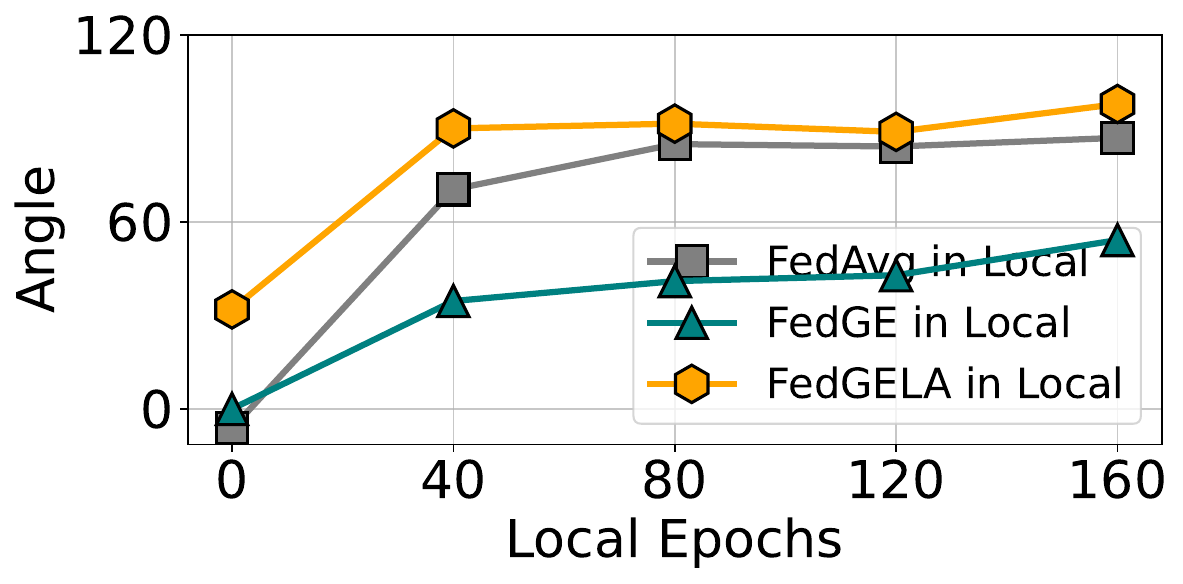}}
\vspace{-5pt}
\caption{Illustration of the averaged angle between locally existing classes and missing classes on the local client and global server of FedAvg, FedGE, and our FedGELA on CIFAR10.
}
\label{fig:more}
\vspace{-3pt}
\end{figure}

\begin{table}[t!]
\caption{Ablation study of FedGELA. GE and LA mean the global ETF and local adaptation.}
\vspace{-3pt}
    \centering
    \small
    \label{tb:ablation}
\scalebox{0.87}{
    \setlength{\tabcolsep}{1.5mm}{
    \begin{tabular}{cc|cccccccccccccc}
        \toprule[2pt]
         GE & LA 
 & \multicolumn{4}{c}{SVHN}& \multicolumn{4}{c}{CIFAR10} & \multicolumn{4}{c}{ CIFAR100}& \multicolumn{2}{c}{Fed-ISIC2019} \\
\hline \multicolumn{2}{c|}{\#Partition}& \multicolumn{2}{c}{ Full Parti.} & \multicolumn{2}{c}{Partial Parti.}  & \multicolumn{2}{c}{ Full Parti.} & \multicolumn{2}{c}{Partial Parti.} & \multicolumn{2}{c}{Full Parti.} & \multicolumn{2}{c}{ Partial Parti.} &\multicolumn{2}{c}{Real World} \\
\hline \multicolumn{2}{c|}{\#Metric}& PA & GA & PA & GA& PA & GA & PA & GA & PA & GA & PA & GA& PA & GA\\\midrule
        - & - & 95.02 & 86.36& 93.15 & 88.43& 82.50 & 64.88 & 72.52 & 59.19& 69.09 & 62.80& 56.46 & 54.28& 77.27 & 73.59  \\ \midrule
        \checkmark & - & 95.92 & 88.93& 93.97 & 92.42& 83.63 & 69.70 & 77.66 & 65.56& 71.46 & 66.02& 62.67 & 58.98& 69.88 & 75.54\\ 
        - &\checkmark & 95.93& 74.84& 93.15 & 89.58& 83.97 & 63.75 & 77.76 & 61.55& 71.93 & 60.76& 58.92 & 51.95& 54.65 & 62.43 \\\midrule
        \rowcolor{lightgray!60}\checkmark & \checkmark & 96.54 & 89.07& 95.69 & 92.15& 84.61 & 69.46 & 79.95 & 65.21 & 74.23 & 66.05& 66.33 & 58.81& 79.27 & 75.85\\
        \bottomrule[2pt]
    \end{tabular}
    }}
    \label{tab:ablation}
    \vspace{-10pt}
\end{table}

\begin{table}[t]
\caption{Performance of FedGELA compared with FedAvg and the best baseline under pure PCDD settings on CIFAR10 and SVHN datasets.$P\varrho C\varsigma$ means that the dataset is divided into $\varrho$ clients and each client has $\varsigma$ classes. We show the improvement in red on each baseline compared to FedGELA.}\label{tab:purepcdd}
\small
\centering
\renewcommand\arraystretch{0.1}
\setlength{\tabcolsep}{17pt}
\scalebox{1.0}{
\begin{tabular}{c | c | c c c}
\toprule[2pt]  Dataset~(split) & Metric & FedAvg & Best Baseline & FedGELA \\ \midrule
\multirow{2}{*}{CIFAR10(P10C2)} &  PA & 92.08{\color{red}+3.76}	&94.07{\color{red}+1.77}	&95.84\\
\cmidrule{2-5} & GA  &47.26{\color{red}+12.34}&	52.02{\color{red}+7.58}&	59.60 \\ 
\midrule 
\multirow{2}{*}{CIFAR10(P50C2)} & PA&91.74{\color{red}+3.68}&	93.22{\color{red}+2.20}	&95.42 \\
\cmidrule{2-5} & GA&36.22{\color{red}+18.56}&	44.74{\color{red}+10.04}	&54.78 \\
\midrule 
\multirow{2}{*}{SVHN(P10C2)} &  PA & 95.64{\color{red}+3.11}&	97.02{\color{red}+1.73}&	98.75\\
\cmidrule{2-5} & GA  & 69.34{\color{red}+14.22}&	76.06{\color{red}+7.50}&	83.56\\ 
\midrule 
\multirow{2}{*}{SVHN(P50C2)} & PA&94.87{\color{red}+3.50}	&96.88{\color{red}+1.49}	&98.37 \\
\cmidrule{2-5} & GA& 66.94{\color{red}+10.24}	&72.97{\color{red}+4.21}	&77.18 \\
\bottomrule[2pt]
\end{tabular}
}
\end{table}

\textbf{Performance in real-world applications. }
Except for the above three benchmarks, we also verify FedGELA with other methods under a real PCDD federated application: Fed-ISIC2019. As shown in Table~\ref{tab:real}, our method achieves the best improvement of $2\%$ and $2.26\%$ relative to FedAvg and of $0.36\%$ and $1.25\%$ relative to the best baseline on personal and generic tasks respectively, which demonstrates that our method is robust to practical situations in the both views. In the Appendix~\ref{sec:morereal}, we provide more results on additional two real-world applications named FEMNIST and SHAKESPEARE to further show the effectiveness of our method in the real-world scenarios.

\subsection{Further Analysis}\label{sec:further}
\vspace{-5pt}
\begin{table}[t]
\setlength{\tabcolsep}{15pt}
\caption{{Performance of choosing different $\PHI$. Assuming the row vector of distribution matrix$(\Phi_k)_c^T$ is related to class distribution $\frac{n_{k,c}}{n_k}$ and the relationship as $Q_k(\frac{n_{k,c}}{n_k})$. Except for $Q_k(x)=x$, we have also considered employing alternative methods like employing an exponential $Q_k(x)=e^x$ or power function $Q_k(x)=x^{\frac{1}{2}}$ of the number of samples.}}\label{tab:otherphi}
\small
\centering
\renewcommand\arraystretch{0.1}
\scalebox{1.0}{
\begin{tabular}{c | c | c c c}
\toprule[2pt]  Dataset~(split) & Metric & $Q_k(x)=e^{x}$	 & $Q_k(x)=x^{\frac{1}{2}}$ & $Q_k(x)=x$(ours) \\ \midrule
\multirow{2}{*}{SVHN(IID)} &  PA & 95.12	&95.43&	94.84\\
\cmidrule{2-5} & GA  &94.32&	93.99	&94.66\\ 
\midrule 
\multirow{2}{*}{SVHN($\beta=0.5$)} & PA&96.18	&95.56	&96.27\\
\cmidrule{2-5} & GA&93.28	&93.22	&93.66\\
\midrule 
\multirow{2}{*}{SVHN($\beta=0.1$)} &  PA & 98.33&	98.21&	98.52\\
\cmidrule{2-5} & GA  & 78.95	&77.18	&78.88\\ 
\bottomrule[2pt]
\end{tabular}
}
\end{table}

\textbf{More angle visualizations.}
In Figure~\ref{fig:more}, we show the effectiveness of local adaptation
in FedGELA and verify the convergence of fixed classifier as ETF and local adaptation compared with FedAvg.

Together with Figure~\ref{fig:method}, it can be seen that, compared with FedAvg, both FedGE and FedGELA converge faster to a larger angle between all class means in global.  In the meanwhile, the angle between existing classes of FedGELA in the local is much larger, which proves FedGELA converges better than FedAvg and the adaptation brings little limits to convergence but many benefits to local performance under different levels of PCDD.

\textbf{Hyper-parameter.}
FedGELA introduces constrain~$E_H$ on the features and the length~$E_W$ of classifier vectors. We perform $L_2$ norm on all features in FedGELA, which means $E_H=1$. For the length of the classifier, we tune it as hyper-parameter. As shown in Figure~\ref{fig:ablation}, from a large range from 10e3 to 10e6 of $E_W$, our method achieves bilateral improvement compared to FedAvg on all datasets. 

\textbf{Ablation studies.}
Since our method includes two parts: global ETF and local adaptation, we illustrate the average accuracy of FedGELA on all splits of SVHN, CIFAR10/100, and Fed-ISIC2019 without the global ETF or the local adaptation or both. As shown in Table~\ref{tab:ablation}, only adjusting the local classifier does not gain much in personal or global tasks, and compared with FedGE, FedGELA achieves similar generic performance on the four datasets but much better performance on the personal tasks. 

\textbf{Performance under pure PCDD setting.}
To verify our method under pure PCDD, we decouple the PCDD setting and the ordinary heterogeneity (Non-PCDD). In Table~\ref{tab:purepcdd}, we use PxCy to denote the dataset is divided in to x clients with y classes, and in each round, 10 clients are selected into federated training. The training round is 100. According to the results, FedGELA achieves significant improvement especially $18.56\%$ to FedAvg and $10.04\%$ to the best baseline on CIFAR10 (P50C2).

\textbf{Other types of $\PHI$.} Considering the aggregation of local classifiers should align with the global classifier, which ensures the validity of theoretical analyses from both global and local perspectives, $\sum_{k=1}^N p_k\boldsymbol{\Phi}_k$ should be $\mathbf{1}$ ($\mathbf{1}$ is all-one matrix). Assuming the row vector of distribution matrix$(\Phi_k)_c^T$ is related to class distribution $\frac{n_{k,c}}{n_k}$ and the relationship as $Q_k(\frac{n_{k,c}}{n_k})$. The equation can be rewrite as:
$\gamma\sum_{k=1}^N p_kQ_k(\frac{n_{k,c}}{n_k})=\mathbf{1}$,
where $\gamma$ is the scaling constant. In our FedGELA, to avoid sharing statistics for privacy, we only find one potential way that $Q_k(\frac{n_{k,c}}{n_k})=\frac{n_{k,c}}{n_k}$  and $\gamma=\frac{1}{C}$. In this part, we have also considered employing alternative methods like employing an exponential or power function of the number of samples. As shown in the Table~\ref{tab:otherphi}, other methods need to share  $Q_k(\frac{n_{k,c}}{n_k})$  but achieve the similar performance compared to FedGELA, which exhibits the merit of our choice.

In Appendix~\ref{ap:D}, we provide more experiments from other perspectives like communication efficiency and the local burden of storing and computation, to show promise of FedGELA.

\section{Conclusion}
\vspace{-5pt}
In this work, we study the problem of \emph{partially class-disjoint data} (PCDD) in federated learning on both personalized federated learning~(P-FL) and generic federated learning~(G-FL), which is practical and challenging due to the angle collapse of classifier vectors for the global task and the waste of space for the personal task. We propose a novel method, FedGELA, to address the dilemma via a bilateral curation. Theoretically, we show the local and global convergence guarantee of FedGELA and verify the justification on the angle of global classifier vectors and on the angle between locally existing classes. Empirically, extensive experiments show that FedGELA achieves promising improvements on FedAvg under PCDD and outperforms state-of-the-art methods in both P-FL and G-FL. 
\newpage
\section*{Acknowledgement}
The work is supported by the National Key R$\&$D Program of China (No. 2022ZD0160702),  STCSM (No. 22511106101, No. 22511105700, No. 21DZ1100100), 111 plan (No. BP0719010) and National Natural Science Foundation of China (No. 62306178). Ziqing Fan and Ruipeng Zhang were partially supported by Wu Wen Jun Honorary Doctoral Scholarship, AI Institute, Shanghai Jiao Tong University. Bo Han was supported by the NSFC Young Scientists Fund No. 62006202, NSFC General Program No. 62376235, Guangdong Basic and Applied Basic Research Foundation No. 2022A1515011652, and CCF-Baidu Open Fund.


\clearpage
\newpage
\appendix
\begin{wrapfigure}{0}[0cm]{0pt}\label{fig:etf}
\includegraphics[width=0.25\textwidth]{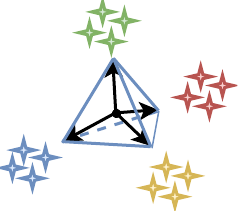}
\caption{ETF structure. Stars with different colors denote features of different classes and black arrows denote the classifier vector for each class. }
\vspace{-20pt}
\end{wrapfigure}
\section{Neural Collapse and simplex ETF}\label{ap:A}
Neural collapse~\citep{etf1} is an intuitive observation that happens at the terminal phase of a well-trained model on a balanced dataset that last-layer features converge to within-class mean, and all within-class means and their corresponding classifier vectors converge to ETF as shown in Figure~\ref{fig:etf}. The main results can be concluded as follows:
\begin{itemize}[leftmargin=15pt]
	\item (NC1) Variability of the last-layer features $\Sigma:=\operatorname{Avg}_{i, c}\{(h_{c}^i-h_c)(h_{c}^i-h_c)^T\}$ collapse within-class: $\Sigma \rightarrow \mathbf{0}$, where $h_{c}^i$ is the last-layer feature of the $i$-th sample in the $c$-th class, and $h_c$ is the within-class mean of c-th class's features.
	\item (NC2) Convergence to a simplex ETF. Last-layer features converge to within-class mean, and all within-class means and their corresponding classifier vectors converge to a simplex ETF.
	\item (NC3) Self duality: $\tilde{h}_c=\W_c /\left\|\W_c\right\|$, where  $\tilde{h}_c=(h_c-\overline{h}) /\left\|h_c-\overline{h}\right\|$ and $\W_c$ is the classifier vector of the $c$-th class.
    \item (NC4) Simplification to the nearest class center prediction: $\operatorname{argmax}_c\left\langle h, \W_c\right\rangle=\operatorname{argmin}_c\left\|h-h_c\right\|$, where $h$ is the last-layer feature.
\end{itemize}

\begin{lemma}[ETF]
    \label{lema:etf}
When solving objective defined in Eq~\eqref{eq:lemma1} in balanced C-class classification tasks with LPM and CE loss, neural collapse merges, which means $\forall 1 \leqslant i \leqslant n_c, 1 \leqslant c \leqslant C$, last layer features $H_i^*$ and corresponding classifier $\W_j^*$ converge as:
$$
 \frac{h_c^{i, *}}{\sqrt{E_H}}=\frac{\W_c^*}{\sqrt{E_W}}=m_c^*,
$$
where $m_c^*$ forms a simplex equiangular tight frame~(ETF) defined as:
$$
\M=\sqrt{\frac{C}{C-1}} U\left(\mathbf{I}_C-\frac{1}{C} \mathbf{1}_C \mathbf{1}_C^T\right),
$$
where $\M=\left[m^*_1, \cdots, m^*_C\right] \in \mathbb{R}^{d \times C}, U \in \mathbb{R}^{d \times C}$ allows a rotation and satisfies $U^T U=\mathbf{I}_C$ and $\mathbf{1}_C$ is an all-ones vector.
\end{lemma}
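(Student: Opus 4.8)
I would prove this in three stages: first reduce to within-class means, then pass to the Gram matrix and symmetrize over the class labels, and finally solve a small semidefinite program in four scalars.

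\textbf{Stage 1 (within-class collapse and active constraints).} Fix the classifier $\W^L$; the map $h\mapsto\mathcal L_{CE}(h,\W^L)$ is convex, and strictly convex on the span of the classifier vectors, since its Hessian is $\W^{L\,T}(\mathrm{diag}(p)-pp^T)\W^L$ with $p$ the softmax. Jensen then gives $\frac1{n_c}\sum_i\mathcal L_{CE}(h_c^i,\W^L)\ge\mathcal L_{CE}(h_c,\W^L)$ with $h_c$ the within-class mean, and $\|h_c\|^2\le\frac1{n_c}\sum_i\|h_c^i\|^2\le E_H$, so collapsing each class to its mean is feasible and non-increasing — this is $(\mathrm{NC}1)$. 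A standard radial-perturbation / KKT argument shows the norm constraints are tight at any minimizer, $\|h_c\|^2=E_H$ and $\|\W_c^L\|^2=E_W$. Writing $u_c=h_c/\sqrt{E_H}$, $v_c=\W_c^L/\sqrt{E_W}$ (unit vectors) and $\rho=\sqrt{E_HE_W}$, it remains to minimize $\frac1C\sum_c\bigl(-\rho\langle u_c,v_c\rangle+\log\sum_{c'}e^{\rho\langle u_c,v_{c'}\rangle}\bigr)$ over all $2C$-tuples of unit vectors.

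\textbf{Stage 2 (Gram matrix and symmetrization).} The objective depends on the $u_c,v_c$ only through the Gram matrix $\mathbf G$ of these $2C$ vectors, and as a function of $\mathbf G$ it is convex (an affine term plus a sum of log-sum-exps of affine functions). The feasible set $\{\mathbf G\succeq 0:\ \mathbf G_{ii}=1\}$ is convex and invariant under the diagonal action of the symmetric group $S_C$ (permuting the $u$-labels and the $v$-labels by the same $\pi$); averaging a minimizer over $S_C$ therefore yields a minimizer whose Gram matrix is doubly symmetric, $\langle u_c,u_{c'}\rangle=a$, $\langle v_c,v_{c'}\rangle=b$ for $c\ne c'$, $\langle u_c,v_c\rangle=\alpha$, $\langle u_c,v_{c'}\rangle=\beta$ for $c\ne c'$. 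For such a configuration the objective collapses to $\log\bigl(1+(C-1)e^{\rho(\beta-\alpha)}\bigr)$, so one must maximize $\alpha-\beta$ subject to positive semidefiniteness.

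\textbf{Stage 3 (the small SDP).} The doubly symmetric $\mathbf G$ lies in the algebra generated by $\mathbf I_C$ and $\mathbf 1_C\mathbf 1_C^T$ in each block, so $\mathbf G\succeq 0$ reduces to two $2\times 2$ conditions, on $\mathrm{span}\{\mathbf 1_C\}$ and on $\mathbf 1_C^\perp$. The $\mathbf 1_C^\perp$ condition gives $(1-a)(1-b)\ge(\alpha-\beta)^2$ with $1-a,\,1-b\le C/(C-1)$ (from $a,b\ge-1/(C-1)$), hence $\alpha-\beta\le C/(C-1)$, with equality forcing $a=b=-1/(C-1)$; the $\mathrm{span}\{\mathbf 1_C\}$ condition then forces $\alpha+(C-1)\beta=0$, so $\alpha=1$ and $\beta=-1/(C-1)$. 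Thus $u_c=v_c=:m_c^*$ (self-duality, $(\mathrm{NC}3)$), the $m_c^*$ have all pairwise inner products $-1/(C-1)$, i.e. $\M^T\M=\frac C{C-1}(\mathbf I_C-\frac1C\mathbf 1_C\mathbf 1_C^T)$, a simplex ETF; since all four scalars are pinned down the entire Gram matrix is determined, so the configuration is unique up to an orthogonal $U$ with $U^TU=\mathbf I_C$, giving the stated $\M=\sqrt{\tfrac C{C-1}}\,U(\mathbf I_C-\tfrac1C\mathbf 1_C\mathbf 1_C^T)$ and $h_c^{i,*}/\sqrt{E_H}=\W_c^{L,*}/\sqrt{E_W}=m_c^*$.

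\textbf{Main obstacle.} The SDP itself is a one-line calculation; the real work is in the two ``softening'' steps. First, showing the norm constraints are active requires verifying that an outward radial perturbation of $h_c$, respectively $\W_c^L$, strictly decreases the loss — the usual but slightly delicate monotonicity/KKT computation. Second, the symmetrization only exhibits \emph{one} symmetric optimum, so upgrading ``some minimizer is an ETF'' to ``every minimizer is an ETF up to rotation'' needs the equality analysis of the Jensen steps (equivalently, strict convexity of the log-sum-exp objective along the symmetrization segment); this is the step I expect to be the most technical. A side benefit of routing everything through the Gram matrix is that it automatically disposes of the ``global mean'' term $\langle\sum_c h_c,\sum_c\W_c^L\rangle$ that a direct Jensen bound would leave uncontrolled. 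One also needs $d\ge C-1$ for the ETF to be realizable, which is implicit in the statement ($\M\in\mathbb R^{d\times C}$).
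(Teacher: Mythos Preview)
The paper does not actually prove Lemma~\ref{lema:etf}. It is stated in Appendix~A as a known result from the neural-collapse literature (the paper cites \cite{etf1,etf2,minority,lpm1,lpm2,lpm3}) and is used as background to motivate the method. There is no ``paper's own proof'' to compare against.

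That said, your three-stage plan is the standard route taken in those cited works (most closely the layer-peeled analysis of \cite{minority} and the geometric analysis of \cite{lpm2}): reduce to class means by convexity and Jensen, pass to the Gram matrix and exploit permutation symmetry, then solve the resulting low-dimensional SDP. The computation in Stage~3 is correct, and your identification of the two delicate points (activity of the norm constraints via a radial perturbation, and upgrading ``some symmetric minimizer'' to ``every minimizer'' via strict convexity along the symmetrization orbit) matches exactly where the real work lies in those papers. One small remark: your claim that the CE loss is strictly convex ``on the span of the classifier vectors'' is what you need for NC1, but for the uniqueness step you also need that the log-sum-exp is strictly convex in the relevant Gram-matrix coordinates, which is a slightly different statement; in the literature this is usually handled by analyzing the equality case of the averaging step directly rather than by a blanket strict-convexity assertion. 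The implicit dimension assumption $d\ge C-1$ that you flag is indeed needed and is typically left tacit.
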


To analyze this phenomenon, some studies simplify deep neural networks as last-layer features and classifier~(layer-peeled model)\citep{lpm1,lpm2,lpm3,minority} with proper constraints or regularizations. In the view of layer-peeled model~(LPM), training $\W$ with constraints on the weights can be seen as training the C-class classification head $\W^{L}=\{\W_1,...,\W_C\}$ and features $H=\{h^1,...,h^N\}$ of all $n$ samples output by last layer of backbone with constraints $E_W$ and $E_H$ respectively. Therefore, $\forall 1 \leqslant c \leqslant C,  1 \leqslant i \leqslant N$, the training objective with commonly used cross-entropy loss can be described as:
\begin{equation}\label{eq:lemma1}
\begin{aligned}
\min_{H,\W^L} &\frac{1}{n} \sum_{i=1}^{N} \mathcal{L}_{CE}\left(h^i, \W\right), \\
\text { s.t. } & \left\|\W_c^L\right\|^2 \leqslant E_W, \ \left\|h^i \right\|^2 \leqslant E_H. 
\end{aligned}
\end{equation}

In the balanced dataset, as described in Lemma~\ref{lema:etf}, any solutions to this model merge neural collapse and form a simplex equiangular tight frame~(ETF), which means ETF is optimal classifier in the balanced case of LPM. 

\begin{lemma}[Fixing classifier as ETF]
    \label{lema:fix}
No matter dataset is balanced or imbalanced, fixing the classification head as ETF with scaling length of $\sqrt{E_W}$ in the layer-peeled model and optimizing the following objective:
$$
\begin{aligned}
\min_{H} &\frac{1}{n} \sum_{i=1}^{n} \mathcal{L}_{CE}\left(h^i, \sqrt{E_W}\M\right), \\
\text { s.t. } & \left\|h^i\right\|^2 \leqslant E_H, \forall 1 \leqslant i \leqslant n.
\end{aligned}
$$
Then the same solution in the Lemma~\ref{lema:etf} is obtained.
\end{lemma}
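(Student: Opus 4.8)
The first move is to exploit the structural simplification created by freezing the classifier at $\W^L=\sqrt{E_W}\M$: the objective \emph{decouples completely across samples}, since each feature $h^i$ appears in exactly one term of $\frac1n\sum_i\mathcal{L}_{CE}(h^i,\sqrt{E_W}\M)$ and in its own constraint $\|h^i\|^2\le E_H$. Hence it suffices to solve, for a single sample labelled $c$, the scalar program
\[
\min_{\|h\|^2\le E_H}\ \mathcal{L}_{CE}\big(h,\sqrt{E_W}\M\big),
\]
where $\mathcal{L}_{CE}$ is the cross-entropy against the frozen ETF logits $\sqrt{E_W}\M^T h$, and to show its minimizer is the unique point $h^\star=\sqrt{E_H}\,m_c^*$. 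The essential point of the lemma is that this subproblem never references the class counts $n_c$, so the same conclusion holds for balanced \emph{or} imbalanced data; and once $h_c^{i,*}=\sqrt{E_H}m_c^*$ is established for every $i$ in every class $c$, combining with the frozen $\W_c^{L}=\sqrt{E_W}m_c^*$ reproduces the configuration $h_c^{i,*}/\sqrt{E_H}=\W_c^{L,*}/\sqrt{E_W}=m_c^*$ of Lemma~\ref{lema:etf}.

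Next I would write the per-sample loss in margin form, $\log\big(1+\sum_{c'\ne c}\exp(\sqrt{E_W}(m_{c'}^*-m_c^*)^T h)\big)$, and lower-bound it with two elementary steps that use only the algebraic identities following from $\M=\sqrt{\tfrac{C}{C-1}}\mathbf{U}(\mathbf{I}_C-\tfrac1C\mathbf{1}_C\mathbf{1}_C^T)$, namely $\|m_c^*\|=1$, $\langle m_c^*,m_{c'}^*\rangle=-\tfrac1{C-1}$ for $c'\ne c$, and $\sum_{c'=1}^C m_{c'}^*=\mathbf{0}$ (so $\sum_{c'\ne c}m_{c'}^*=-m_c^*$). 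Jensen's inequality applied to the $C-1$ exponentials gives $\sum_{c'\ne c}\exp(\cdots)\ge(C-1)\exp\!\big(\!-\tfrac{C\sqrt{E_W}}{C-1}(m_c^*)^T h\big)$, and Cauchy--Schwarz together with the norm budget gives $(m_c^*)^T h\le\|h\|\le\sqrt{E_H}$; since $t\mapsto\log(1+(C-1)e^{t})$ is increasing, these chain to $\mathcal{L}_{CE}(h,\sqrt{E_W}\M)\ge\log\!\big(1+(C-1)e^{-\frac{C}{C-1}\sqrt{E_W E_H}}\big)$. A one-line substitution then verifies that $h^\star=\sqrt{E_H}m_c^*$ produces logits $\sqrt{E_W E_H}$ on coordinate $c$ and $-\tfrac1{C-1}\sqrt{E_W E_H}$ elsewhere, hence attains this bound, so it is a global minimizer.

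The step I expect to require the most care is \emph{uniqueness} — that the two inequalities above can be simultaneously tight only at $h^\star$, and this is where the ETF geometry really enters. I would argue: Jensen equality forces all exponents $(m_{c'}^*-m_c^*)^T h$ ($c'\ne c$) to coincide, which together with $\mathbf{1}_C^T\M^T h=(\M\mathbf{1}_C)^T h=0$ pins the logit vector $\M^T h$ to a scalar multiple of $\M^T m_c^*$; equivalently the $\mathrm{col}(\M)$-component of $h$ equals $z\,m_c^*$ with $z=(m_c^*)^T h$, so $\|h\|^2=z^2+\|h^{\perp}\|^2$ and $z\le\sqrt{E_H}$. The loss then collapses to the strictly decreasing one-dimensional map $z\mapsto\log(1+(C-1)e^{-\frac{C}{C-1}\sqrt{E_W}\,z})$, whose maximizer $z=\sqrt{E_H}$ simultaneously saturates Cauchy--Schwarz and forces the $\mathrm{col}(\M)^{\perp}$-component of $h$ to vanish, leaving $h=\sqrt{E_H}m_c^*$. (Alternatively, since log-sum-exp makes the problem convex, one can verify the KKT condition $\nabla\mathcal{L}_{CE}(h^\star)\parallel h^\star$ with an active norm constraint and obtain uniqueness from strict convexity of $h\mapsto\log\sum_{c'}\exp(\sqrt{E_W}(m_{c'}^*)^T h)$ restricted to $\mathrm{col}(\M)$ — its restricted Hessian is nonsingular precisely because $\mathbf{1}_C\notin\mathrm{row}(\M)$.) Applying the per-sample conclusion to all samples gives $h_c^{i,*}=\sqrt{E_H}m_c^*$, which is the claim.
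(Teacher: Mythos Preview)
Your argument is correct and self-contained, but there is nothing in the paper to compare it against: the paper does \emph{not} prove Lemma~\ref{lema:fix}. It is stated in Appendix~A as a known result (``recent studies prove that\ldots''), with the underlying references being \citep{etf2,lpm2}; the authors then move on to their own Theorem~\ref{thm:global}. So your proposal is supplying a proof where the paper simply cites one.

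On the mathematics itself, your decoupling-plus-Jensen-plus-Cauchy--Schwarz chain is valid, and the key identities $\|m_c^*\|=1$, $\langle m_c^*,m_{c'}^*\rangle=-\tfrac{1}{C-1}$, and $\M\mathbf{1}_C=\mathbf{0}$ are exactly the ones needed. Two small remarks. First, the phrase ``whose maximizer $z=\sqrt{E_H}$'' is a slip: you mean the minimizer of the one-dimensional loss (equivalently, the maximizer of $z$ under the norm budget). Second, your uniqueness argument is more elaborate than necessary --- once you have $(m_c^*)^Th=\sqrt{E_H}$ together with $\|h\|\le\sqrt{E_H}$ and $\|m_c^*\|=1$, the equality case of Cauchy--Schwarz alone forces $h=\sqrt{E_H}m_c^*$; the Jensen-equality analysis and the $\mathrm{col}(\M)$ decomposition are correct but not required. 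The cited references establish the same conclusion via more machinery (e.g., analyzing the full layer-peeled program or its KKT conditions), so your route is, if anything, a cleaner elementary alternative.
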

As shown in Lemma~\ref{lema:fix}, recent studies prove that no matter dataset is balanced or not, by fixing the classifier as a randomly ETF with scaling $\sqrt{E_W}$ and constraining on last layer features, LPM can reach the optimal structure as described in Lemma~\ref{lema:etf}. We also prove in Theorem~\ref{thm:global} that by fixing the classifier of all clients as ETF, in the strongly convex case, the global model can also reach the condition as Lemma~\ref{lema:etf} which meets the requirement of G-FL. 

\begin{table}[!t]
\setlength{\tabcolsep}{10pt}
\renewcommand\arraystretch{1.5}
\caption{\rm{Notations and their corresponding meanings.}}
    \centering
    \begin{tabular}{c|c}
        \toprule[2pt]
Notation &Meaning\\ 
\midrule

$F$& global loss function\\
$F_k$&local loss function of client k\\
$\tau$&local iteration in a curtain round \\
$b_{k}^{\tau}$&mini-bath of a certain iteration\\
$H$&  last layer features \\
$H^i$& last layer feature of i-th sample\\
$H_k$& last layer features of k-th client\\
$H_{k,c}^i$&last layer feature of i-th sample of c-th class in k-th client\\
$\M$& ETF matrix\\
$m_j$& j-th classifier vector in ETF\\
$\phi$& set of adjusted matrices\\
$\phi_k$& adjusted matrix of client k\\
$t$& number of communication rounds \\
$\phi_{k,c}$& adjusted weight of c-th class in client k\\
$p_k$& sample fraction of client k\\
$N$& number of clients\\
$\W$&total model\\
$\W_k$& total model of client k \\
$\W_{k}^{tE}$& model of client k after E-1 aggregation with $\tau$ additional iteration\\
$\W_{k}^{tE+\frac{1}{2}}$& model after aggregation of client k in round t \\

$\W_{k}^{\tau E+\frac{1}{2}\phi_k}$& model after aggregation and adjusted of client k in round t  \\
$\W_g$& global model\\
$\W^L$& classifier \\
$\W^{-L}$& backbone \\
$\W_k^L$& classifier of k-th client \\
$\W^{-L}_k$& backbone of k-th client \\
$\W_{k,c}^L$& classifier of c-th class in k-th client \\
$\W^{-L}_{k,c}$& backbone of c-th class in k-th client \\
$g$ & gradient\\
$n$& number of samples \\
$n_k$& sample number of client k \\
$n_{k,c}$& sample number of c-th class in client k\\
\bottomrule[2pt]
    \end{tabular}
    \label{tab:notation}
\end{table}
\section{Implementation of Theoretical Analysis}\label{ap:B}
\subsection{Notations and Assumptions}
Before starting our proof, we pre-define some notations and assumptions used in the following lemmas and theorems.
First, we make the assumptions on loss functions~$F_1, F_2,\cdots, F_N$ of all clients and their gradient functions~$\nabla F_1, \nabla F_2,\cdots, \nabla F_N$. We use $tE+\tau$ to denote $\tau$-th local iteration in round $t$, $tE$ to denote the state that just finishing local training, $tE+\frac{1}{2}$ to denote the stage after aggregation and $tE+\frac{1}{2}\PHI$ to denote the stage after local adaptation. In Assumption~\ref{asm:smooth} and Assumption~\ref{asm:strong_cvx}, we characterize the smoothness, bound on the variance of stochastic gradients and convexity of each $F_N$. In Assumption~\ref{asm:sgd_norm}, the norm of stochastic gradients is bounded, which is commonly used in many FL algorithms together with Assumption~\ref{asm:strong_cvx} to prove the global convergence~\citep{fedskip,noniid2}. An existing study points out that there is a contradiction between them~\citep{contradiction1,contradiction2}. Therefore, we show the concrete assumption description in Assumption~\ref{asm:sgd_norm_correct} and convergence guarantee without bounded norm of stochastic gradients in Theorem~\ref{thm:global_correct} and Theorem~\ref{thm:local_correct}. In Assumption~\ref{asm:heter}, the heterogeneity is reflected in the distance between local optimum $W_k^*$ and global optimum $\W^*$ and the loss deviation before and after aggregation, which is bounded by $\Gamma_1$ and $\Gamma_2$ respectively.

\begin{assumption}[L-smooth and bounded variance of stochastic gradients] 
$F_1, \cdots, F_N$ are $L$-smooth:
$$
\forall u, \forall v, 1\leq k \leq N, F_k(u) \leq F_k(v)+(u-v)^T \nabla F_k(v)+\frac{L}{2}\|u-v\|_2^2,
$$
and their variance of stochastic gradients is bounded:

\begin{equation}
\forall t\geq0, 1\leq k \leq N, \frac{1}{2}\leq \tau \leq E, \mathbb{E} \| \nabla F_k(W_k^{tE+\tau},\xi_k^{tE+\tau}) - \nabla F_k(W_k^{tE+\tau})\|^2 \le \sigma_k^2. \xi = \{\mathbf{x}, y\} \label{eq:smooth}
\end{equation}
\label{asm:smooth}
\end{assumption}

\begin{assumption}[$\mu$-strongly convex] 
$F_1, \cdots, F_N$ are $\mu$-strongly convex:
\begin{equation}
    \forall u, \forall v, 1\leq k \leq N, F_k(u) \geq F_k(v)+(u-v)^T \nabla F_k(v)+\frac{\mu}{2}\|u-v\|_2^2
\label{eq:strong_cvx}
\end{equation}

\label{asm:strong_cvx}
\end{assumption}

\begin{assumption}[Bounded norm of stochastic gradients] 
The expected squared norm of stochastic gradients is bounded:
$$\forall t\geq0, 1\leq k \leq N, \frac{1}{2}\leq \tau \leq E, \mathbb{E} \left\| \nabla F_k(\W_k^{tE+\tau},b_k^{tE+\tau}) \right\|^2  \le G^2.
$$
\label{asm:sgd_norm}
\end{assumption}

\begin{assumption}[Bounded heterogeneity] \label{asm:heter}
The deviation between local and global optimum and the deviation between local and global loss before and after aggregation are both bounded:
$$
\forall t\geq0, 1\leq k \leq N, \|\W_k^*-\W^*\| \leq \Gamma_1 \And \|\nabla F_k^{tE}-\nabla F_k^{t E+\frac{1}{2}}\|_2 \leq \Gamma_2. 
$$
\end{assumption}

\begin{assumption} [Correct bounded norm of stochastic gradients~\citep{contradiction1,contradiction2}] \label{asm:sgd_norm_correct}
Let Assumptions \ref{asm:smooth} and \ref{asm:strong_cvx} hold. Then the expected squared norm of the stochastic gradient is bounded by:

$$\mathbb{E} ||\nabla F_k (\W_k^{tE+\tau}, \xi_k^{tE+\tau})||^2 \leq 4 L \kappa \left[ F_k(\W_k^{tE+\tau}) - F_k(\W_k^*)\right] + G_k, $$
$$\text{ where } \kappa = \frac L \mu \text{ and } G_k = 2\mathbb{E} || \nabla F_k (\W_k^*, \xi_k^{tE+\tau})||^2
$$
\label{eq:new_bounded_asm}
\end{assumption}

\begin{lemma}[Results of one step SGD~\citep{fedproto}]\label{lem:sgdlocal} 
Let Assumption 1 hold. From the beginning of communication round $t+1$ to the last local update step, the loss function of an arbitrary client can be bounded as:
$$
\mathbb{E}[\mathcal{F}_k^{(t+1) E}] \leq \mathcal{F}_k^{t E+\frac{1}{2}\phi_k}-(\eta-\frac{L \eta^2}{2}) \sum_{e=\frac{1}{2}\phi_k}^{E-1}\|\nabla \mathcal{F}_k^{t E+e}\|_2^2+\frac{L E \eta^2}{2} \sigma^2 .
$$
\end{lemma}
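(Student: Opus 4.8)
The plan is the textbook one-round descent estimate for local SGD: establish a single-step inequality from $L$-smoothness and bounded gradient variance, then telescope it over the $E$ local iterations of round $t{+}1$. Write the local updates of round $t{+}1$ as $\W_k^{tE+e+1}=\W_k^{tE+e}-\eta\,\nabla F_k(\W_k^{tE+e},\xi_k^{tE+e})$ for $e=\tfrac12\phi_k,\dots,E-1$, started from the post-aggregation/post-adaptation model $\W_k^{tE+\frac12\phi_k}$ (the superscript $\tfrac12\phi_k$ being the notational marker for that state, with the sum over $e$ ranging over exactly the $E$ local steps of the round).

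First I would apply $L$-smoothness from Assumption~\ref{asm:smooth} with $u=\W_k^{tE+e+1}$, $v=\W_k^{tE+e}$ to get
$$
F_k(\W_k^{tE+e+1}) \le F_k(\W_k^{tE+e}) - \eta\,\big\langle \nabla F_k(\W_k^{tE+e}),\, \nabla F_k(\W_k^{tE+e},\xi_k^{tE+e})\big\rangle + \tfrac{L\eta^2}{2}\,\big\|\nabla F_k(\W_k^{tE+e},\xi_k^{tE+e})\big\|^2 .
$$
Taking the expectation conditioned on $\W_k^{tE+e}$ and using unbiasedness of the minibatch gradient, the inner-product term becomes $-\eta\,\|\nabla F_k(\W_k^{tE+e})\|^2$; for the last term I would use the bias--variance identity $\mathbb{E}\|\nabla F_k(\W,\xi)\|^2 = \|\nabla F_k(\W)\|^2 + \mathbb{E}\|\nabla F_k(\W,\xi)-\nabla F_k(\W)\|^2$ and bound the variance by $\sigma^2$ via Eq.~\eqref{eq:smooth} of Assumption~\ref{asm:smooth} (taking $\sigma\ge\sigma_k$). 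This gives the per-step bound
$$
\mathbb{E}\big[F_k(\W_k^{tE+e+1})\big] \le \mathbb{E}\big[F_k(\W_k^{tE+e})\big] - \Big(\eta-\tfrac{L\eta^2}{2}\Big)\,\mathbb{E}\big\|\nabla F_k^{tE+e}\big\|^2 + \tfrac{L\eta^2}{2}\,\sigma^2 .
$$

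Finally I would sum this over $e=\tfrac12\phi_k,\dots,E-1$: the $F_k$ terms telescope (peeling off the conditioning one step at a time via the tower property), the gradient-norm terms aggregate into $\sum_{e=\frac12\phi_k}^{E-1}\mathbb{E}\|\nabla F_k^{tE+e}\|^2$, and the $\tfrac{L\eta^2}{2}\sigma^2$ contribution appears once per iteration, i.e.\ $E$ times, producing $\tfrac{LE\eta^2}{2}\sigma^2$. Identifying $\mathbb{E}[F_k^{tE+\frac12\phi_k}]$ with the starting value $F_k^{tE+\frac12\phi_k}$ (deterministic given the round) yields exactly the claimed inequality.

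This lemma is essentially verbatim from~\citep{fedproto}, so there is no substantive obstacle; the only items needing care are (i) managing the nested conditional expectations so the telescoping is legitimate, (ii) the index bookkeeping, where the marker $tE+\tfrac12\phi_k$ denotes the post-aggregation/adaptation state and the sum ranges over the $E$ local iterations, which is precisely what yields the factor $E$ in front of $\sigma^2$, and (iii) the implicit unbiasedness of minibatch sampling, $\mathbb{E}[\nabla F_k(\W,\xi)\mid\W]=\nabla F_k(\W)$. Note we need neither strong convexity (Assumption~\ref{asm:strong_cvx}) nor the bounded-gradient-norm assumption here, and no restriction on $\eta$ is required since the coefficient $(\eta-L\eta^2/2)$ is kept symbolic.
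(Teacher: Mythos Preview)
Your proposal is correct and is exactly the standard descent-lemma argument that underlies this result: $L$-smoothness gives a one-step quadratic upper bound, conditional expectation plus unbiasedness reduces the cross term to $-\eta\|\nabla F_k\|^2$, the bias--variance split bounds the stochastic-gradient norm, and telescoping over the $E$ local iterations yields the stated inequality. The paper does not supply its own proof of this lemma---it is imported verbatim from~\citep{fedproto}---so your derivation is precisely what the cited reference would contain, and your remarks on (i) nested conditioning, (ii) the index marker $tE+\tfrac12\phi_k$ standing for the post-aggregation/adaptation state with $E$ subsequent local steps, and (iii) the implicit unbiasedness assumption, are all on point.
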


\begin{lemma}[Results of one step SGD~\citep{fedskip,them_noniid}]\label{lem:conv_main_global} 
Assume Assumption 1 holds. If $\eta_t \leq \frac{1}{4 L}$, we have
$$
\begin{aligned}
\mathbb{E}\|W^{t+1}-W^{\star}\|^2 \leq & (1-\eta_t \mu) \mathbb{E}\|W^{t }-W^{\star}\|^2+6 L \eta_t^2 \Gamma \\
& +\eta_\tau^2 \mathbb{E}\|\mathbf{g}_\tau-\overline{\mathbf{g}}_\tau\|^2+2 \mathbb{E} \sum_{k=1}^N p_k\|W^t-W_k^{tE}\|^2,
\end{aligned}
$$
where $\Gamma=F^*-\sum_{k=1}^N p_k F_k^{\star} \geq 0$
\end{lemma}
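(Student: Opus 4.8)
The plan is to prove Lemma~\ref{lem:conv_main_global} by the standard one-step descent analysis for FedAvg-type updates, carried along the virtual averaged iterate; this is the argument behind the classical non-i.i.d.\ FedAvg convergence bound~\citep{them_noniid}. Although the statement cites only Assumption~\ref{asm:smooth}, the contraction factor $1-\eta_t\mu$ also requires the $\mu$-strong convexity of Assumption~\ref{asm:strong_cvx}, which I will use.

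First I would introduce the aggregated gradients $\overline{\mathbf g}_t=\sum_{k=1}^{N}p_k\nabla F_k(W_k^{tE})$ and its stochastic version $\mathbf g_t=\sum_{k=1}^{N}p_k\nabla F_k(W_k^{tE},b_k^{tE})$, so that the averaged update reads $W^{t+1}=W^t-\eta_t\mathbf g_t$. Writing
$$\|W^{t+1}-W^{\star}\|^2=\bigl\|W^t-\eta_t\overline{\mathbf g}_t-W^{\star}\bigr\|^2-2\eta_t\bigl\langle W^t-\eta_t\overline{\mathbf g}_t-W^{\star},\,\mathbf g_t-\overline{\mathbf g}_t\bigr\rangle+\eta_t^2\|\mathbf g_t-\overline{\mathbf g}_t\|^2$$
and taking expectation over the mini-batches, the middle term vanishes by unbiasedness of the stochastic gradients, which isolates the term $\eta_t^2\mathbb{E}\|\mathbf g_t-\overline{\mathbf g}_t\|^2$ appearing in the claim (the $\eta_\tau$ in the statement being a slip for $\eta_t$).

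Next I would expand $\|W^t-\eta_t\overline{\mathbf g}_t-W^{\star}\|^2=\|W^t-W^{\star}\|^2-2\eta_t\langle W^t-W^{\star},\overline{\mathbf g}_t\rangle+\eta_t^2\|\overline{\mathbf g}_t\|^2$. I would bound $\|\overline{\mathbf g}_t\|^2\le\sum_k p_k\|\nabla F_k(W_k^{tE})\|^2\le 2L\sum_k p_k\bigl(F_k(W_k^{tE})-F_k^{\star}\bigr)$ using Jensen and $L$-smoothness, and split $W^t-W^{\star}=(W^t-W_k^{tE})+(W_k^{tE}-W^{\star})$ inside the inner product: Young's inequality on the $\langle W^t-W_k^{tE},\nabla F_k\rangle$ piece produces the drift term $2\sum_k p_k\|W^t-W_k^{tE}\|^2$ (plus further $O(L\eta_t^2)(F_k(W_k^{tE})-F_k^{\star})$ residuals), while $\mu$-strong convexity on the $\langle W_k^{tE}-W^{\star},\nabla F_k\rangle$ piece gives $-2\eta_t\bigl(F_k(W_k^{tE})-F_k(W^{\star})\bigr)-\eta_t\mu\|W_k^{tE}-W^{\star}\|^2$. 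The point to exploit here is that Jensen again gives $\sum_k p_k\|W_k^{tE}-W^{\star}\|^2\ge\|W^t-W^{\star}\|^2$, so the strong-convexity term contracts the distance to the optimum with no extra drift cost and yields exactly the factor $1-\eta_t\mu$.

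Finally I would collect the function-value terms: they bundle into $-2\eta_t\bigl(\sum_k p_kF_k(W_k^{tE})-F(W^{\star})\bigr)$ plus $O(L\eta_t^2)\sum_k p_k\bigl(F_k(W_k^{tE})-F_k^{\star}\bigr)$. Writing $\sum_k p_kF_k(W_k^{tE})-F(W^{\star})=\sum_k p_k\bigl(F_k(W_k^{tE})-F_k^{\star}\bigr)-\Gamma$ with $\Gamma=F^{\star}-\sum_k p_kF_k^{\star}=\sum_k p_k\bigl(F_k(W^{\star})-F_k^{\star}\bigr)\ge 0$ (nonnegative because each $F_k^{\star}=\min F_k\le F_k(W^{\star})$), and invoking the step-size bound $\eta_t\le 1/(4L)$ to ensure the net coefficient on the nonnegative quantity $\sum_k p_k(F_k(W_k^{tE})-F_k^{\star})$ is nonpositive, that quantity can be discarded and the leftover collapses to the residual $6L\eta_t^2\Gamma$ of the statement. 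I expect this last assembly --- carrying every $O(L\eta_t^2)$ term through the gradient-norm and Young's-inequality steps and checking that $\eta_t\le 1/(4L)$ is exactly what is needed to absorb them into the single $6L\eta_t^2\Gamma$ term without disturbing the $1-\eta_t\mu$ contraction --- to be the one genuinely fiddly part; the rest is routine.
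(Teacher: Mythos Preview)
The paper does not actually prove this lemma; it is stated as a cited result from \citep{fedskip,them_noniid} and used as a black box in the proof of Theorem~\ref{thm:global}. Your proposal correctly reconstructs the standard one-step descent argument from \citep{them_noniid} (Li et al., ICLR 2020), including the key observations that strong convexity is needed for the $1-\eta_t\mu$ contraction (despite the statement citing only Assumption~\ref{asm:smooth}) and that $\Gamma\ge 0$ follows from $F_k^\star\le F_k(W^\star)$. Since there is no in-paper proof to compare against, the relevant benchmark is the cited source, and your outline matches it.
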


\begin{lemma}[Math tool from Stich~\citep{stich}]Assume there are two non-negative sequences $\left\{r_\tau\right\},\left\{s_\tau\right\}$ that satisfy the relation
$$
r_{\tau+1} \leq\left(1-\alpha \gamma_\tau\right) r_\tau-b \gamma_\tau s_\tau+c \gamma_\tau^2
$$
for all $\tau \geq 0$ and for parameters $b>0, a>0, c>0$ and non-negative step sizes $\left\{\gamma_\tau\right\}$ with $\gamma_\tau \leq \frac{1}{d}$ for a parameter $d \geq$ $a, d>0$. Then, there exists weights $\omega_\tau \geq 0, W_T:=\sum_{\tau=0}^T \omega_\tau$, such that:
$$
\frac{b}{W_T} \sum_{\tau=0}^T s_\tau \omega_\tau+a r_{T+1} \leq 32 d r_0 \exp \left[-\frac{a T}{2 d}\right]+\frac{36 c}{a T}
$$
\end{lemma}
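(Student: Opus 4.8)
This statement is the master recursion of~\citep{stich}, so one option is simply to invoke it; to reprove it, the plan is to turn the one-step contraction into a telescoping sum by an exponential reweighting and then tune the step size. First I would rewrite the hypothesis (reading $\alpha = a$) as $b\gamma_\tau s_\tau \le (1-a\gamma_\tau) r_\tau - r_{\tau+1} + c\gamma_\tau^2$, isolating $s_\tau$ so that the $r$-terms form a near-telescope. The lemma leaves us free to also pick the schedule $\{\gamma_\tau\}$ subject to $\gamma_\tau \le 1/d$, and a constant schedule $\gamma_\tau \equiv \gamma$ will suffice. Setting $\omega_\tau = (1-a\gamma)^{-(\tau+1)}$, so that $(1-a\gamma)\,\omega_\tau = \omega_{\tau-1}$ under the convention $\omega_{-1} = 1$, and multiplying the rearranged inequality by $\omega_\tau$ gives $b\gamma\,\omega_\tau s_\tau \le \omega_{\tau-1} r_\tau - \omega_\tau r_{\tau+1} + c\gamma^2 \omega_\tau$.

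Summing over $\tau = 0, \dots, T$, the $r$-terms telescope to $r_0 - \omega_T r_{T+1}$, giving $b\gamma \sum_{\tau=0}^{T} \omega_\tau s_\tau \le r_0 - \omega_T r_{T+1} + c\gamma^2 W_T$ with $W_T = \sum_{\tau=0}^{T} \omega_\tau$. I would then divide by $\gamma W_T$ and use the elementary inequality $\omega_T / W_T \ge a\gamma$, which is immediate from summing the geometric series defining $W_T$, to absorb $\omega_T r_{T+1}/(\gamma W_T) \ge a\, r_{T+1}$ onto the left-hand side. This yields the intermediate estimate $\frac{b}{W_T}\sum_{\tau=0}^{T} \omega_\tau s_\tau + a\, r_{T+1} \le \frac{r_0}{\gamma W_T} + c\gamma$.

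It remains to control $1/(\gamma W_T)$ and optimize over $\gamma$. Since $1 - x \le e^{-x}$, we have $W_T \ge \omega_T = (1-a\gamma)^{-(T+1)} \ge e^{a\gamma T}$, so the right side is at most $\frac{r_0}{\gamma} e^{-a\gamma T} + c\gamma$. I would take $\gamma = \min\{1/d,\ \ell/(aT)\}$ with $\ell = \ln(\max\{2,\ a^2 r_0 T^2/c\})$ and split into cases: when $\gamma = 1/d$ the exponential factor dominates and bounds the first term by a numerical multiple of $d\, r_0\, e^{-aT/(2d)}$, while $c\gamma = c/d$ is dominated by $c/(aT)$ up to the logarithm; when $\gamma = \ell/(aT)$ the factor $e^{-a\gamma T} = e^{-\ell}$ cancels the $1/\gamma$ growth, so both residual terms collapse to $\widetilde{O}(c/(aT))$. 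Collecting the worst numerical constants over the two branches, and using the clipping in $\ell$ to keep $\gamma > 0$ and the logarithm's argument at least $1$, produces $32\, d\, r_0 \exp(-aT/(2d)) + 36\, c/(aT)$.

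The main obstacle is not conceptual — the telescoping is routine — but the bookkeeping of absolute constants: the factor $2$ in the exponent $e^{-aT/(2d)}$ and the precise values $32$ and $36$ are sensitive to exactly which logarithm is clipped and to whether one uses $e^{-x} \le 1/x$ or $e^{-x} \le 1$ at each step, so the edge cases ($\gamma W_T$ small, $T$ small, $a^2 r_0 T^2 < c$) have to be handled separately. If one insisted on a genuinely decreasing schedule $\{\gamma_\tau\}$ instead of the constant one, the only modification is to take $\omega_\tau = \prod_{j=0}^{\tau}(1-a\gamma_j)^{-1}$ and re-verify the two identities $(1-a\gamma_\tau)\omega_\tau = \omega_{\tau-1}$ and $\omega_T/W_T \ge a\gamma_T$; both still hold, and the rest of the argument goes through verbatim.
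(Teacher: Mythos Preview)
The paper does not actually prove this lemma: it is stated as a quoted ``math tool'' from Stich~\citep{stich} and then used as a black box in the proof of Theorem~\ref{thm:global}, with no argument given. Your proposal is therefore not competing with any proof in the paper; what you have written is a faithful reconstruction of Stich's own argument (exponential reweighting $\omega_\tau=(1-a\gamma)^{-(\tau+1)}$, telescoping, the geometric-series bound $\omega_T/W_T\ge a\gamma$, and the two-regime choice of constant step size), and the steps you outline are correct. Since the paper's ``proof'' is simply the citation, either invoking the reference directly or supplying the sketch you gave is appropriate.
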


\begin{lemma}[Bounding the variance~\citep{fedskip,them_noniid}]\label{lem:conv_variance}
Assume Assumption~\ref{asm:smooth} holds. It follows that
$$
\mathbb{E} \left [\left\|\mathbf{g}_{\tau}-\overline{\mathbf{g}}_{\tau}\right\|^{2}\right]  \leq \sum_{k=1}^N p_k^2 \sigma_k^2.
$$
\end{lemma}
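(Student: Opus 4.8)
The plan is to expand the squared norm of the aggregation noise into diagonal and cross terms and to kill the cross terms using the independence of the local sampling across clients. Recall that $\mathbf{g}_\tau=\sum_{k=1}^N p_k\nabla F_k(\W_k^{tE+\tau},\xi_k^{tE+\tau})$ is the aggregated stochastic gradient and $\overline{\mathbf{g}}_\tau=\sum_{k=1}^N p_k\nabla F_k(\W_k^{tE+\tau})$ is its conditional mean, so that
$$
\mathbf{g}_\tau-\overline{\mathbf{g}}_\tau=\sum_{k=1}^N p_k X_k,\qquad X_k:=\nabla F_k(\W_k^{tE+\tau},\xi_k^{tE+\tau})-\nabla F_k(\W_k^{tE+\tau}),
$$
where each $X_k$ is the stochastic-gradient noise on client $k$. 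Conditionally on the current iterates, each $X_k$ has mean zero because the stochastic gradients are unbiased, and $X_1,\dots,X_N$ are mutually independent since client $k$ draws its mini-batch $\xi_k^{tE+\tau}$ independently of the others.

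First I would expand $\mathbb{E}\|\sum_k p_k X_k\|^2=\sum_k p_k^2\,\mathbb{E}\|X_k\|^2+\sum_{k\neq j}p_k p_j\,\mathbb{E}\langle X_k,X_j\rangle$. For $k\neq j$, conditioning on the iterates and using independence together with the zero-mean property gives $\mathbb{E}\langle X_k,X_j\rangle=\langle\mathbb{E}X_k,\mathbb{E}X_j\rangle=0$, so only the diagonal terms survive. Then I would apply the variance bound of Assumption~\ref{asm:smooth} (Eq.~\eqref{eq:smooth}), namely $\mathbb{E}\|X_k\|^2\le\sigma_k^2$, term by term, to obtain $\mathbb{E}\|\mathbf{g}_\tau-\overline{\mathbf{g}}_\tau\|^2\le\sum_{k=1}^N p_k^2\sigma_k^2$, which is exactly the claim.

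The only point that deserves care — and the single ``obstacle'' worth flagging — is the vanishing of the cross terms: it relies on the sampling noises being independent across clients and unbiased, which is implicit in the probabilistic model behind Assumption~\ref{asm:smooth} and should be stated explicitly. Everything else is a one-line application of the assumption; in particular, the global ETF fixing and the local adaptation matrices $\PHI_k$ of FedGELA play no role here, since the lemma bounds only the variance of the aggregated stochastic gradient at a fixed iterate.
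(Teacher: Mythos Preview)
Your proposal is correct and is exactly the standard argument used in the cited references~\citep{fedskip,them_noniid}: write $\mathbf{g}_\tau-\overline{\mathbf{g}}_\tau$ as a weighted sum of zero-mean, mutually independent client noises, expand the squared norm, drop the cross terms by independence and unbiasedness, and bound each diagonal term by $\sigma_k^2$ via Assumption~\ref{asm:smooth}. The paper does not supply its own proof for this lemma but defers to those references, so there is nothing to compare beyond noting that your derivation matches theirs.
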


\begin{lemma} \label{lemma:global_phi}
(Bounding the divergence of \{$W_k^{tE}$\}~\citep{them_noniid}.). Assume Assumption~\ref{asm:sgd_norm}, that $\eta_t$ is non-increasing and $\eta_t \leq 2\eta_{t+E}$ for all $t \geq 0$. It follows that:
$$
\mathbb{E} \left [   \sum_{k=1}^N p_k \|\W^t-\W_k^{tE}\|^2  \right] \leq 4\eta_t^2(E-1)^2 G^2 + \sum_{k=1}^N p_k ||\PHI_k\W^L - \W^L||
$$

\end{lemma}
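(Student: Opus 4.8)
The plan is to adapt the standard bounded client-drift argument for FedAvg on non-iid data (as in~\citep{them_noniid}) to the present setting, where the only new ingredient is that each client carries a fixed, locally rescaled classifier $\W_k^L=\PHI_k\W^L$ that the SGD iterations never touch. First I would split the parameter vector into the backbone block $\gH$ and the classifier block $\W^L$, so that for every client $k$ and every round $t$,
\[
\|\W^t-\W_k^{tE}\|^2=\|\gH^t-\gH_k^{tE}\|^2+\|\W^L-\PHI_k\W^L\|^2 ,
\]
because the classifier is held constant throughout local training and the aggregation rule is designed so that $\sum_{k}p_k\PHI_k=\mathbf{1}$, hence the global classifier block is exactly $\W^L$. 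Multiplying by $p_k$ and summing over $k$, the classifier part already contributes the claimed additive offset $\sum_{k}p_k\|\PHI_k\W^L-\W^L\|$ (up to keeping it unsquared, matching the stated form), so the remaining work concerns only the backbone block.

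For the backbone I would follow the classical three-step recipe. Introduce the virtual averaged backbone $\overline{\gH}^{\,s}=\sum_k p_k\gH_k^{s}$ at every local step $s$; it coincides with the true global backbone only at synchronization steps. Let $t_0$ be the most recent synchronization, so $\gH_k^{t_0}=\overline{\gH}^{\,t_0}$ for all $k$ and at most $E-1$ local updates have been applied since $t_0$. Step one: by the identity $\sum_k p_k\|x_k-a\|^2=\sum_k p_k\|x_k-\overline{x}\|^2+\|\overline{x}-a\|^2$ with $a=\overline{\gH}^{\,t_0}$, one gets $\sum_k p_k\|\overline{\gH}^{\,s}-\gH_k^{s}\|^2\le \sum_k p_k\|\gH_k^{s}-\overline{\gH}^{\,t_0}\|^2$. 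Step two: unroll the local SGD recursion $\gH_k^{s}-\overline{\gH}^{\,t_0}=-\sum_{\ell=t_0}^{s-1}\eta_\ell\,\nabla F_k(\W_k^{\ell},\xi_k^{\ell})$ (the stochastic gradient moves only the backbone block), apply Cauchy--Schwarz over the at most $E-1$ summands, take expectations, and invoke Assumption~\ref{asm:sgd_norm} to bound each $\mathbb{E}\|\nabla F_k\|^2\le G^2$, giving a bound of order $(E-1)^2\eta_{t_0}^2 G^2$. Step three: since $\eta_t$ is non-increasing and $\eta_t\le 2\eta_{t+E}$, and $t_0$ lies within $E$ steps of the current state, $\eta_{t_0}\le 2\eta_{t}$, hence $\eta_{t_0}^2\le 4\eta_t^2$; combining yields $\mathbb{E}\sum_k p_k\|\gH^t-\gH_k^{tE}\|^2\le 4\eta_t^2(E-1)^2 G^2$. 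Adding back the deterministic classifier offset finishes the estimate.

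I do not expect a deep obstacle: the argument is a block-wise application of a known drift lemma, and the local adaptation $\PHI_k$ enters only as a deterministic, stochastic-gradient-independent perturbation of the classifier block, so it neither interacts with the variance-reduction identity nor with the step-size bookkeeping. The one point that genuinely needs care is verifying that the averaged classifier block is \emph{literally} $\W^L$, so that the same center $\overline{\W}$ can be used both in the Pythagorean split and as the target in step one; this is precisely what the constraint $\sum_k p_k\PHI_k=\mathbf{1}$ built into the definition of $\PHI_k$ guarantees, and I would state it explicitly as the first line of the proof.
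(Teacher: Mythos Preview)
Your proposal is correct and follows essentially the same route as the paper's proof: both split the parameter into the fixed classifier block (contributing the deterministic offset $\sum_k p_k\|\PHI_k\W^L-\W^L\|$) and the backbone block, then invoke the standard drift bound from~\citep{them_noniid} on the backbone via SGD unrolling, Assumption~\ref{asm:sgd_norm}, and the step-size relation $\eta_{tE}\le 2\eta_t$. Your write-up is in fact more careful than the paper's, which reuses the symbol $\W$ ambiguously in its first displayed equality and silently drops the square on the classifier term in the last line---you explicitly flag both issues, and your observation that $\sum_k p_k\PHI_k=\mathbf{1}$ is what makes the aggregated classifier equal to $\W^L$ is a point the paper's proof leaves implicit.
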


\begin{proof}
    Different from the Lemma in \citep{assumption3}, we consider the ETF structure in $W$. Therefore, for any $t > 0$ and $k=1,2,\cdots, N$, we use the fact that $\eta_t$ is non-increasing and $\eta_{tE} \leq 2 \eta_t$, then

\begin{align}
    \mathbb{E} \sum_{k=1}^n p_k \|\W^t-\W_k^{tE}\|^2  = & \mathbb{E} \sum_{k=1}^N p_k (\|\W^t-\W_k^{tE}\|^2 + ||\PHI_k\W^L - \W^L||^2) \\
     \mathop{\leq} \limits_{\text{SGD}} &\sum_{k=1}^N p_k \left( \mathbb{E} \sum_{\tau=2}^{E} (E-1) \| \eta_{\tau} \nabla F_k(\W_k^{t\tau}, \xi_k^{t\tau}) \|^2 + ||\PHI_k\W^L - \W^L||^2 \right ) \\
     \mathop{\leq}\limits_{\text{Assumption~\ref{asm:sgd_norm}}} & \sum_{k=1}^N p_k \left(\eta_{tE}^2 (E-1)^2 G^2 + ||\PHI_k\W^L - \W^L||^2 \right ) \\
     \mathop{\leq} \limits_{\eta_{tE} \leq 2 \eta_t}  & 4\eta_t^2(E-1)^2 G^2 + \sum_{k=1}^N p_k ||\PHI_k\W^L - \W^L||.
\end{align}
\end{proof}

\subsection{Proof of Theorem 1}

\begin{proof}
Similar to~\citep{them_noniid}, from Lemma~\ref{lem:conv_main_global}, Lemma~\ref{lemma:global_phi} and Lemma~\ref{lem:conv_variance}, let $\gamma=\max\{8\kappa, E\}$ and $\eta_{tE} \leq 2\eta_t$, it follows that
$$
\mathbb{E}[F(\W_g)] - F(\W^*) \leq \frac{\kappa}{\gamma+T-1} \left (  \frac{2B}{\mu} + \frac{\mu \gamma} {2} \mathbb{E} || \W^1 - \W^* ||^2 \right),
$$
which uses the same proof technique in \citep{them_noniid}. And we apply the cross-entropy loss for $F$, then $F(\W) = -\log[(\W^L)^T h_c^i]$ for class $c$ on sample $i$. Then we have
$$
\mathbb{E}[F(\W_g)] - F(\W^*) = \mathbb{E} \left[\log \frac{(\W^{L,*})^T h^{i,*}_{c}}{(\W_g^L)^T h^{i}_{c}} \right ].
$$
So Theorem~\ref{thm:global} is proved.

\end{proof}

\subsection{Proof of Theorem 2}
\begin{proof}

We start our proof from one step of SGD defined in Lemma~\ref{lem:sgdlocal}:
$$
\mathbb{E}[F_k^{(t+1) E}] \leq F_k^{t E+\frac{1}{2}\phi_k}-(\eta-\frac{L \eta^2}{2}) \sum_{e=\frac{1}{2}\phi_k}^{E-1}\|\nabla F_k^{t E+e}\|_2^2+\frac{L E \eta^2}{2} \sigma^2 .
$$
We can take apart the $F_k^{t E+\frac{1}{2} \PHI_k}$ and have the fact that:
$$
\begin{aligned}
\| F_k^{t E+\frac{1}{2} \PHI_k} \|& =\left\|F_k^{t E}+F_k^{t E+\frac{1}{2} \PHI_k}-F_k^{t E}\right\| \\
& \leqslant\left\|F_k^{t E}\right\|+\left\|F_k^{t E+\frac{1}{2} \PHI_k}-F_k^{t E}\right\| \\
& \leqslant \|F_k^{t E} \|+\| F_k^{t E+\frac{1}{2} \PHI_k}-F_k^{t E+\frac{1}{2}}+F_k^{t E+\frac{1}{2}}-F_k^{t E} \| \\
& \leqslant\left\|F_k^{t E}\right\|+\left\|F_k^{t E+\frac{1}{2} \PHI_k}-F_k^{t E+\frac{1}{2}}\right\|+\| F_k^{t E+\frac{1}{2}}-F_k^{t E} \| \\
& \leqslant\left\|F_k^{tE} \|+L\right\| \W_k^{t E+\frac{1}{2} \PHI_k}-\W_k^{t E+\frac{1}{2}} \|+\Gamma_1 \\
& =\left\|F_k^{t E}\right\|+L\left\|\PHI_k\W^L-\W^L\right\| +\Gamma_1
\end{aligned}
$$
Take it back to the original equation, therefore we have the:
$$
\mathbb{E}[F_k^{(t+1) E}] \leq\left\|F_k^{t E}\right\|-(\eta-\frac{L \eta^2}{2}) \sum_{e=\frac{1}{2}\phi_k}^{E-1}\|\nabla F_k^{t E+e}\|_2^2+\frac{L E \eta^2}{2} \sigma^2 +L\left\|\PHI_k\W^L-\W^L\right\|+\Gamma,
$$
By applying Assumption~\ref{asm:sgd_norm} that:$
\mathbb{E}\left\|\nabla F_k\left(u, b_k^\tau\right)\right\|^2 \leq G^2$, results will be:
$$
\mathbb{E}[F_k^{(t+1) E}] \leqslant F_k^{t E}-(\eta-\frac{L}{2} \eta^2) E G^2+\frac{L E \eta^2}{2} \sigma^2 +L\left\|\PHI_k\W^L-\W^L\right\|+\Gamma.
$$
Here we complete our proof.
\end{proof}

\subsection{Contradictory of the Assumptions and Correction}

\subsubsection{Contradictory on Assumption~\ref{asm:sgd_norm}.}

We will prove that if Assumptions \ref{asm:smooth} and \ref{asm:strong_cvx} hold, the stochastic gradients cannot be uniformly bounded. 

\begin{proof}
If Assumptions \ref{asm:smooth} and \ref{asm:strong_cvx} hold, which means $F_k$ is both \textit{L-smooth} and \textit{$\mu$-strong convex}, we have:
\begin{equation}
    2\mu[F_k(\W)-F_k(\W^*)] \leq \left | |\nabla F_k (\W)| \right |^2
    \label{eq:lower_bound_sgd}
\end{equation}
The proof of  \eqref{eq:lower_bound_sgd} will be given below. And under the false stochastic gradients uniformly bounded assumption \ref{asm:sgd_norm}, we have $\mathbb{E}[||\nabla F_k(\W_k^{tE+\tau}, \xi_k^{tE+\tau})||^2] \leq G^2$. So we get 
\begin{equation}\label{eq:proof_a4_upper}
    \begin{aligned}
        2\mu[F_k(\W)-F_k(\W^*)] &\leq \left | |\nabla F_k (\W)| \right |^2 \\&\leq ||\mathbb{E}[\nabla F_k(\W_k^{tE+\tau}, \xi_k^{tE+\tau})] ||^2 \\&\leq \mathbb{E}[||\nabla F_k(\W_k^{tE+\tau}, \xi_k^{tE+\tau})||^2] \\&\leq G^2
    \end{aligned}
\end{equation}
Therefore, we have the result that $F_k(\W) - F_k(\W^*) \leq \frac{G^2}{2\mu}$. Using the strong convex in \eqref{eq:strong_cvx} with $\W = \W^*$ that $\nabla F_k(\W^*) = 0$, we will have:
\begin{equation}\label{eq:proof_a4_low}
    F_k(\mathbf{v}) - F_k(\W^*) \geq (\mathbf{v} - \W^*)^T  \nabla F_k(\W^*) + \frac{\mu}{2} || \mathbf{v} - \W^* ||^2 = \frac{\mu}{2} || \mathbf{v} - \W^* ||^2
\end{equation}
By combining  \eqref{eq:proof_a4_low} and  \eqref{eq:proof_a4_upper}, it follows that
$$
\frac{G^2}{2\mu} \geq F_k(\W) - F_k(\W^*) \geq \frac{\mu}{2} || \W - \W^* ||^2,
$$
\begin{equation}\label{eq:proof_a4_final}
    || \W - \W^* ||^2 \leq \frac{G^2}{\mu^2}.
\end{equation}
where  \eqref{eq:proof_a4_final} is clearly wrong for sufficiently large $||\W - \W^*||^2$.

\end{proof}

\subsubsection{Proof of corrected Assumption~\ref{asm:sgd_norm_correct}.}

\begin{proof}
Note that:
\begin{align}
    ||a||^2 = & ||a-b+b||^2 \leq 2||a-b||^2 + 2||b||^2 \\
    &\implies \frac{1}{2}||a||^2 - ||b||^2 \leq ||a-b||^2 \label{eq:mean_inequality}
\end{align}
If Assumptions \ref{asm:smooth} and \ref{asm:strong_cvx} hold, combined with  \eqref{eq:mean_inequality} we have:
\begin{equation*}
\begin{aligned}
    \frac{1}{2}\mathbb{E}[||\nabla F_k(\W_k^{tE+\tau},\xi_k^{tE+\tau})||^2] - & \mathbb{E}[||\nabla F_k(\W^{*}_k, \xi_k^{tE+\tau})||^2] \\
    = \ & \mathbb{E}\left[  \frac{1}{2}||\nabla F_k(\W_k^{tE+\tau},\xi_k^{tE+\tau})||^2 - ||\nabla F_k(\W^{*}_k, \xi_k^{tE+\tau})||^2\right] \\
   \leq  \ & \mathbb{E}\left[ || \nabla F_k(\W_k^{tE+\tau}, \xi_k^{tE+\tau}) - \nabla F_k(\W^{*}_k, \xi_k^{tE+\tau}) ||^2 \right] \\ 
    \mathop{\leq}\limits_{\text{Eq.\eqref{eq:smooth}}} & L^2 ||\W_k^{tE+\tau} - \W^{*}_k||^2 \\ \mathop{\leq}\limits_{\text{Eq.\eqref{eq:strong_cvx}}} & \ \frac{2L^2}{\mu}[F_k(\W_k^{tE+\tau}, \xi_k^{tE+\tau}) - F_k(\W^{*}_k,\xi_k^{tE+\tau})] \\ 
    = \ & 2L\kappa [F_k(\W_k^{tE+\tau}, \xi_k^{tE+\tau}) - F_k(\W^{*}_k,\xi_k^{tE+\tau})]
\end{aligned}
\end{equation*}
So we get: $\mathbb{E}[|| \nabla F_k(\W_k^{tE+\tau}, \xi_k^{tE+\tau}) ||^2] \leq 4 L \kappa [F_k(\W_k^{tE+\tau}) - F_k(\W^{*}_k)] + G_k$.
\end{proof}

\subsubsection{Correction.}
In this part, we provide convergence results without the bounded norm of stochastic gradient defined in Assumption~\ref{asm:sgd_norm}. In Theorem~\ref{thm:global_correct} and Theorem~\ref{thm:local_correct}, we show the corrected results of global and local convergence, respectively.

\begin{theorem}[Global Convergence]\label{thm:global_correct}
If $F_1,...,F_N$ are all L-smooth, $\mu$-strongly convex, and the variance and norm of $\nabla F_1,...,\nabla F_N$ are bounded by $\sigma$ and $G$. Choose $\kappa=L / \mu$ and $\gamma =\frac {32}{k(\mu-k)} L^2 \kappa (E-1)^2-1$, for all classes $c$ and sample $i$, expected global representation by cross-entropy loss will converge to:

$$
\mathbb{E} \left[\log \frac{(\W^{L,*})^T h^{i,*}_{c}}{(\W_g^L)^T h^{i}_{c}} \right ] \leq \frac{\kappa}{\gamma+T-1} \left (  \frac{2B}{\mu} + \frac{\mu \gamma} {2} \mathbb{E} || \W^1 - \W^* ||^2 \right),
$$

where in FedGELA, $B = \sum_{k=1}^N (p_k^2 \sigma^2 + p_k ||\PHI_k\W^L - \W^L|| ) + 6 L \Gamma_1 + 8(E-1)^2G^2$ and $G=\sum_{k=1}^{K}p_k G_k = 2 \sum_{k=1}^K p_k \mathbb{E}[|| \nabla F_k(\W^{*}_k,\xi_k^{tE+\tau}) ||^2]$. Since $\W^{L}=\W^{L,*}$ and $(\W^{L,*})^T h^{i,*}_{c_i} \geq \mathbb{E}[(\W^L)^T h^{i}_{c_i}]$, $h^{i}_{c_i}$ will converge to $h^{i,*}_{c_i}$.
\end{theorem}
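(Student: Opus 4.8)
The plan is to replay the proof of Theorem~\ref{thm:global} line by line, but every place where that argument invokes the (inconsistent) uniform bound of Assumption~\ref{asm:sgd_norm} I replace it by the corrected bound of Assumption~\ref{asm:sgd_norm_correct} and then reabsorb the newly appearing function-value gaps into the contraction. Concretely, I would start from the one-step recursion of Lemma~\ref{lem:conv_main_global},
$$
\mathbb{E}\|\W^{t+1}-\W^*\|^2 \leq (1-\eta_t\mu)\,\mathbb{E}\|\W^t-\W^*\|^2 + 6L\eta_t^2\Gamma_1 + \eta_t^2\,\mathbb{E}\|\mathbf{g}_\tau-\overline{\mathbf{g}}_\tau\|^2 + 2\,\mathbb{E}\textstyle\sum_{k=1}^N p_k\|\W^t-\W_k^{tE}\|^2,
$$
keep the variance term controlled via Lemma~\ref{lem:conv_variance} ($\mathbb{E}\|\mathbf{g}_\tau-\overline{\mathbf{g}}_\tau\|^2\leq\sum_k p_k^2\sigma_k^2$), and re-derive the analogue of Lemma~\ref{lemma:global_phi}, which is the only place the gradient-norm assumption entered.

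The key new step is that divergence bound. In the proof of Lemma~\ref{lemma:global_phi}, the inequality $\mathbb{E}\|\eta_\tau\nabla F_k(\W_k^{t\tau},\xi_k^{t\tau})\|^2\leq\eta_{tE}^2 G^2$ is no longer available; instead Assumption~\ref{asm:sgd_norm_correct} gives $\mathbb{E}\|\nabla F_k(\W_k^{t\tau},\xi_k^{t\tau})\|^2\leq 4L\kappa[F_k(\W_k^{t\tau})-F_k(\W_k^*)]+G_k$. I would bound the function-value gap by $L$-smoothness, $F_k(\W_k^{t\tau})-F_k(\W_k^*)\leq\frac{L}{2}\|\W_k^{t\tau}-\W_k^*\|^2$, peel the local trajectory back to the round's starting point $\W^t$ (using that $\eta_t$ is non-increasing with $\eta_{tE}\leq 2\eta_t$), and invoke $\|\W_k^*-\W^*\|\leq\Gamma_1$ to express everything through $\|\W^t-\W^*\|^2$, $\Gamma_1$, and $G:=\sum_k p_k G_k = 2\sum_k p_k\,\mathbb{E}\|\nabla F_k(\W_k^*,\xi_k^{tE+\tau})\|^2$. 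Summing $E-1$ local steps reproduces the $(E-1)^2$ factor, and the $\frac{32}{k(\mu-k)}L^2\kappa(E-1)^2$ piece of $\gamma$ is exactly the constant needed so that, after moving the $\|\W^t-\W^*\|^2$ contribution of this divergence term to the left-hand side, the coefficient of $\mathbb{E}\|\W^t-\W^*\|^2$ remains a genuine contraction $(1-\tfrac{\eta_t(\mu-k)}{2})$, with $k\in(0,\mu)$ the fraction of strong convexity spent on absorbing the drift; the additive ETF term $\sum_k p_k\|\PHI_k\W^L-\W^L\|$ carries through unchanged as before.

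With the recursion now in the canonical form $r_{t+1}\leq(1-\eta_t\mu)r_t+B\eta_t^2$, where $B = \sum_{k=1}^N(p_k^2\sigma^2+p_k\|\PHI_k\W^L-\W^L\|)+6L\Gamma_1+8(E-1)^2G^2$, I would apply the Stich math tool on sequences $\{r_\tau\},\{s_\tau\}$ with the diminishing step size $\eta_t=\tfrac{2}{\mu(\gamma+t)}$, obtaining $\mathbb{E}[F(\W_g)]-F(\W^*)\leq\frac{\kappa}{\gamma+T-1}\big(\frac{2B}{\mu}+\frac{\mu\gamma}{2}\mathbb{E}\|\W^1-\W^*\|^2\big)$ after converting the distance bound to a loss bound via $L$-smoothness, exactly as in Theorem~\ref{thm:global}. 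The last substitution is cosmetic: plugging the cross-entropy loss $F(\W)=-\log[(\W^L)^Th_c^i]$ rewrites the left-hand side as $\mathbb{E}\big[\log\frac{(\W^{L,*})^Th_c^{i,*}}{(\W_g^L)^Th_c^i}\big]$, and since the classifier is frozen at the ETF ($\W^L=\W^{L,*}$) the loss bound forces $h_c^i\to h_c^{i,*}$.

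The main obstacle I anticipate is the per-round unrolling of the gaps $F_k(\W_k^{t\tau})-F_k(\W_k^*)$: unlike the uniform-bound case, these are coupled to the iterates and must be controlled recursively over $\tau=0,\dots,E-1$ without smuggling a bounded-gradient assumption back in. This is the standard SCAFFOLD-style difficulty, and it is precisely what dictates the form of $\gamma$; making the constants line up so the contraction survives --- i.e.\ that $\frac{32}{k(\mu-k)}L^2\kappa(E-1)^2$ is large enough to dominate the drift coefficient --- is the delicate but mechanical bookkeeping once the structure above is in place.
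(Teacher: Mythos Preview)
The paper does not actually provide a proof of Theorem~\ref{thm:global_correct}. After stating the result it only offers the remark ``Similar to Theorem~\ref{thm:global}, the variable $B$ \ldots'' and moves on; the accompanying Section~B.4 proves the contradiction of Assumption~\ref{asm:sgd_norm}, proves Assumption~\ref{asm:sgd_norm_correct} itself, and proves Theorem~\ref{thm:local_correct}, but Theorem~\ref{thm:global_correct} is left implicit. So there is no paper proof to compare against beyond the tacit instruction ``redo the Theorem~\ref{thm:global} argument with the corrected assumption.''

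Your plan is exactly that instruction made concrete, and it is more careful than anything the paper records. You correctly isolate Lemma~\ref{lemma:global_phi} as the single place where Assumption~\ref{asm:sgd_norm} enters, and your proposed repair --- replace the uniform bound by $4L\kappa[F_k(\W_k^{t\tau})-F_k(\W_k^*)]+G_k$, control the function-value gap via $L$-smoothness and the heterogeneity bound, and then choose $\gamma$ large enough to reabsorb the resulting $\|\W^t-\W^*\|^2$ term into the contraction --- is the standard route taken in the bounded-gradient-free analyses the paper cites (\cite{contradiction1,contradiction2}). The altered form of $\gamma$ in the statement (versus $\max\{8\kappa,E\}$ in Theorem~\ref{thm:global}) is the paper's only visible fingerprint of this reabsorption, and your reading of it is consistent with that. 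The remaining steps --- Lemma~\ref{lem:conv_variance}, the Stich tool, and the cross-entropy rewrite --- are identical to the proof of Theorem~\ref{thm:global}, which the paper does give. In short, your proposal fills in what the paper omits, and does so along the line the paper gestures at.
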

Similar to Theorem~\ref{thm:global}, the variable $B$ in Theorem~\ref{thm:global_correct} represents the impact of algorithmic convergence ($p_k^2 \sigma^2$), non-iid data distribution ($6 L \Gamma_1$), and stochastic optimization ($8(E-1)^2G^2$). The only difference between FedAvg, FedGE, and our FedGELA lies in the value of $B$ while others are kept the same. FedGE and FedGELA have a smaller $G$ compared to FedAvg because they employ a fixed ETF classifier that is predefined as optimal. FedGELA introduces a minor additional overhead ($p_k || \PHI_k\W^L - \W^L||$) on the global convergence of FedGE due to the incorporation of local adaptation to ETFs.

The cost might be negligible, as $\sigma$, $G$, and $\Gamma_1$ are defined on the whole model weights while $p_k \|\PHI_k\W^L-\W^L\|$ is defined on the classifier. To verify this, we conduct experiments in Figure~\ref{fig:method}(a), and as can be seen, FedGE and FedGELA have similar quicker speeds and larger classification angles than FedAvg.

\begin{theorem}[Local Convergence]\label{thm:local_correct}
If $F_1,...,F_N$ are L-smooth and the heterogeneity is bounded by $\Gamma_2$, clients' expected local loss satisfies:
$$
F_K(\W_k^{t E+\frac{1}{2} \PHI})-F_k^*(\W_k^*) \leq L\|\W_k^{t E+\frac{1}{2}}-\W^*\|+D,
$$
where in FedGELA, $D=\Gamma_2+\|\PHI_k\W^L-\W^L\| E_w$, which means the local convergence is highly related to global convergence and bounded by D.
\end{theorem}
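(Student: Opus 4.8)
The plan is to mirror the argument of Theorem~\ref{thm:local}, but instead of unrolling a one-step SGD recursion I would telescope the function values of $F_k$ through the aggregated (pre-adaptation) iterate $\W_k^{tE+\frac{1}{2}}=\{\gH^t,\W^L\}$ and the global optimum $\W^*$, and then control each resulting gap by $L$-smoothness of $F_k$ (used, as in the proof of Theorem~\ref{thm:local}, as a Lipschitz-type bound on the loss itself), together with the feature constraint $\|h^i\|^2\leq E_H$ and the ETF scaling $\W^L=\sqrt{E_w}\M$.

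Concretely, I would write
\[
F_k(\W_k^{tE+\frac{1}{2}\PHI})-F_k(\W_k^*)
=\underbrace{\big[F_k(\W_k^{tE+\frac{1}{2}\PHI})-F_k(\W_k^{tE+\frac{1}{2}})\big]}_{(\mathrm i)}
+\underbrace{\big[F_k(\W_k^{tE+\frac{1}{2}})-F_k(\W^*)\big]}_{(\mathrm{ii})}
+\underbrace{\big[F_k(\W^*)-F_k(\W_k^*)\big]}_{(\mathrm{iii})}.
\]
For $(\mathrm i)$, the two models share the same backbone $\gH^t$ and differ only in the classifier block, where $\W^L$ is replaced by $\PHI_k\W^L$; expanding the per-sample cross-entropy of Eq.~\eqref{eq:loss}, bounding the perturbed logits via $\|h^i\|^2\leq E_H$ and $\|\W^L_c\|=\sqrt{E_w}$, and applying $L$-smoothness exactly as in the derivation of the $L\|\PHI_k\W^L-\W^L\|$ term in Theorem~\ref{thm:local}, I would obtain a bound of the form $\|\PHI_k\W^L-\W^L\|\,E_w$. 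For $(\mathrm{ii})$, since $\W_k^{tE+\frac{1}{2}}$ is precisely the broadcast global model, $L$-smoothness gives $(\mathrm{ii})\leq L\|\W_k^{tE+\frac{1}{2}}-\W^*\|$. For $(\mathrm{iii})$, this is the local--global optimality gap $F_k(\W^*)-\min_{\W}F_k(\W)$, which is exactly what the heterogeneity assumption (Assumption~\ref{asm:heter}) bounds by $\Gamma_2$. Adding the three pieces yields $F_k(\W_k^{tE+\frac{1}{2}\PHI})-F_k(\W_k^*)\leq L\|\W_k^{tE+\frac{1}{2}}-\W^*\|+D$ with $D=\Gamma_2+\|\PHI_k\W^L-\W^L\|E_w$, and since Theorem~\ref{thm:global} (or Theorem~\ref{thm:global_correct}) forces $\W_k^{tE+\frac{1}{2}}\to\W^*$ in expectation, the expected local loss converges to within $D$ of $F_k(\W_k^*)$, establishing the asserted coupling between local and global convergence.

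The main obstacle I anticipate is making step $(\mathrm i)$ fully rigorous --- in particular, justifying why the classifier perturbation enters through the factor $E_w$ rather than through $L$ or $\sqrt{E_w}$; this requires carefully differentiating the cross-entropy in Eq.~\eqref{eq:loss} with respect to the classifier block and simultaneously invoking the normalized-feature constraint ($E_H=1$) and the ETF scaling, so that the residual is expressed as $\|\PHI_k\W^L-\W^L\|$ times a constant absorbing $E_w$. A secondary subtlety is the informal use of ``$L$-smooth'' as a global Lipschitz bound on $F_k$ in steps $(\mathrm i)$ and $(\mathrm{ii})$, and the precise reading of ``heterogeneity is bounded by $\Gamma_2$'' needed for step $(\mathrm{iii})$ (Assumption~\ref{asm:heter} states $\Gamma_2$ as a gradient deviation, so the optimality-gap reading must be reconciled with it); both conventions are already adopted in the proof of Theorem~\ref{thm:local}, so I would carry them over here for consistency.
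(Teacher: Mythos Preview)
Your decomposition is essentially the paper's own argument. The paper inserts only $\W^*$ (a two-term telescope), bounds $|F_k(\W_k^{tE+\frac{1}{2}\PHI})-F_k(\W^*)|\le L\|\W_k^{tE+\frac{1}{2}\PHI}-\W^*\|$ by ``$L$-smoothness'' used as a Lipschitz bound, applies the heterogeneity bound $\Gamma_2$ to $|F_k(\W^*)-F_k(\W_k^*)|$, and then splits the \emph{norm} $\|\W_k^{tE+\frac{1}{2}\PHI}-\W^*\|$ by the triangle inequality to peel off the classifier perturbation $\|\PHI_k\W^L-\W^L\|$. Your three-term telescope through both $\W_k^{tE+\frac{1}{2}}$ and $\W^*$ simply commutes the order of the Lipschitz step and the triangle inequality, so the skeleton is the same.

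The obstacles you anticipate are genuine, and in fact the paper does not resolve them either: its final displayed line reads $L\|\W_k^{tE+\frac{1}{2}}-\W^*\|+\Gamma_2+\|\PHI_k\W^L-\W^L\|$, silently dropping the $L$ in front of the classifier term and never producing the $E_w$ factor promised in the statement; and it invokes $\Gamma_2$ for the optimality gap $F_k(\W^*)-F_k(\W_k^*)$ even though Assumption~\ref{asm:heter} formally defines $\Gamma_2$ as a gradient deviation. So your caution about step~(i) and about the reading of $\Gamma_2$ is well placed---these are loose ends in the paper itself rather than flaws specific to your plan.
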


In Theorem~\ref{thm:local_correct}, only ``D'' is different on the convergence among FedAvg, FedGE, and FedGELA. The local convergence is highly related to global convergence and bounded by D. Adapting the local classifier will introduce additional cost of $L\left\|\PHI_k\W^L-\W^L\right\|$, which might limit the speed of local convergence. However, FedGELA might reach better local optimal by adapting the feature structure.  As introduced and verified in Figure~\ref{fig:method}~(c) in the main pape, the adapted structure expands the decision boundaries of existing major classes and better utilizes the feature space wasted by missing classes.
\begin{proof}
We can prove the theorem by inserting $\W_k^*$ and taking apart the local loss function:
$$
\begin{aligned}
& F_k(\W_k^{t E+\frac{1}{2} \phi})-F_k^*(\W_k^*) \\
& \leqslant\|F_k(\W_k^{t E+\frac{1}{2} \PHI})-F_k(\W^*)\|+\|F_k(\W^*)-F_k^*(\W_k^*)\| \\
& \leq L \|\W_k^{t E+\frac{1}{2} \PHI}-\W^*\|+\Gamma_2 \\
& =L\|\W_k^{t E+\frac{1}{2}}-\W^*\|+\Gamma_2+\|\PHI_k \W^L-\W^L\| \\
&
\end{aligned}
$$
Here we complete the proof. The last and the second last inequalities are derived from L-smooth and bounded heterogeneity defined in Assumption~\ref{asm:smooth} and Assumption~\ref{asm:heter} respectively.
\end{proof}

\subsection{Implementation of the Justification Experiments.}
To verify the contradiction of the local objective and global objective, we track the angle of classifier vectors between locally existing classes and locally missing classes in an individual client. We denote "existing angle" as the angle of classifier vectors belonging to classes that exist in a local client while "missing angle" is the angle of classifier vectors belonging to non-existing classes. In Sec.~\ref{sec:motivation} and shown in Figure~\ref{fig:motivation}, the tracking experiment is conducted on CIFAR10 with 10 clients under Dir~($\beta=0.1$). To verify the effectiveness and convergence of FedGELA, we track the angle between class means of locally existing classes and all classes in local and global, respectively. In Sec.~\ref{sec:analysis} and illustrated in Figure~\ref{fig:method}, the tracking experiment is conducted on CIFAR10 with 50 clients under Dir~($\beta=0.2$). In the experiment, we also provide more results under different situations illustrated in Figure~\ref{fig:more}. 

\section{Implementation of the Experiment}\label{ap:C}
\subsection{Model}
Resnet18 backbone~\citep{moon, fedproto,fedbabu,fedskip,fedlc,edgecloud,zhang2} is commonly used in many federated experiments on CIFAR10 and CIFAR100 datasets, here we also use it as the backbone for SVHN, CIFAR10 and CIFAR100. Since there are many algorithms that are feature-based like MOON and FedProto, therefore we use one layer of FC as the projection layer~(the hidden size is 84 for SVHN and CIFAR10 and 512 for CIFAR100) followed by classification head. For FedGELA and FedGE, the model is a backbone, projection layer with a simplex ETF or adapted ETF. For Fed-ISIC2019, we follow the setting of Flambly and use pre-trained Efficientnet b0 with the same projection layer~(the hidden size is 84) as the model. 

\begin{figure}[ht]
\centering  
\subfigure[CIFAR10~$(\beta=10000$).]{
\label{pic:c10iid}
\includegraphics[width=0.31\textwidth]{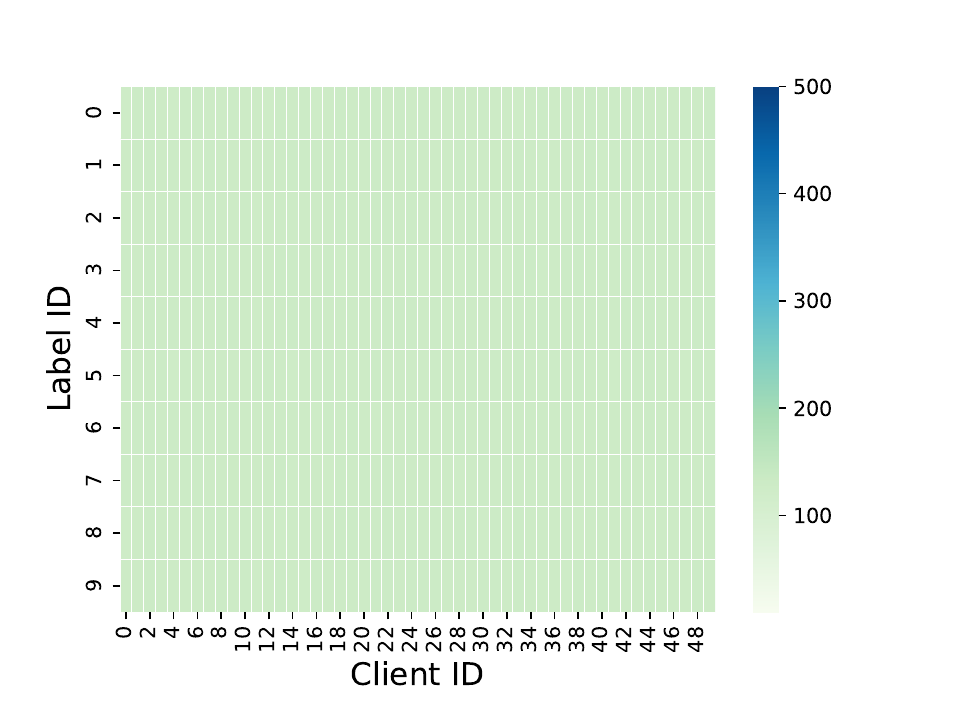}}
\subfigure[CIFAR10~$(\beta=0.5$).]{
\label{pic:c10b05}
\includegraphics[width=0.31\textwidth]{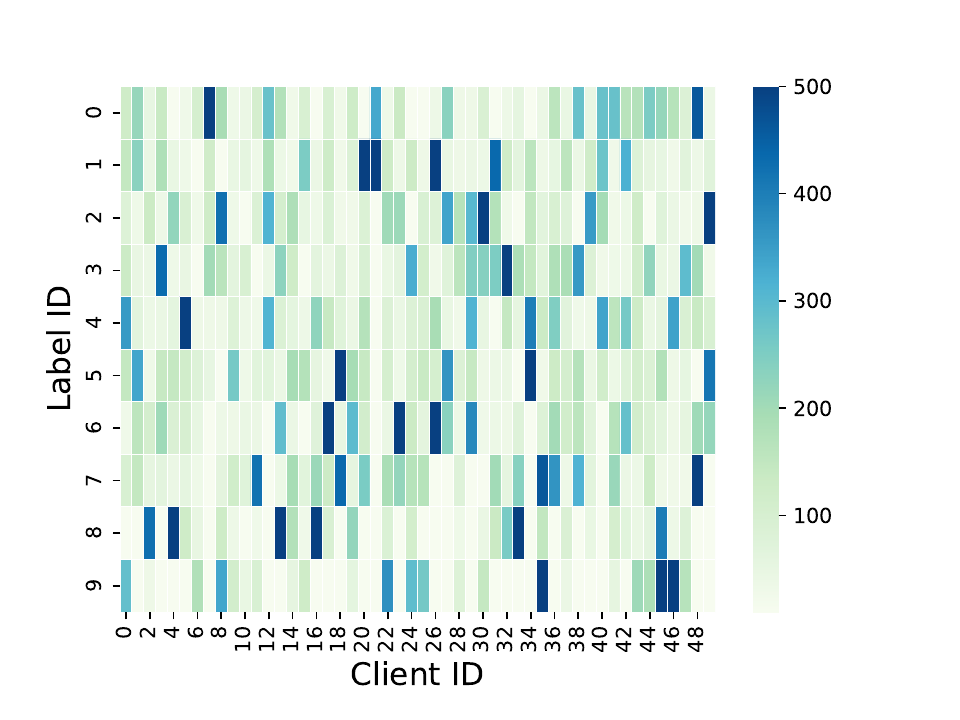}}
\subfigure[CIFAR10~$(\beta=0.2$).]{
\label{pic:c10b02}
\includegraphics[width=0.31\textwidth]{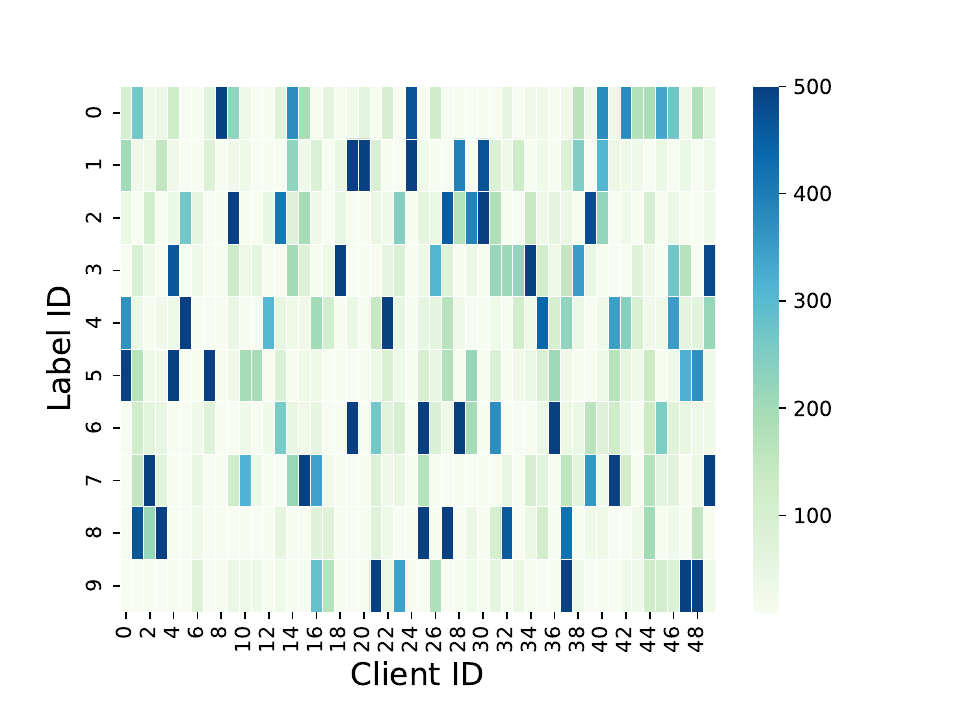}}
\subfigure[CIFAR100~$(\beta=10000$).]{
\label{pic:c100iid}
\includegraphics[width=0.31\textwidth]{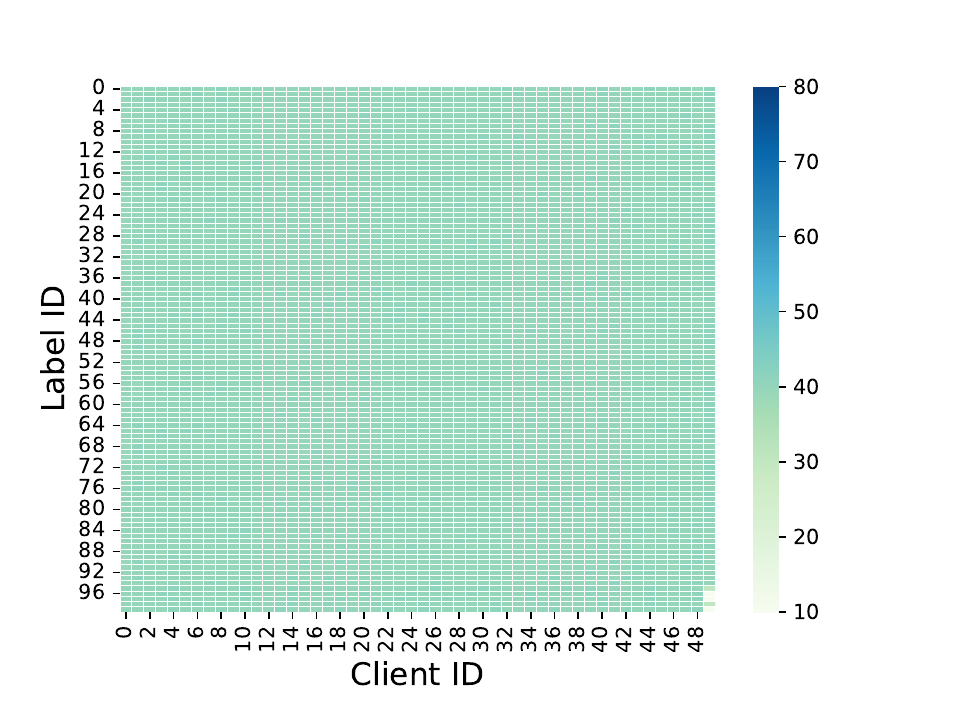}}
\subfigure[CIFAR100~$(\beta=0.5$).]{
\label{pic:c100b05}
\includegraphics[width=0.31\textwidth]{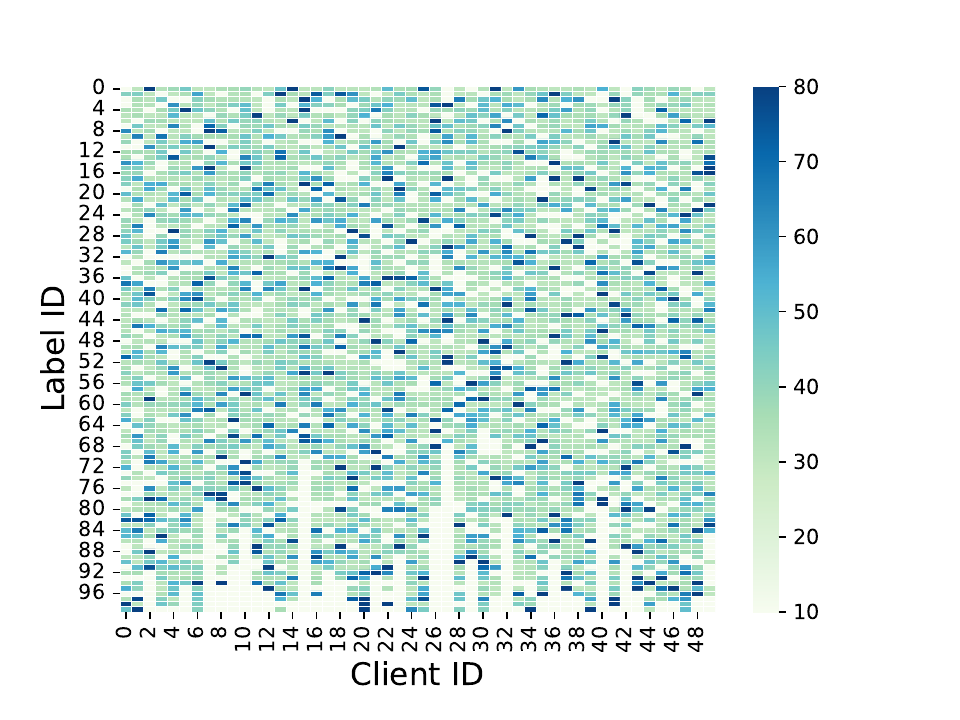}}
\subfigure[CIFAR100~$(\beta=0.2$).]{
\label{pic:c100b02}
\includegraphics[width=0.31\textwidth]{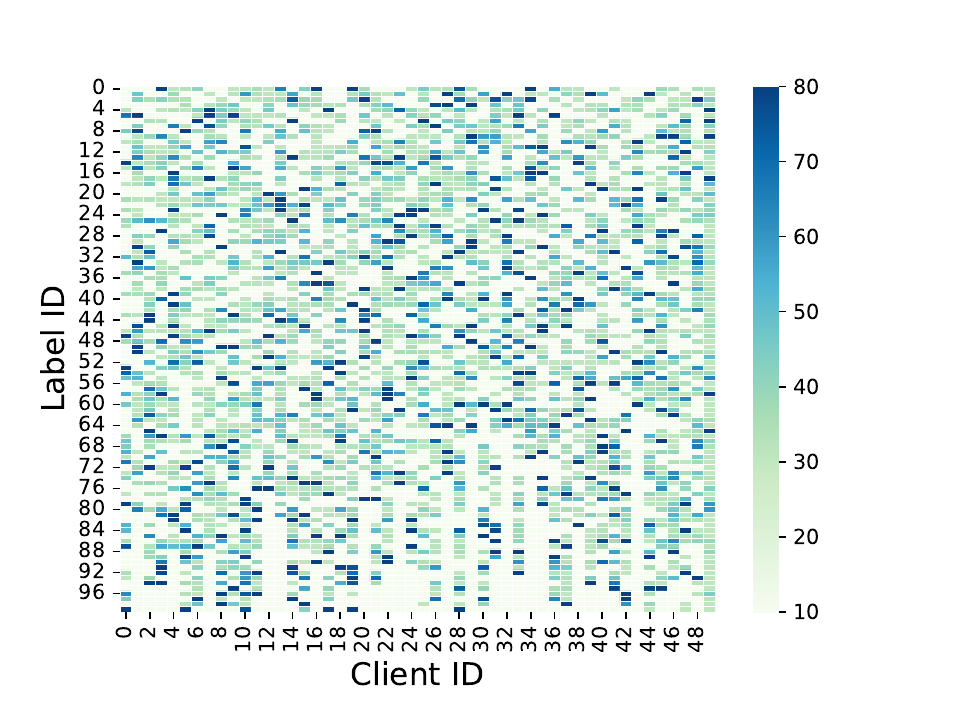}}
\caption{Heatmap of data distribution under Dirichlet distribution with different $\beta$. The empty color denotes there is no sample of a category in a client, indicating the PCDD situation.}
\label{fig:heatmap}
\end{figure}

\subsection{Partition Strategy}
Dirichlet distribution~(Dir~($\beta$)) is practical and commonly used in FL settings~\citep{moon,fedproto,fedskip,fedrs,fedrod,fedbabu,fedlc}. As in many recent works, we deploy $Dir~(\beta=10000)$ to simulate the almost IID situations and $Dir~(\beta=0.5, 0.2, 0.1)$ to simulate the different levels of Non-IID situations. As shown in Figure~\ref{fig:heatmap}, we provide the data distribution heatmap among clients of SVHN, CIFAR10, and CIFAR100 under Dirichlet distribution with different $\beta$. It can be seen that Dirichlet distribution can also generate practical PCDD data distribution. We also provide the data distribution heatmap of Fed-ISIC2019 shown in Figure~\ref{fig:isic_dataset}. In Fed-ISIC2019, there exists a true PCDD situation that needs to be solved. To verify full participating~(10,10) and straggler situations when client numbers are increasing, we split SVHN, CIFAR10, and CIFAR100 into 10 and 50 clients, and in each round, 10 clients are randomly selected into federated training. In FedISIC2019, there are 6 clients with 8 classes of samples and we split the 6 clients into 20 clients and in each round randomly select 10 clients to join the federated training.

\begin{wrapfigure}{r}{0.4\linewidth}\label{fig:isic_dataset}
\vspace{-60pt}
\begin{center}
    \includegraphics[width=0.4\textwidth]{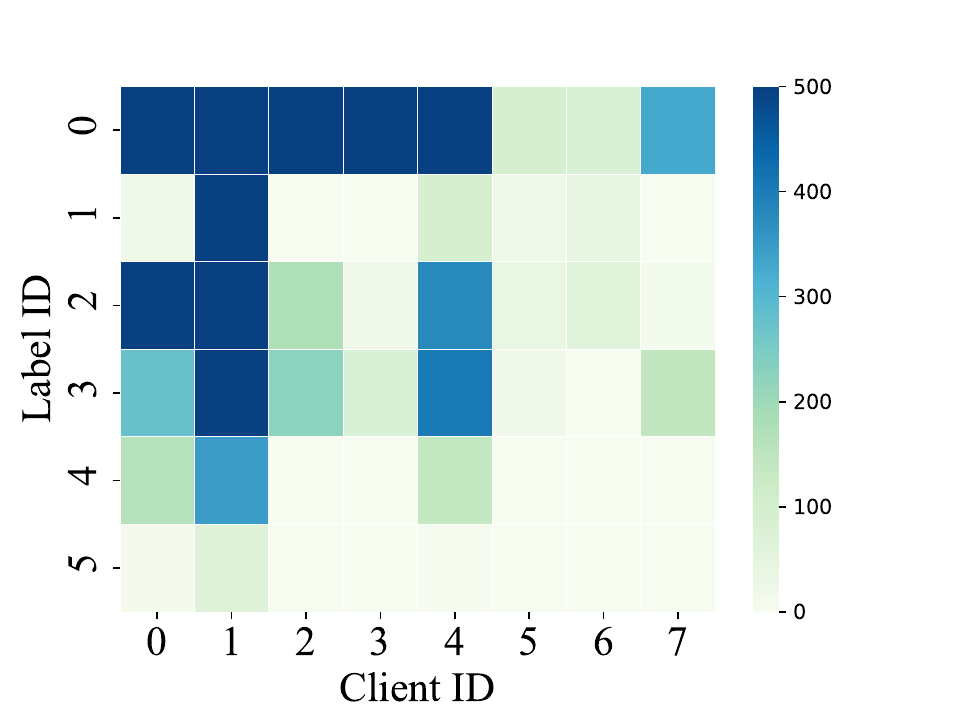}
\end{center}
\vspace{-10pt}
\caption{Data distribution of ISIC2019 dataset. The empty color denotes there is no sample of a category in a client, indicating the PCDD situation.}
\vspace{-20pt}
\end{wrapfigure}

\subsection{Training and Algorithm-Specific Params}
Since the aim of our work is not to acquire the best performance on the four datasets, we use stable and almost the best training parameters in FedAvg and applied on all other methods. We verify and use SGD as the optimizer with learning rate lr=0.01, weight decay 1e-4, and momentum 0.9. The batch size is set to 100 and the local epoch is 10. We have verified that such learning rates and local epochs are much more stable and almost the best. Note that, only training params are equal with FedAvg, but method-specific parameters like proximal terms in FedProx and contrastive loss in MOON are carefully tuned.

\begin{table}[!t]
\caption{Mean and std of averaged personal and generic performance on all settings on the four datasets of FedAvg, best baselines, and our FedGELA. we run three different seeds and calculate the mean and std for all methods.}
    \centering
\scalebox{0.85}{
    \setlength{\tabcolsep}{1mm}{
    \begin{tabular}{c|cccccccc}
        \toprule[2pt]
         Method 
 & \multicolumn{2}{c}{SVHN}& \multicolumn{2}{c}{CIFAR10} & \multicolumn{2}{c}{CIFAR100}& \multicolumn{2}{c}{Fed-ISIC2019} \\
\midrule 
\# Metric& PA & GA & PA & GA& PA & GA & PA & GA\\\midrule
         FedAvg &$94.09_{\pm 0.15}$ & $87.39_{\pm 0.20}$& $77.51_{\pm 0.29}$ & $62.04_{\pm 0.26}$& $62.78_{\pm 0.43}$ & $58.54_{\pm 0.39}$ & $77.27_{\pm 0.19}$ & $73.59_{\pm 0.17}$\\ \midrule
        Best Baseline &$95.18_{\pm 0.19}$ & $88.85_{\pm 0.21}$& $80.61_{\pm 0.33}$ & $66.07_{\pm 0.24}$& $64.28_{\pm 0.46}$ & $60.31_{\pm 0.32}$ & $78.91_{\pm 0.13}$ & $74.98_{\pm 0.21}$\\ \midrule
         FedGELA (ours) & $96.12_{\pm 0.13}$& $90.61_{\pm 0.19}$& $82.28_{\pm 0.16}$ & $67.34_{\pm 0.15}$& $70.28_{\pm 0.36}$ &$62.43_{\pm 0.28}$ & $79.29_{\pm 0.19}$ & $75.85_{\pm 0.16}$\\
        \bottomrule[2pt]
    \end{tabular}
    }
    }
    \label{tab:std}
\end{table}

\subsection{Mean and STD}
In all our experiments, we run three different seeds and calculate the mean and std for all methods. In Table~\ref{tab:real} and Figure~\ref{fig:ablation}, we report the both mean and std of results while for other experiments, due to the limited space, we only report mean results. Therefore in this part, we additionally provide the mean with std of averaged performance on all partitions of FedAvg, best baselines, and our FedGELA in Table~\ref{tab:std}.

\section{More Information of FedGELA}\label{ap:D}

\begin{wrapfigure}{r}{0.5\linewidth}
\vspace{-35pt}
\begin{center}
   \includegraphics[width=0.48\textwidth]{ 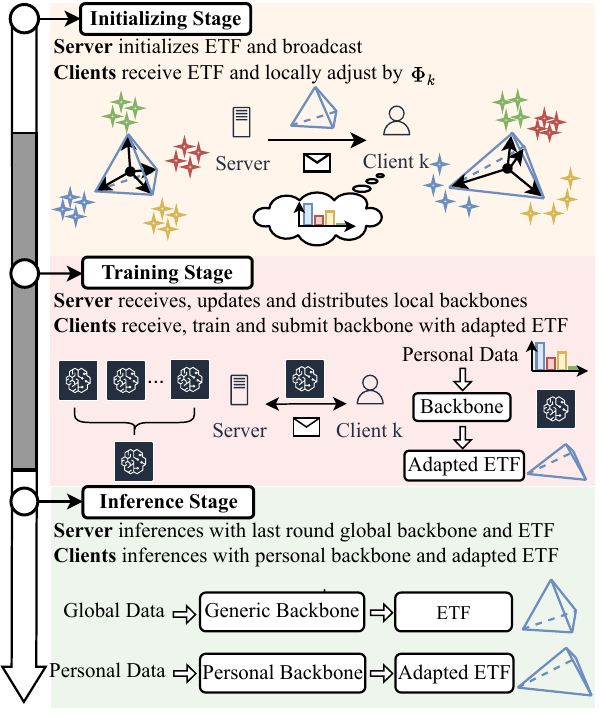} 
\end{center}
\vspace{-10pt}
\caption{Total framework of FedGELA}\label{fig:workflow}
\vspace{-20pt}
\end{wrapfigure}

\subsection{Work Flow of FedGELA}   
Except for the algorithm of our FedGELA shown in Algorithm~\ref{alg:fedgela}, we also provide the workflow of FedGELA. As shown in Figure~\ref{fig:workflow}, the FedGELA can be divided into three stages, namely the initializing stage, the training stage, and the inference stage. In the initializing stage, the server randomly generates a simplex ETF as the classifier and sends it to all clients. In the meanwhile, clients adjust it based on the local distribution as Eq~\eqref{eq:adjust}. At the training stage, local clients receive global backbones and train with adapted ETF in parallel. After $E$ epochs, all clients submit personal backbones to the server. In the server, personal backbones are received and aggregated to a generic backbone, which is distributed to all clients in the next round. At the inference stage, on the client side, we obtain a generic backbone with standard ETF to handle the global data while on the client side, a personal backbone with adapted ETF to handle the personal data.

\begin{table}[!t]
\centering
\small

\caption{\rm{Communication efficiency of FedGELA compared with FedAvg and a range of state-of-the-art methods on CIFAR10 under different settings. Communication efficiency is defined as the communication rounds that need to reach the best global accuracy of FedAvg within curtain rounds. We use '-' to denote the situation that the algorithm can not reach the best accuracy of FedAvg during limited communication rounds.}}
    \setlength{\tabcolsep}{1pt}
    \scalebox{0.9}{
    \begin{tabular}{c|ccccccccccccc}
        \toprule[2pt]
        \multirow{1}{*}{Method}& 
        \multicolumn{2}{c}{IID, Full}&\multicolumn{2}{c}{$\beta=0.5$, Full}& 
        \multicolumn{2}{c}{$\beta=0.1$, Full}& 
        \multicolumn{2}{c}{IID, Partial}& 
        \multicolumn{2}{c}{$\beta=0.5$, Partial}& 
        \multicolumn{2}{c}{$\beta=0.2$, Partial}
        \\
        \cmidrule(r){2-3}\cmidrule(r){4-5}\cmidrule(r){6-7}\cmidrule(r){8-9}\cmidrule(r){10-11}\cmidrule(r){12-13}
        \footnotesize{(CIFAR10)}&  Commu. & Speedup&  Commu. & Speedup &Commu. & Speedup&Commu. & Speedup&Commu. & Speedup&Commu. & Speedup\\
        \midrule
        FedAvg&$100$ &$1\times$&$100$ &$1\times$ &$100$  &$1\times$&$200$ &$1\times$&$200$ &$1\times$ &$200$  &$1\times$\\
        FedProx&$42$ &$2.38\times$&$86$ &$1.16\times$ &$83$  &$1.20\times$&$105$ &$1.90\times$&$139$ &$1.44\times$ &$152$  &$1.32\times$\\
        MOON&$42$ &$2.38\times$&$53$ &$1.89\times$ &$79$  &$1.27\times$&$98$ &$2.04\times$&$136$ &$1.47\times$ &$145$  &$1.38\times$\\
        FedRS&$39$ &$2.56\times$&$65$ &$1.54\times$ &$84$  &$1.19\times$&$103$ &$1.94\times$&$136$ &$1.47\times$ &$126$  &$1.59\times$\\
        FedLC&$47$ &$2.13\times$&$45$ &$2.22\times$ &$82$  &$1.22\times$&$113$ &$1.77\times$&$118$ &$1.69\times$ &$121$  &$1.65\times$\\
        FedRep&$-$ &$-$&$-$ &$-$ &$-$  &$-$&$-$ &$-$&$-$ &$-$ &$-$  &$-$\\
        FedProto&$-$ &$-$&$-$ &$-$ &$-$  &$-$&$-$ &$-$&$-$ &$-$ &$-$  &$-$\\
        FedBABU&$63$ &$1.59\times$&$60$ &$1.67\times$ &$-$  &$-$&$-$ &$-$&$-$ &$-$ &$-$  &$-$\\
        FedRod&$55$ &$1.82\times$&$51$ &$1.96\times$ &$77$  &$1.30\times$&$80$ &$2.50\times$&$112$ &$1.79\times$ &$142$  &$1.41\times$\\
        \midrule
\rowcolor{lightgray!40}\textbf{FedGELA}&$\textbf{42}$ &$\textbf{2.38}\times$&$\textbf{52}$  &$\textbf{1.92}\times$&$\textbf{67}$&$\textbf{1.49}\times$&$80$ &$2.50\times$&$114$ &$1.75\times$ &$119$  &$1.68\times$\\
        \bottomrule[2pt]
    \end{tabular}}
    \label{tab:commu}
\end{table}

\subsection{Communication Efficiency}
Communication cost is a much-watched concern in federated learning. Since our algorithm does not introduce additional communication overhead, we compare the number of communication rounds required for all algorithms to reach FedAvg's best accuracy. Since PA is hard to track and highly related to GA as shown in Theorem~\ref{thm:global_correct}, here we only compare the communication rounds that are required to reach the best GA of FedAvg. As shown in Table~\ref{tab:commu}, we provide communication rounds and speedup to FedAvg compared with a range of the state of the art methods. It can see that P-FL algorithms are hard to reach the global accuracy of FedAvg since they limit the generic ability of the local model while our FedGELA achieves almost the best communication efficiency in all settings.
\begin{figure}[!t]
\centering
\includegraphics[width=0.8\textwidth]{ 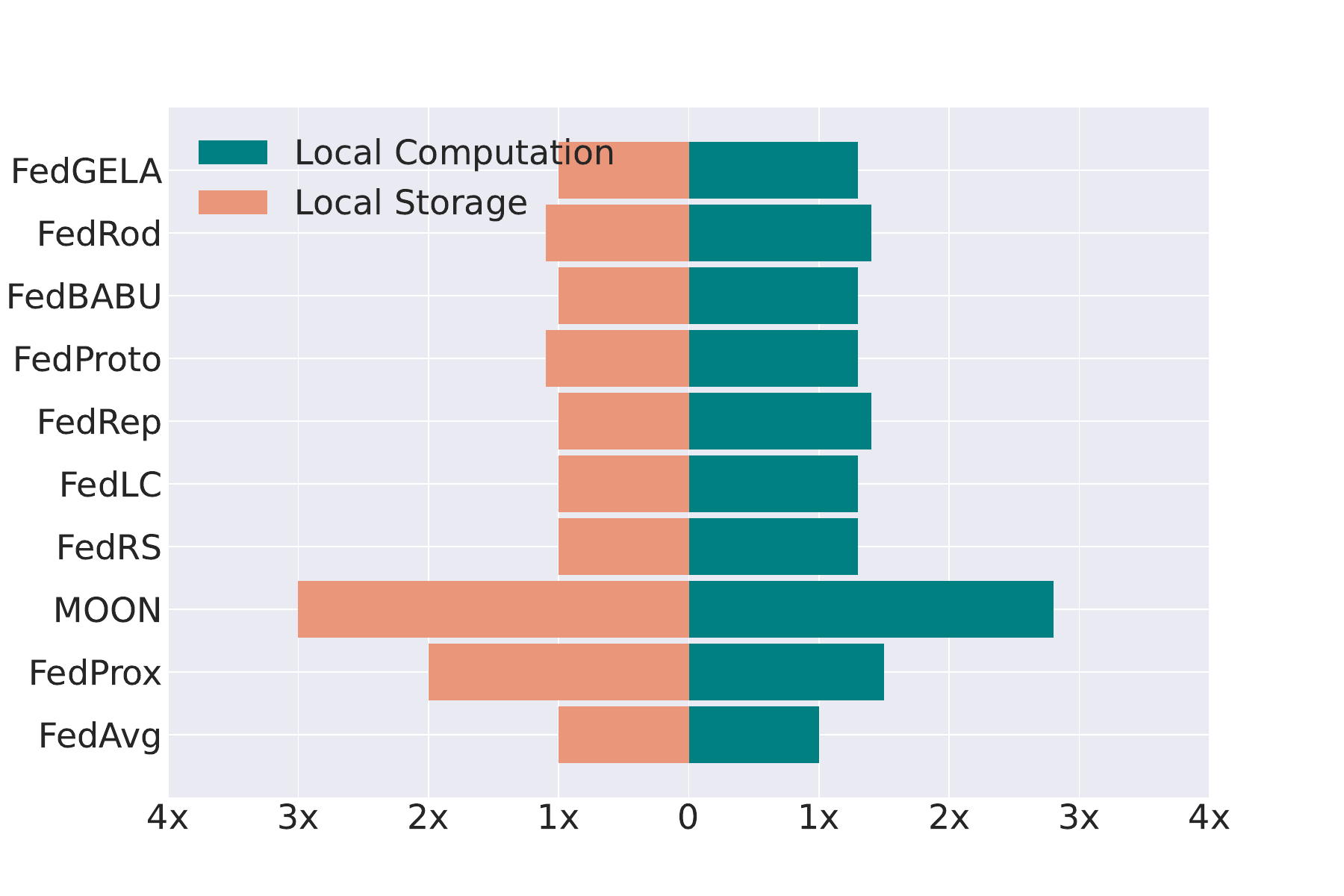}
\caption{Local computation and storage of FedGELA compared with FedAvg and a range of the state-of-the-art methods.}\label{fig:burden}
\end{figure}
\subsection{Local Burden: Storing and Computation}

In real-world federated applications, local clients might be mobile phones or other small devices. Thus, the burden of local training can be the bottleneck for clients. In Figure~\ref{fig:burden}, we compute the number of parameters that need to be saved in local clients and the average local computation time in each round. As can be seen, MOON requires triple storing memory than FedAvg, while FedGELA keeps the same level as FedAvg. In terms of local computation time, FedGELA introduces negligible computing time to local training, indicating the efficiency of our method on the local burden concerns.

\subsection{Other Backbones}

\begin{table}[!t]
\caption{Averaged performance of FedGELA compared with FedAvg and a range of state-of-the-art methods on SVHN under all settings with different backbones, namely Simple-CNN, ResNet18, and ResNet50. }
    \centering
\scalebox{1.0}{
    \setlength{\tabcolsep}{4.5mm}{
    \begin{tabular}{c|cccccccc}
        \toprule[2pt]
         Method 
 & \multicolumn{2}{c}{Simple-CNN}& \multicolumn{2}{c}{Resnet18} & \multicolumn{2}{c}{Resnet50} \\
\hline $\#$Metric& PA & GA & PA & GA& PA & GA\\\midrule
         FedAvg & 93.22 & 86.99& 94.09 & 87.39& 94.28 & 88.21\\ \midrule
        Best Baseline &94.51 & 88.36& 95.18 & 88.85& 95.52 & 89.05\\ \midrule
         FedGELA~(ours) & 96.07& 90.03& 96.12 &90.61& 96.88 &91.23\\
        \bottomrule[2pt]
    \end{tabular}
    }}
    \label{tab:backbone}
\end{table}
For SVHN, CIFAR10, and CIFAR100, we conduct all experiments based on ResNet18~(modified by 32x32 input)~\citep{moon,fedskip,fedproto}. Here we adopt more backbones including Simple-CNN and ResNet50~\citep{moon,noniid2,fedskip} to verify the robustness of our method on different model structures. The Simple-CNN backbone has two 5x5 convolution layers followed by 2x2 max pooling (the first with 6 channels and the second with 16 channels) and two fully connected layers with ReLU activation (the first with 120 units and the second with 84 units. As shown in Table~\ref{tab:backbone}, we provide results of FedAvg, best baselines, and our FedGELA on  SVHN. As can be seen, with the model capacity increasing from Simple-CNN to ResNet50, the performance is slightly higher. Besides, no matter whether adopting any of the three backbones, our method FedGELA outperforms FedAvg and the best baselines.

\begin{table}[!t]
\caption{Performance of FedGELA compared with FedAvg and a range of state-of-the-art methods on two additional real-world challenges, namely FEMNIST and SHAKESPEARE. }
    \centering
\scalebox{1.0}{
    \setlength{\tabcolsep}{4.5mm}{
    \begin{tabular}{c|cccccccc}
        \toprule[2pt]
         Dataset 
 & \multicolumn{2}{c}{FedAvg}& \multicolumn{2}{c}{Best Baseline} & \multicolumn{2}{c}{FedGELA~(ours)} \\
\hline $\#$Metric& PA & GA & PA & GA& PA & GA\\\midrule
         FEMNIST & 67.02 & 59.54& 69.54& 61.22& 71.84 & 62.08\\ \midrule
        SHAKESPEARE &49.56 & 44.53& 51.66 & 47.29& 53.63 & 48.39\\ 
        \bottomrule[2pt]
    \end{tabular}
    }}
    \label{tab:morereal}
\end{table}
\subsection{Performance on more real-world datasets}\label{sec:morereal}
Except for Fed-ISIC2019 used in the main paper, we here additionally test FedGELA on two real-world federated datasets FEMNIST~\cite{femnist} and SHAKESPEARE~\cite{shakespeare} (two datasets also satisfy the PCDD setting) compared with all related approaches in the paper. FEMNIST includes complex 62-class handwriting images from 3500 clients and SHAKESPEARE is a next-word prediction task with 1129 clients. Most of the clients only have a subset of class samples. With help of LEAF~\cite{leaf}, we choose 50 clients of each dataset into federation and in each round we randomly select 10 clients into training. The total round is set to 20 and the model structure is a simple CNN for FEMNIST and a 2-layer LSTM for SHAKESPEARE, respectively. It can be seen in the Table~\ref{tab:morereal}, our method achieves best results of both personal and generic performance on the two real-world challenges.

\subsection{Limitations}
The design of our method is focused on constraining the classifier in the global server and in the local client by fixing the global classifier as a simplex ETF and locally adapting it to suit personal distribution, which means our method is proposed for federated classification tasks. But the spirit that treating each class or instance equally in global tasks while adapting to personal tasks the local can be applied to more than federated classification tasks. Fixing the classifier as a simple ETF might reduce the norm of stochastic gradients $G$ and benefit global convergence as introduced in Theorem~\ref{thm:global} and Theorem~\ref{thm:global_correct}. However, the limitation is that adapting the local classifier from ETF~($\W^L$) to adapted ETF~($\PHI_k\W^L$) will introduce additional cost $\|\PHI_k \W^L -\W^L\|$ as illustrated in the Theorem~\ref{thm:global},~\ref{thm:local},~\ref{thm:global_correct},~\ref{thm:local_correct}. To verify the influence of the cost and the effectiveness of our method on utilizing waste spaces and mitigating angle collapse of local classifier vectors, we conduct a range of experiments and record performance on both personal and generic and the corresponding angles.
\end{document}